\newcommand\fable{\textit{FABLE}}
\newcommand\fableb{\textit{FABLE }}
\newtheorem{theorem}{Theorem}
\newtheorem{corollary}{Corollary}
\newtheorem{remark}{Remark}
\newtheorem*{theorem*}{Theorem}
\definecolor{deepred}{RGB}{225, 0, 0}
\definecolor{deepblue}{RGB}{0, 0, 225}
\begin{document}

\title{FABLE: A Localized, Targeted Adversarial Attack on Weather Forecasting Models}

\author{Yue Deng\orcidlink{0000-0002-8263-1871},
Asadullah Hill Galib\orcidlink{0000-0002-0686-4876},
Xin Lan\orcidlink{0000-0002-0607-2270},
Jack Gunn\orcidlink{0009-0007-3035-2444},
Pang-Ning Tan\orcidlink{0000-0003-3205-0339},
and Lifeng Luo\orcidlink{0000-0002-2829-7104}
\thanks{This research is supported by the U.S. National Science Foundation under grant IIS-2453100. Any use of trade, firm, or product names is for descriptive purposes only and does not imply endorsement by the U.S. Government. \textit{(Corresponding author: Pang-Ning Tan.)}}
\thanks{Yue Deng, Jack Gunn, and Pang-Ning Tan are with the Department of Computer Science \& Engineering, Michigan State University, East Lansing, MI 48824 USA (e-mail: \url{dengyue1@msu.edu}; \url{gunnjack@msu.edu}; \url{ptan@msu.edu}).}
\thanks{Asadullah Hill Galib is with the 
TSMC Technology Inc, San Jose, CA 95134 USA (e-mail: \url{asadgalib19@gmail.com}).}
\thanks{Xin Lan and Lifeng Luo are with the Department of Geography, Environment, and Spatial Sciences, Michigan State University, East Lansing, MI 48824 USA (e-mail: \url{lanxin1@msu.edu}; \url{lluo@msu.edu}).}
\thanks{The data and code are available at \url{https://github.com/yue2023cs/FABLE}.}
}



\maketitle

\begin{abstract}
Deep learning-based weather forecasting (DLWF) models have recently demonstrated significant performance gains over gold-standard physics-based simulation tools. However, these models are potentially vulnerable to adversarial attacks, which raises concerns about their trustworthiness. In this paper, we investigate the feasibility and challenges of applying existing adversarial attack methods to DLWF models and propose a novel framework called \textit{FABLE} (\underline{F}orecast \underline{A}lteration \underline{B}y \underline{L}ocalized targeted adv\underline{E}rsarial attack) to address them. FABLE performs a 3D discrete wavelet decomposition to disentangle the spatial and temporal components of the data. By regulating the magnitude of adversarial perturbations across different components, FABLE produces adversarial inputs that remain closely aligned with the original inputs while steering the DLWF models toward generating the targeted forecast outcomes. Experimental results on real-world weather datasets demonstrate the effectiveness of FABLE over baseline methods across various metrics. 
\end{abstract}

\begin{IEEEkeywords}
Spatiotemporal Learning, Weather Forecasting, Adversarial Robustness, Wavelet Transform.
\end{IEEEkeywords}

\section{Introduction}
Weather forecasting plays a crucial role in a wide range of human activities, influencing decision-making in agriculture, energy, insurance, and other sectors. With the growing impact of extreme weather events, 
more industries recognize the value of precise and timely weather information to mitigate risks and capitalize on opportunities. In recent years, deep learning-based weather forecasting (DLWF) models~\cite{gao2022earthformer,lin2022conditional,lam2023learning,bi2023accurate} have achieved significant improvements in prediction accuracy compared to traditional physics-based approaches. However, these models are susceptible to adversarial attacks \cite{chakraborty2018adversarial}, in which malicious actors can manipulate the forecasts by introducing subtle alterations to the input data, leading to incorrect predictions. For weather forecasting, such attacks could result in misguided management decisions, inefficient resource allocation, and inadequate preparedness for catastrophic weather events.

\begin{figure}[t!]
\centering
    \includegraphics[scale=0.36]{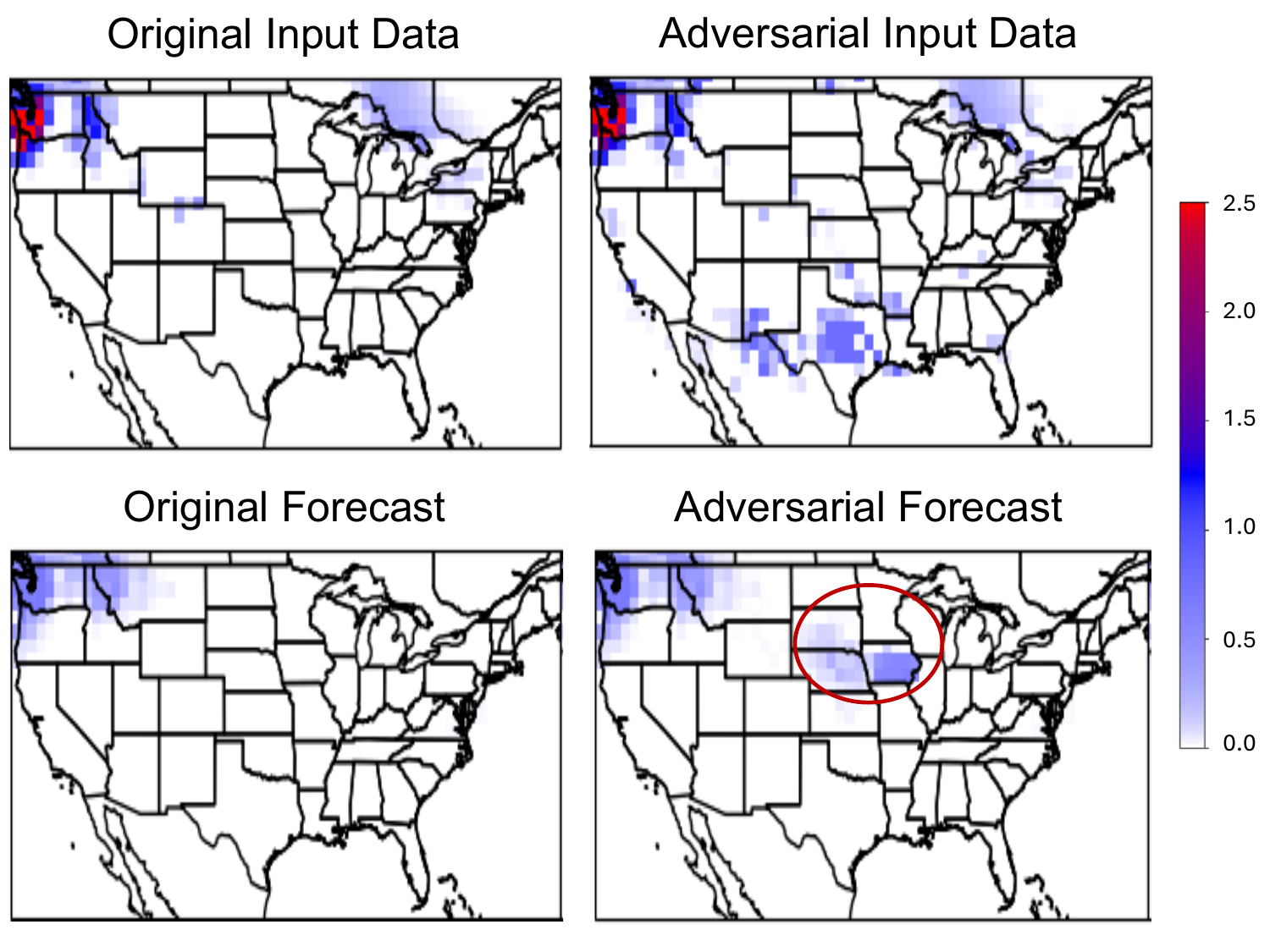}
    \caption{The top two panels show the original input data and its altered input (i.e., adversarial sample) produced by the {TAAOWPF} adversarial attack method~\cite{heinrich2024targeted} for the {NLDAS} precipitation dataset~\cite{mitchell2004multi}. The bottom two panels show their corresponding original and adversarial forecasts generated by the {CLCRN} weather forecasting model~\cite{lin2022conditional}. The red circle indicates the targeted region for forecast manipulation.}
    \label{fig:FABLE_case_study}
\end{figure}

This paper investigates the feasibility and challenges of designing \emph{localized, targeted adversarial attacks} on DLWF models. A localized targeted attack
involves malicious manipulation of the input data so that the model's forecasts for a pre-defined set of locations closely match the values desired by the attacker. For instance, consider an attacker who wants to influence agricultural markets by creating a false forecast of heavy rainfall in a major farming region. They could manipulate the input data of the DLWF model, causing it to predict significant rainfall when none is actually expected. Figure~\ref{fig:FABLE_case_study} shows an example of a tampered precipitation forecast for Iowa, generated by making alterations to the original input using the {TAAOWPF} adversarial attack method~\cite{heinrich2024targeted}. This false forecast could lead farmers to delay planting, alter irrigation schedules, or take unnecessary corrective actions based on incorrect predictions. The resulting market disruptions could benefit the attacker financially, especially if they have investments that would gain from such changes.  

For an attack to be successful, the adversarial inputs for DLWF models should satisfy the following criteria: (1) \textit{faithfulness}, ensuring that the adversarial inputs yield the intended forecasts at the targeted locations, with minimal changes to non-targeted locations, (2) \textit{closeness}, making the adversarial inputs closely resemble the original inputs to escape detection, and (3) \textit{geospatio-temporal realisticness}, preserving the dependencies within the original inputs to enhance stealthiness. However, as will be shown in this paper, managing the trade-offs among these criteria is challenging for existing adversarial attack methods \cite{goodfellow2014explaining,heinrich2024targeted,madry2017towards}. For instance, although the {TAAOWPF}~\cite{heinrich2024targeted} method successfully alters the forecast at the targeted locations in Figure~\ref{fig:FABLE_case_study}, the adversarial input it generates has discernible perturbations in non-targeted regions, 
making the attack more easily detectable. The added perturbations also modify the inherent spatial autocorrelations of the data.

To overcome this challenge, we propose a novel framework called \fableb (\underline{F}orecast \underline{A}lteration \underline{B}y \underline{L}ocalized targeted adv\underline{E}rsarial attack). Unlike conventional methods that directly perturb the raw input, FABLE introduces controlled perturbations of varying magnitudes across distinct components of the geospatio–temporal data, extracted by 3D discrete wavelet transform. We show that, by applying larger magnitudes of perturbations on high-frequency components, FABLE was able to preserve closeness and geospatio-temporal realisticness while achieving comparable faithfulness as other existing methods. 

\section{Related works}
\label{sec:Related works}

Weather forecasting has long been an active area of research due to its critical impact on our environment and society. Traditional models ~\cite{cote1998operational,skamarock2008description,hersbach2020era5} employ numerical simulations based on physical equations to predict future weather conditions. However, accurately modeling the chaotic nature of Earth's meteorological system remains a challenge. Towards this end, deep learning-based weather forecasting (DLWF) models have emerged as a powerful alternative for their ability to learn intricate geospatial–temporal patterns in data. These models, utilizing CNN~\cite{bi2023accurate,gao2022earthformer}, GCN~\cite{lin2022conditional,lam2023learning,price2025probabilistic},  Vision Transformer \cite{bojesomo2021spatiotemporal2}, Swin Transformer \cite{bojesomo2021spatiotemporal,bojesomo2023novel} and other neural architectures, have improved both single-step~\cite{galib2022deepextrema} and multi-step~\cite{lin2022conditional,lam2023learning} forecasts using univariate~\cite{lin2022conditional} and multivariate~\cite{lam2023learning} weather data.

As deep learning models become more prevalent, the risk of adversarial attacks on these systems has grown. Conventional adversarial attack techniques, such as FGSM~\cite{goodfellow2014explaining}, PGD~\cite{madry2017towards}, and MIM~\cite{dong2018boosting}, work by subtly perturbing the input data to mislead models into making incorrect predictions. Specifically, given a forecast model $g$, the adversarial attack would modify an \emph{input sample} $\mathbf{X}$ by adding a perturbation $\delta_{X}$, producing the \emph{adversarial sample} $\mathbf{X} + \mathbf{\delta_{X}}$. The goal of the attack is to ensure that the \emph{adversarial forecast} $g(\mathbf{X} + \mathbf{\delta_{X}})$ differs substantially from the \emph{original forecast} $g(\mathbf{X})$.

Adversarial attacks can be categorized by their objectives as \emph{untargeted}, \emph{semi-targeted}, or \emph{targeted} attacks~\cite{heinrich2024targeted}. Untargeted attacks aim to generate predictions that deviate substantially from the original forecast, i.e., $ \delta_\mathbf{X} = \arg \max_{\delta} \mathcal{L}\left[g(\mathbf{X}+\delta), g(\mathbf{X})\right]$, where $\mathcal{L}$ is the loss function. Semi-targeted attacks steer predictions into attacker-specified boundaries, while targeted attacks steer predictions toward a specific \emph{adversarial target} $\mathbf{\hat{Y}}^{'}$, i.e., $\delta_\mathbf{X} = \arg \min_{\delta} \mathcal{L}\left[g(\mathbf{X}+\delta), \mathbf{\hat{Y}}^{'}\right]$. Adversarial attacks can also be categorized based on attacker's level of access to the model~\cite{liu2022practical}---\emph{black-box} (access only to inputs and outputs), \emph{grey-box} (partial knowledge of architecture or training techniques), or \emph{white-box} (full knowledge of architecture, pre-trained parameters, and data).

In computer vision, adversarial attack methods would subtly alter input images or videos to produce misclassification or generation errors~\cite{chen2023imperceptible,wei2023efficient,wu2023imperceptible}. For spatial-temporal data, adversarial attacks have been studied in renewable energy forecasting and traffic prediction. Previous studies of energy forecasting~\cite{heinrich2024targeted,jiao2023gradient} primarily applied standard attack methods such as FGSM~\cite{goodfellow2014explaining} or PGD~\cite{madry2017towards} to perturb temporal weather inputs, while ignoring their spatial information~\cite{heinrich2024targeted,ruan2023vulnerability,jiao2023gradient}. For traffic forecasting, existing methods mainly focus on developing untargeted attacks that disrupt the overall traffic flow predictions, aiming to create congestion~\cite{zhu2023adversarial,liu2022practical}. 

Adversarial attacks on DLWF models have become an emerging research topic in recent years. Chichifoi et al. \cite{chichifoi2025evaluating} examined the vulnerability of temperature forecasting models under data poisoning attacks in a federated learning setting. Their findings revealed that corrupting even a small subset of clients can distort predictions across large spatial regions. 
Imgrund et. al. \cite{imgrund2025adversarial} proposed an adversarial attack method designed for DLWF models built on autoregressive diffusion architectures, such as GenCast~\cite{price2025probabilistic}. 
Their approach perturbs the input for specific regions to intentionally increase forecast errors, generating adversarial inputs that can trigger spurious predictions of non-existent extreme events. However, the approach is applicable to diffusion-based DLWF models only. 
In addition, some existing approaches \cite{imgrund2025adversarial, arif2025forecasting} restrict the perturbations to certain patched regions, which may introduce discontinuities between the perturbed areas and the surrounding meteorological fields, thereby altering the spatio-temporal autocorrelations present in the original input data. Unlike prior works that primarily aim to induce arbitrarily large forecast errors, our study seeks to drive the forecasts at certain locations toward specific target values while preserving closeness and geospatio-temporal realisticness of the adversarial inputs.


\section{Preliminaries}
\label{Preliminaries}

Let $\mathcal{D}=(\mathbf{Z}_1\mathbf{Z}_2\cdots\mathbf{Z}_t\cdots)$ be a geospatio-temporal dataset, where $\mathbf{Z}_t\in \mathbb{R}^{r \times c}$ denote the weather observations for $r \times c$ gridded locations at time step $t$. We further denote $Z_{tij}$ as the value of the weather variable at time step $t$ at a given location, whose latitude and longitude are indexed by $(i,j)$, where $i \in \{1,2,\cdots,r\}$ and $j \in \{1,2,\cdots,c\}$. At each time step $t_0$, we construct a pair of tensors:  (1) $\mathbf{X}(t_0) \in \mathbb{R}^{(\alpha+1) \times r \times c}$, a predictor (look-back) window of length $\alpha+1$ containing weather observations for all locations at $t_0$ and its preceding $\alpha$ time steps; and (2) $\mathbf{Y}(t_0) \in \mathbb{R}^{\beta \times r \times c}$, a forecast window of length $\beta$ containing the observations for all locations over the subsequent $\beta$ time steps. Thus, $X_{\tau ij}(t_0) = Z_{t_0-\alpha-1+\tau,ij}$, where $\tau \in \{1, 2, \cdots, \alpha+1\}$, and $Y_{\tau ij}(t_0) = Z_{t_0+\tau,ij}$, where $\tau \in \{1, 2, \cdots, \beta\}$. For notational convenience, we abbreviate $\mathbf{X}(t_0)$ as $\mathbf{X}$ and $\mathbf{Y}(t_0)$ as $\mathbf{Y}$.

\subsection{Problem Statement}

Consider a DLWF model $g$ that produce a \textbf{\textit{forecast}} $\mathbf{\hat{Y}} \in \mathbb{R}^{\beta \times r \times c}$ given a \textbf{\textit{predictor}} $\mathbf{X} \in \mathbb{R}^{(\alpha+1)\times r \times c}$, i.e., $\mathbf{\hat{Y}}=g(\mathbf{X})$. Let
$S_{\mathbf{\hat{Y}}}=\left\{(\tau, i,j) \ | \ \tau \in \Gamma; (i,j) \in \Omega \right\}$ be the \textbf{\emph{localized attack domain}}, where $\Gamma = \{1,\cdots,\beta\}$ and $\Omega = \{1, \cdots, r\} \times \{1, \cdots, c\}$ are the temporal and spatial domains. A \textbf{\emph{localized adversarial target}} is defined as $\mathbf{\hat{Y}}'=\mathbf{\hat{Y}} + \delta_{\mathbf{\hat{Y}}}$, where 
$\delta_{\hat{Y}_{\tau ij}} \neq 0$ for $(\tau, i, j) \in S_{\mathbf{\hat{Y}}}$ and zero elsewhere. 

Let $\mathbf{X}'$ be the \textbf{\textit{adversarial predictor}} and $g(\mathbf{X}')$ be its corresponding \textbf{\textit{adversarial forecast}} generated by $g$. 

\vspace{0.3cm}
\noindent\fbox{\begin{minipage}{0.97\linewidth}
\textbf{Goal:} Given the original predictor $\mathbf{X}$ and a localized adversarial target $\hat{\mathbf{Y}}'$,
our goal is to generate an adversarial predictor $\mathbf{X'}$ that manipulates the model $g$'s adversarial forecast $g(\mathbf{X'})$ in a way that meets the following closeness, faithfulness, and geospatio-temporal realisticness criteria. 
\end{minipage}}
\vspace{0.3cm}


\begin{table}[t!]
\caption{Summary of key notations}
\label{tab:notations}
\centering
\begin{tabular}{p{2.4cm} p{5.7cm}}
\hline
Notation & Description \\
\hline
$g(\cdot)$ & Deep learning weather forecasting (DLWF) model \\
$\alpha + 1$ & Length of predictor window \\
$\beta$ & Length of forecast window \\
$\mathbf{X} \in \mathbb{R}^{(\alpha+1)\times r\times c}$ 
& Original predictor over $r \times c$ locations across $\alpha+1$ time steps\\
$\mathbf{X}' \in \mathbb{R}^{(\alpha+1)\times r\times c}$ 
& Adversarial predictor \\
$\mathbf{\delta_X} \in \mathbb{R}^{(\alpha+1)\times r \times c}$ 
& Perturbation added to $\mathbf{X}$, \textit{i.e.}, $\mathbf{\delta_X}=\mathbf{X}'-\mathbf{X}$ \\
$\epsilon$ 
& Perturbation bound on $\mathbf{\delta_X}$, \textit{i.e.}, $\|\mathbf{\delta_X}\|\le \epsilon$ \\
$\hat{\mathbf{Y}} \in \mathbb{R}^{\beta\times r\times c}$ 
& Original forecast, \textit{i.e.}, $\hat{\mathbf{Y}}=g(\mathbf{X})$, over $r \times c$ locations across $\beta$ time steps \\
$\hat{\mathbf{Y}}' \in \mathbb{R}^{\beta\times r\times c}$ 
& Adversarial target, \textit{i.e.}, $\mathbf{\hat{Y}}'=\mathbf{\hat{Y}}+\delta_{\mathbf{\hat{Y}}}$ \\
$g(\mathbf{X}') \in \mathbb{R}^{\beta \times r \times c}$ & Adversarial forecast\\
$\mathbf{C}^{\mathbf{f}}\in\mathbb{R}^{\frac{\alpha+1}{2}\times \frac{r}{2}\times \frac{c}{2}}$ 
& 3D wavelet coefficients of $\mathbf{X}$ for each sub-band $\mathbf{f}$ \\
$\mathbf{\omega}^{\mathbf{f}}$ 
& Penalty weights for perturbations added to $\mathbf{C}^{\mathbf{f}}$ \\
\hline
\end{tabular}
\end{table}

\noindent{\textbf{Closeness. }} This criterion requires the adversarial predictor $\mathbf{X'}$ to remain close to the original predictor $\mathbf{X}$ to escape detection:
\begin{equation}
\textrm{Closeness} \equiv \|\delta_{\mathbf{X}}\|_1=\|\mathbf{X'}-\mathbf{X}\|_1,
\label{eqn:closeness}
\end{equation}
where $\delta_{\mathbf{X}}$ denotes the perturbation applied to $\mathbf{X}$. Most adversarial attack methods further constrain the perturbation magnitude by bounding it with a factor $\epsilon$ to preserve closeness. 


\vspace{0.2cm}
\noindent{\textbf{Faithfulness. }} This criterion ensures that the adversarial forecast $g(\mathbf{X'})$ aligns with the adversarial target $\hat{\mathbf{Y}}'$. 
For targeted attacks, we further distinguish between \textbf{in-target} and \textbf{out-target faithfulness} criteria. 
Given a localized attack domain, $S_{\mathbf{\hat{Y}}}$, 
we quantify the in-target faithfulness of the adversarial forecast $g(\mathbf{X'})$ using the following metric: 
\begin{equation}
\textrm{In-AE} \equiv \|g(\mathbf{X'})_\mathbf{in} - \mathbf{\hat{Y}}'_{\mathbf{in}}\|_1, 
\label{eqn:in-ae}
\end{equation}
where $\mathbf{\hat{Y}'_{in}}=\{\hat{Y}'_{\tau ij}|(\tau, i,j)\in S_{\mathbf{\hat{Y}}}\}$ and $g(\mathbf{X'})_\mathbf{in}=\{g(\mathbf{X'})_{\tau ij}|(\tau,i,j) \in S_{\mathbf{\hat{Y}}}\}$ are the corresponding adversarial target and adversarial forecast for locations within the attack domain. Analogously, we measure the out-target faithfulness of the adversarial forecast $g(\mathbf{X'})$ as follows: 
\begin{equation}
\textrm{Out-AE} \equiv \|g(\mathbf{X'})_\mathbf{out} - \mathbf{\hat{Y}} _\mathbf{out}'\|_1, 
\label{eqn:out-ae}
\end{equation}
where $\mathbf{\hat{Y}}'_{\mathbf{out}}=\{\hat{Y}_{\tau ij}'|(\tau, i,j)\not\in S_{\mathbf{\hat{Y}}}\}$ and $g(\mathbf{X'})_\mathbf{out}=\{g(\mathbf{X'})_{\tau ij}|(\tau, i,j) \not\in S_{\mathbf{\hat{Y}}}$ are the adversarial target and adversarial forecast for locations outside the attack domain. 

\vspace{0.2cm}
\noindent{\textbf{Geospatio-Temporal Realisticness. }}
This criterion states that the adversarial predictor $\mathbf{X}'$ should preserve the spatial and temporal autocorrelations of the original predictor $\mathbf{X}$. We measure it using the following metrics:

\begingroup\small
\small
\begin{align}
    R_S(\mathbf{X}',\mathbf{X})&=\frac{1}{\alpha+1}\sum_{\tau=1}^{\alpha+1}|I(\mathbf{X}_{\tau}')-I(\mathbf{X}_{\tau})|; 
    \label{eq:spatial_smoothness} \\
    R_T(\mathbf{X}',\mathbf{X}) &= \frac{1}{r c}\sum_{i=1}^r \sum_{j=1}^{c} \frac{1}{\alpha+1} \sum_{l=1}^{\alpha+1} |\rho_l(\mathbf{X}_{ij}^{'})-\rho_l(\mathbf{X}_{ij})|,
    \label{eq:temporal_smoothness}
\end{align}
\normalsize
\endgroup
where 
\begingroup\small
$$
    I(X_{\tau})=\frac{r^2\times c^2}{W}\frac{\sum_{(i,j)(k,l)}w_{ij,kl}(X_{\tau ij}-\mathbf{\overline{X}}_{\tau})(X_{\tau kl}-\mathbf{\overline{X}}_{\tau})}{\sum_{(i,j)}(X_{\tau ij}-\mathbf{\overline{X}}_{\tau})^2}
$$ 
\endgroup

\noindent is the \textit{Moran's I} metric \cite{moran1950notes}, which quantifies the spatial autocorrelation within a spatial map $\mathbf{X}_{\tau} \in \mathbb{R}^{r \times c}$, and 

\begingroup\small
$$
    \rho_l(X_{ij}) = \frac{\sum_{\tau=1}^{\alpha+1-l} (X_{\tau ij} - \mathbf{\overline{X}}_{ij})(X_{\tau+l,ij} - \mathbf{\overline{X}}_{ij})}{\sum_{\tau=1}^T (X_{\tau ij} - \mathbf{\overline{X}}_{ij})^2}
$$
\endgroup
is the lag-$l$ temporal autocorrelation \cite{brockwell2002introduction} of the time series at location $(i,j)$. For Moran's I, $\mathbf{\overline{X}}_{\tau} \in \mathbb{R}$ is the average value of the spatial map $\mathbf{X}_{\tau}$ and $W=\sum_{(i,j),(k,l)}w_{ij,kl}$ is the sum over an $(r \times c) \times (r \times c)$ weight matrix representing the spatial dependencies between locations $(i,j)$ and $(k,l)$. 
Specifically, $\omega_{ij,kl} = 1/d_{ij,kl}$, where $d_{ij,kl}$ is the geodesic distance between the two locations. For temporal autocorrelation, $\mathbf{\overline{X}}_{ij} \in \mathbb{R}$ is the average value of time series $\mathbf{X}_{ij} \in \mathbb{R}^{\alpha+1}$ at location $(i,j)$. 

This study considers a white-box attack scenario, wherein the adversary has the full knowledge of $g$. This assumption is justified, as most popular DLWF models 
have publicly released their architectures and checkpoints, such as {GraphCast}~\cite{lam2023learning} and {GenCast}~\cite{price2025probabilistic} by Google DeepMind, {FourCastNet}~\cite{pathak2022fourcastnet} by NVIDIA, {Aurora}~\cite{bodnar2025foundation} by Microsoft Research, and {Prithvi-WxC}~\cite{schmude2024prithvi} by IBM and NASA, among others.
The white-box attacks also facilitate a principled assessment of worst-case robustness by enabling the derivation of theoretical lower bounds~\cite{bhagoji2019lower, carlini2019evaluating}, and have been employed in recent works~\cite{chen2025towards}. Extension to black-box scenario is deferred to future work. 

\subsection{Wavelet Transform}
\label{sec:wavelet}
\textit{Wavelet transform}~\cite{debnath2015wavelet} enables the multiresolution analysis of a signal by decomposing it into its underlying components at multiple scales. The multi-level decomposition of a one-dimensional signal $X_t$ can be expressed as a linear combination of its low-frequency component at initial scale $j_0$ and higher-frequency details at finer scales $j=\{j_0, j_0+1,\cdots\}$:

\begingroup\small
\begin{equation*}
X_t = \sum_k C_k^L(j_0) 2^{j_0/2} \phi(2^{j_0}t - k) + \sum_{j=j_0}^\infty \sum_k C_k^H(j) 2^{j/2} \psi(2^j t - k),
\end{equation*}
\endgroup
where $C_k^L(j_0)$ and $ C_k^H(j)$ are the \textit{approximation} and \textit{detail} coefficients, respectively. The index $k$ is a translation parameter that determines the spatial or temporal shifts of the wavelet basis functions to ensure that $\phi(2^{j_0}t - k)$ and $\psi(2^j t - k)$ are properly positioned to localize the representation of $X_t$ at different scales. 
The function $\phi(t)$, known as the \textit{scaling function}, is responsible for approximating the low-frequency components of $X_t$, while $\psi(t)$, known as the \textit{wavelet function}, captures its high-frequency variations and localized details. 

We consider the decomposition of a signal $X_t$, where $t \in \{1, 2, \cdots, T\}$ and $T$ is assumed to be a multiple of 2 (with appropriate padding if necessary). The wavelet expansion of $X_t$ at the initial scale $j_0 = 0$ with the fine scale $j = 0$ for a one-level decomposition is given by
\begin{align}
\label{eq:one_level_wavelet_decomposition}
X_t = \sum_k C_k^L \, \phi(t - k) + \sum_k C_k^H \, \psi(t - k),
\end{align}
where $C_k^L \equiv C_k^L(0)$ and $C_k^H \equiv C_k^H(0)$.


The \textit{Haar wavelet} is a commonly used wavelet basis with the following scaling and wavelet functions: 
\begingroup
$$
\phi(z) = 
\begin{cases} 
1, & 0 \leq z < 1,\\
0, & \text{otherwise.}
\end{cases},
\quad
\psi(z) = 
\begin{cases} 
1, & 0 \leq z < 0.5, \\
-1, & 0.5 \leq z < 1, \\
0, & \text{otherwise.}  
\end{cases}  
$$
\endgroup

By dilating the basis functions from their initial support range of $[0,1)$ to $[0,2)$, translating the interval to $[k-1,k+1)$, and normalizing by $\frac{1}{\sqrt{2}}$ to ensure unit $\ell_2$-norm, the resulting orthonormal basis functions are given by

\vspace{0.2cm}
\begin{tabular}{l}
$\phi(z)=\frac{1}{\sqrt{2}}\,\phi\!\left(\frac{z-(k-1)}{2}\right) =
\begin{cases}
\frac{1}{\sqrt{2}}, & k-1 \le z < k+1\\
0, & \text{otherwise}
\end{cases}$ \\
\\
$\psi(z)=\frac{1}{\sqrt{2}}\,\psi\!\left(\frac{z-(k-1)}{2}\right) =
\begin{cases}
\frac{1}{\sqrt{2}}, & k-1 \le z < k\\
-\frac{1}{\sqrt{2}}, & k \le z < k+1\\
0, & \text{otherwise}.
\end{cases}$
\end{tabular}
\vspace{0.2cm}

\noindent where $k \in \left\{1,2,\dots,\frac{T}{2}\right\}$ is a dyadic pair associated with the time steps $t = 2k-1$ or $t = 2k$. Substituting the two time steps into Equation~(\ref{eq:one_level_wavelet_decomposition}) and using the $\phi(z)$ and $\psi(z)$ functions defined above, we obtain:
\begingroup
$$
X_{2k-1} = \frac{C_k^L}{\sqrt{2}} + \frac{C_k^H}{\sqrt{2}}  
\ \ \ \ \textrm{and} \ \ \ \ 
X_{2k} = \frac{C_k^L}{\sqrt{2}} - \frac{C_k^H}{\sqrt{2}}.$$
\endgroup
The approximation coefficients $C_k^L$ and detail coefficients $C_k^H$ can be derived from the preceding equations as
\begingroup
$$
C^{L}_k = \frac{X_{2k-1} + X_{2k}}{\sqrt{2}}
\ \ \ \  \textrm{and} \ \ \ \
C^H_k = \frac{X_{2k-1} - X_{2k}}{\sqrt{2}}.
$$
\endgroup



\begin{figure}[t!]
	\centering
	\includegraphics[width=3.53in]{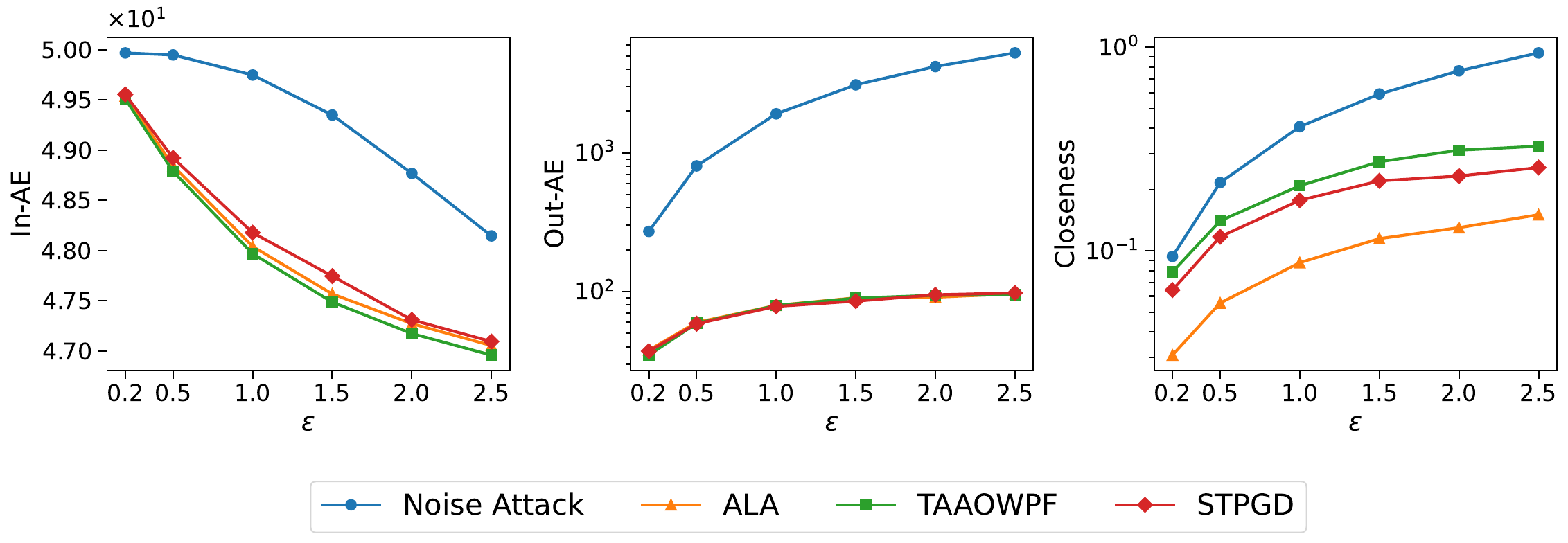}
	\caption{
    Performance comparison of existing adversarial attack methods on the {CLCRN} model~\cite{lin2022conditional} for {NLDAS} precipitation data~\cite{mitchell2004multi}. The x-axis denotes the perturbation bound $\epsilon$ and the y-axis denotes faithfulness ({in-AE} and {out-AE}) and closeness. 
    }
	\label{fig:baselines_results}
\end{figure}

\section{Attack Feasibility on DLWF Models}
\label{sec:Feasibility of Adversarial Attacks on Weather Forecasting Models}



We first investigate the feasibility of applying 3 existing attack methods---{ALA}~\cite{ruan2023vulnerability}, 
{TAAOWPF}~\cite{heinrich2024targeted}, and {STPGD}~\cite{liu2022practical}---on the {CLCRN} model~\cite{lin2022conditional}. The performance of these attacks are evaluated on the {NLDAS} daily precipitation and temperature datasets~\cite{mitchell2004multi} from 2020 to 2023. 
Each dataset contains weather observations for $1{,}320$ gridded locations across North America at $1^{\circ} \times 1^{\circ}$ resolution. {ALA}~\cite{ruan2023vulnerability} employs the Adam optimizer while {TAAOWPF}~\cite{heinrich2024targeted} uses the projected gradient descent (PGD) algorithm to learn the adversarial predictor to produce the adversarial target. {STPGD}~\cite{liu2022practical} extends the PGD-based method by selectively perturbing a subset of locations whose gradients have the most impact on the loss. We also consider the {Noise Attack}, which injects random Gaussian noise into the original predictor and selects the adversarial one whose forecast is closest to the target. 

\begin{figure}[t!]
    \centering
    \includegraphics[scale=0.35]{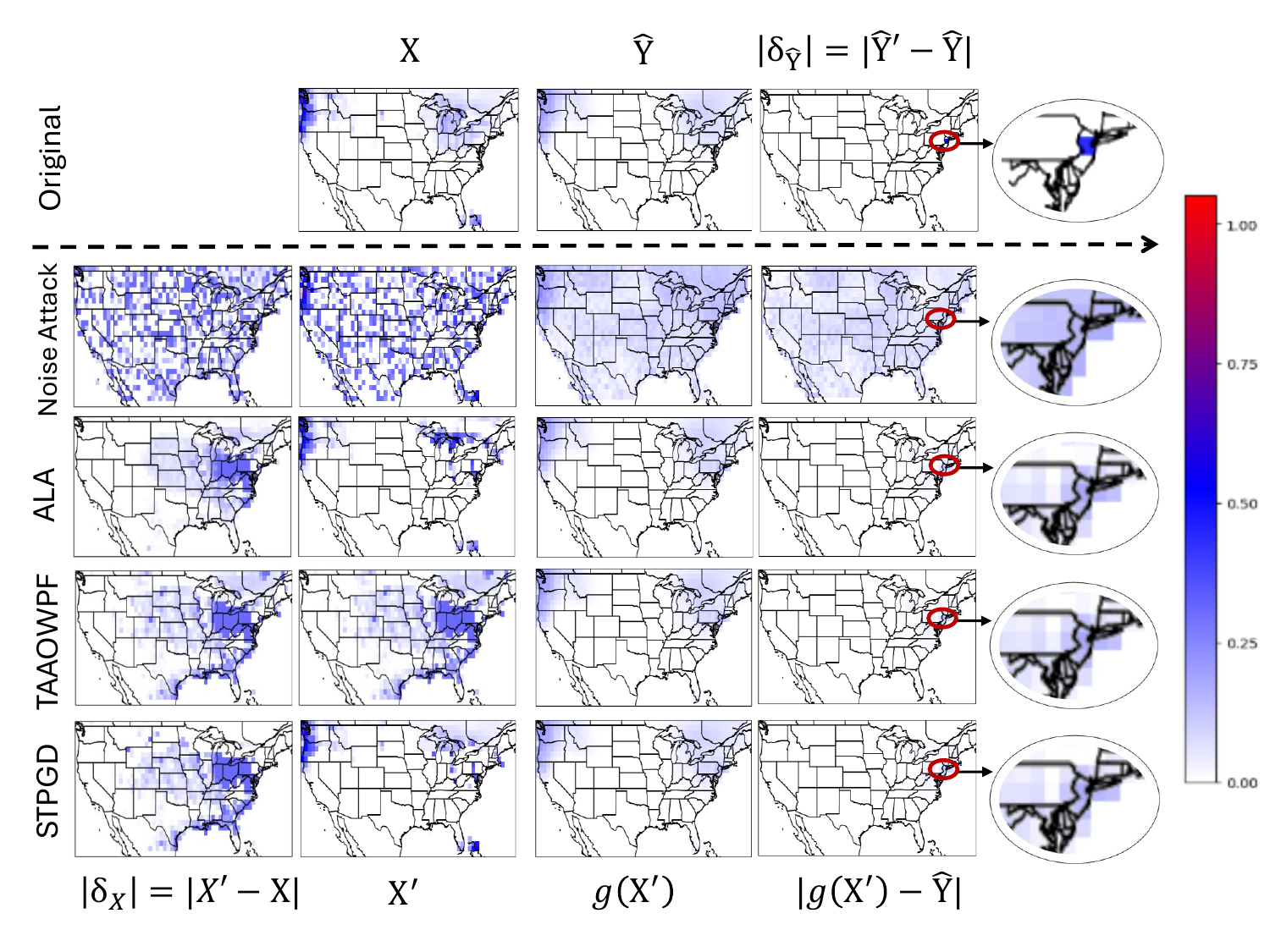}
    \caption{Examples of adversarial predictors $X'$ (rows 2-5, column 2) generated by different attack methods 
    when applied to the original predictor $X$ (row 1, column 2) of the {NLDAS} precipitation dataset~\cite{mitchell2004multi}. The first column shows the magnitude of perturbations (bounded by 2.5) introduced 
    by different attack methods. The third column shows the original forecast $\hat{Y}$ (row 1) and adversarial forecasts $g(X')$ (rows 2-5) produced by model $g$. The fourth column displays the absolute difference between original or adversarial.}
    \label{fig:baselines_apcpsfc}
\end{figure}

Figure~\ref{fig:baselines_results} compares the faithfulness and closeness of these methods when applied to the {NLDAS} precipitation dataset. 
Since the outputs of the adversarial attack methods depend on the perturbation bound $\epsilon$, 
we evaluate their performances at varying levels of $\epsilon$. 
As expected, the Noise Attack struggles to produce the adversarial target given its high in-AE and out-AE values, unlike existing learning-based methods such as {ALA}, {TAAOWPF}, and {STPGD}. Furthermore, increasing the perturbation bound $\epsilon$ leads to lower in-AE, suggesting that existing methods can generate adversarial predictor whose adversarial forecast is closer to the desired target by relaxing the perturbation bound. 
However, relaxing the perturbation bound leads to larger {out-AE}, i.e., undesired forecast errors outside the target region, and worse closeness, i.e., larger discrepancy between the original and adversarial predictors. This result underscores the \emph{\textbf{difficulty of achieving both faithfulness and closeness when applying existing methods}} to DLWF models. 

Figure~\ref{fig:baselines_apcpsfc} shows an example of adversarial predictors generated by each method for a targeted attack located near New York city. 
In terms of the {closeness} criterion, as expected, the Noise Attack, which perturbs the input data across the entire map, performs the worst compared to the learning-based adversarial attack methods ({TAAOWPF}, {STPGD}, and {ALA}). Among the learning-based methods, {ALA} produces perturbations with smaller magnitudes than others, while {STPGD} achieves slightly better {closeness} than {TAAOWPF} as it constrains the locations where perturbations are allowed to be applied. 
In terms of the {in-AE} and {out-AE} criteria, the learning-based methods exhibit relatively similar performance. Here, it can be clearly observed that the {out-AE} produced by the Noise Attack is far worse than that of the other methods. These observations are mostly consistent with the results shown in Figure~\ref{fig:baselines_results}, except that the {in-AE} achieved by the Noise Attack (as shown in the zoomed-in view of Figure~\ref{fig:baselines_apcpsfc}) appears comparable to those of the learning-based methods in this case.




\begin{figure}[t!]
    \centering
    \subfloat[\footnotesize \label{fig:moranI_tmp2m_a}]{
        \includegraphics[height=2.55cm]{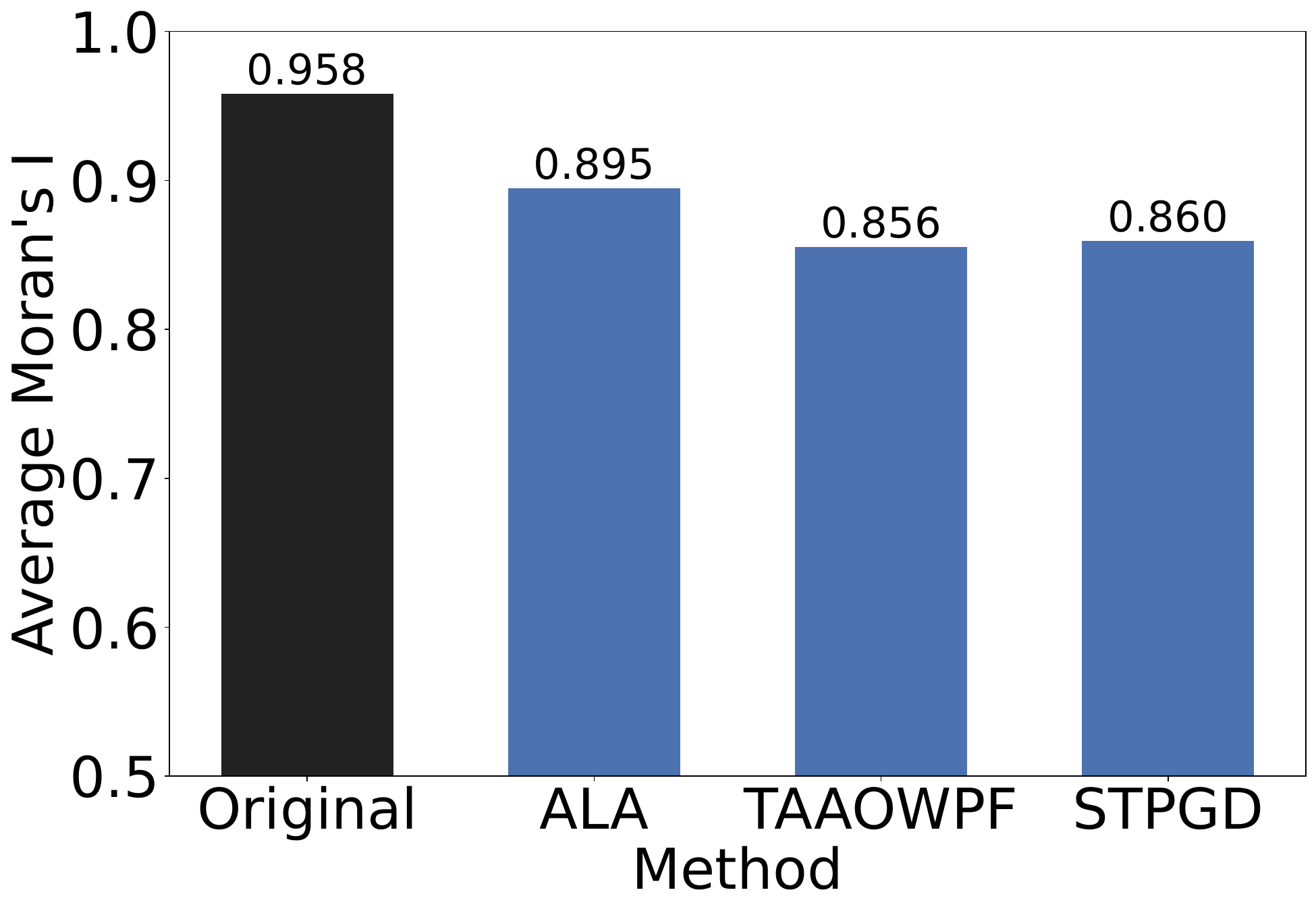}
    }
    \hfill
    \subfloat[\footnotesize \label{fig:moranI_tmp2m_b}]{
        \includegraphics[height=2.55cm]{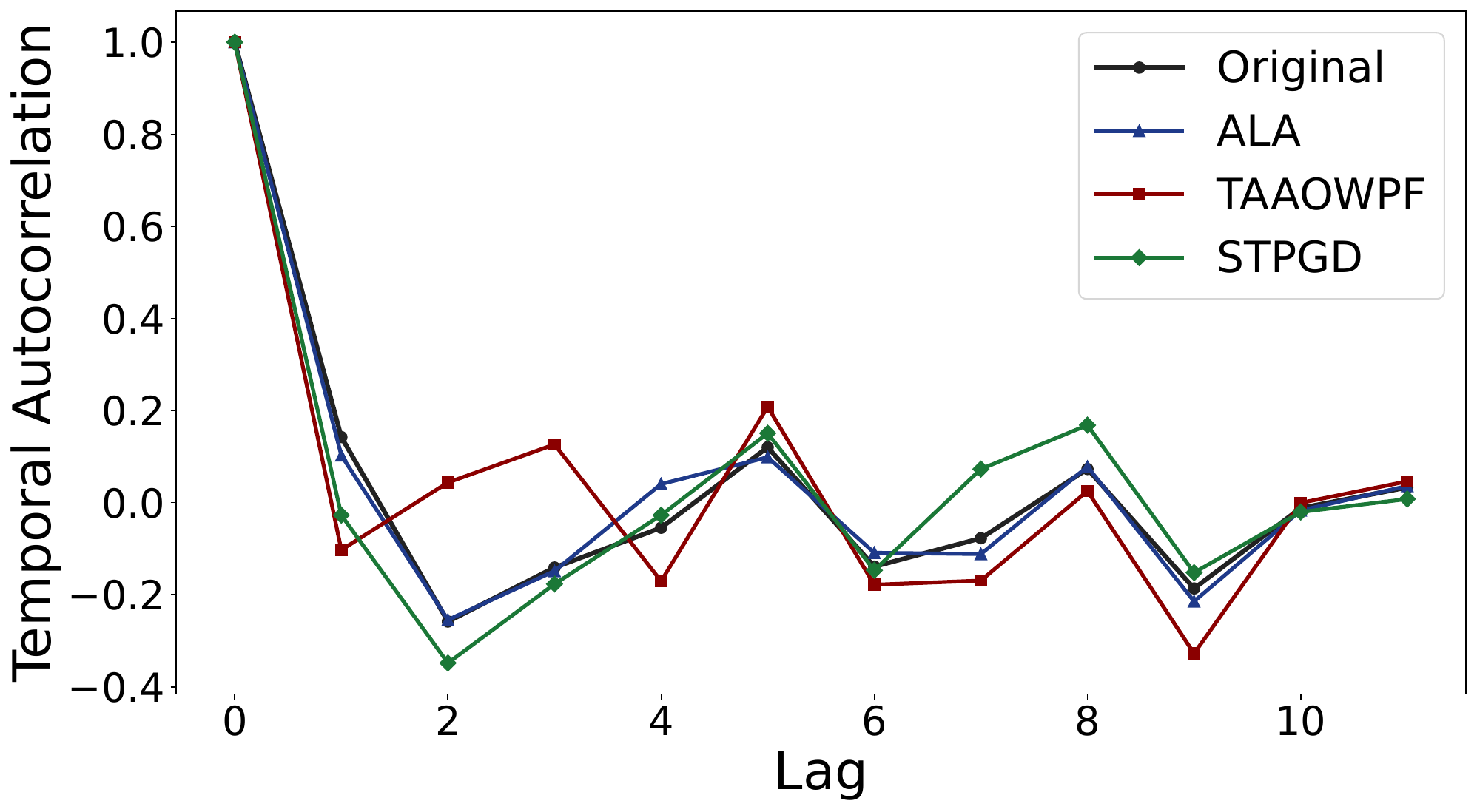}
    }
    \caption{Impact of existing adversarial attack methods on the geospatio-temporal realisticness of the {NLDAS} temperature dataset~\cite{mitchell2004multi}. Panel (a) reports the average Moran’s I over the period (daily records from January 2021 to March 2023) for the original input and adversarial inputs generated by different attack methods. Panel (b) shows temporal autocorrelation values, with the x-axis corresponding to different time lags.}
    \label{fig:MoranI_tmp2m}
\end{figure}


We also examine the geospatio-temporal realisticness of adversarial attacks for the {NLDAS} temperature dataset~\cite{mitchell2004multi}. 
We use  temperature instead of 
precipitation data since the former exhibits more significant spatial and temporal autocorrelations that must be preserved by the adversarial attack methods. 
Figure~\ref{fig:MoranI_tmp2m} shows the Moran's I and temporal autocorrelation for the adversarial predictors $\mathbf{X}'$ generated by the each method along with the autocorrelations present in the original predictor $\mathbf{X}$. 
The results suggest that the adversarial predictors generated by {TAAOWPF} and {STPGD} do not fully preserve the spatial and temporal autocorrelations present in the original predictor. For {ALA}, although its temporal autocorrelation remains relatively close to that of the original predictor, its Moran’s I still shows a clear discrepancy, indicating a distortion of spatial dependencies. This suggests that existing methods may not fully preserve the geospatio-temporal patterns inherent in the data..



\section{Proposed Framework: FABLE}
\label{sec:methods}


To address the limitations of existing methods, 
we propose a novel adversarial attack framework called \fable. 
Unlike conventional methods that directly perturb the original predictor $\mathbf{X}$, FABLE  decomposes $\mathbf{X}$ into its underlying components using a 3-D wavelet transform, enabling users to strategically adjust the perturbation magnitude across each component. 

\subsection{Components of FABLE}
As shown in Figure~\ref{fig:FABLE_architecture},  \fableb consists of 3 main components: (1) \textbf{Decomposition}, (2) \textbf{Perturbation}, and (3) \textbf{Reconstruction}. 

\subsubsection{Decomposition}
We employ a 3D Haar wavelet 
transform to decompose the original predictor $\mathbf{X}$ into its 
underlying frequency components along the temporal, longitudinal (column), and latitudinal (row) dimensions. The decomposition involves sequentially applying pairs of low-pass and high-pass filters across each dimension of $\mathbf{X} \in \mathbb{R}^{(\alpha+1) \times r \times c}$. The low-pass filter
extracts smooth, large-scale patterns corresponding to the \textit{low-frequency} \texttt{(L)} components of $\mathbf{X}$ while the high-pass filter
captures localized, fine-grained variations that represent its \textit{high-frequency} \texttt{(H)} components of $\mathbf{X}$. In the 3D case, the filters operate independently along each dimension of $\mathbf{X}$.  

\begin{figure}[t!]
    \centering
    \includegraphics[scale=0.255]{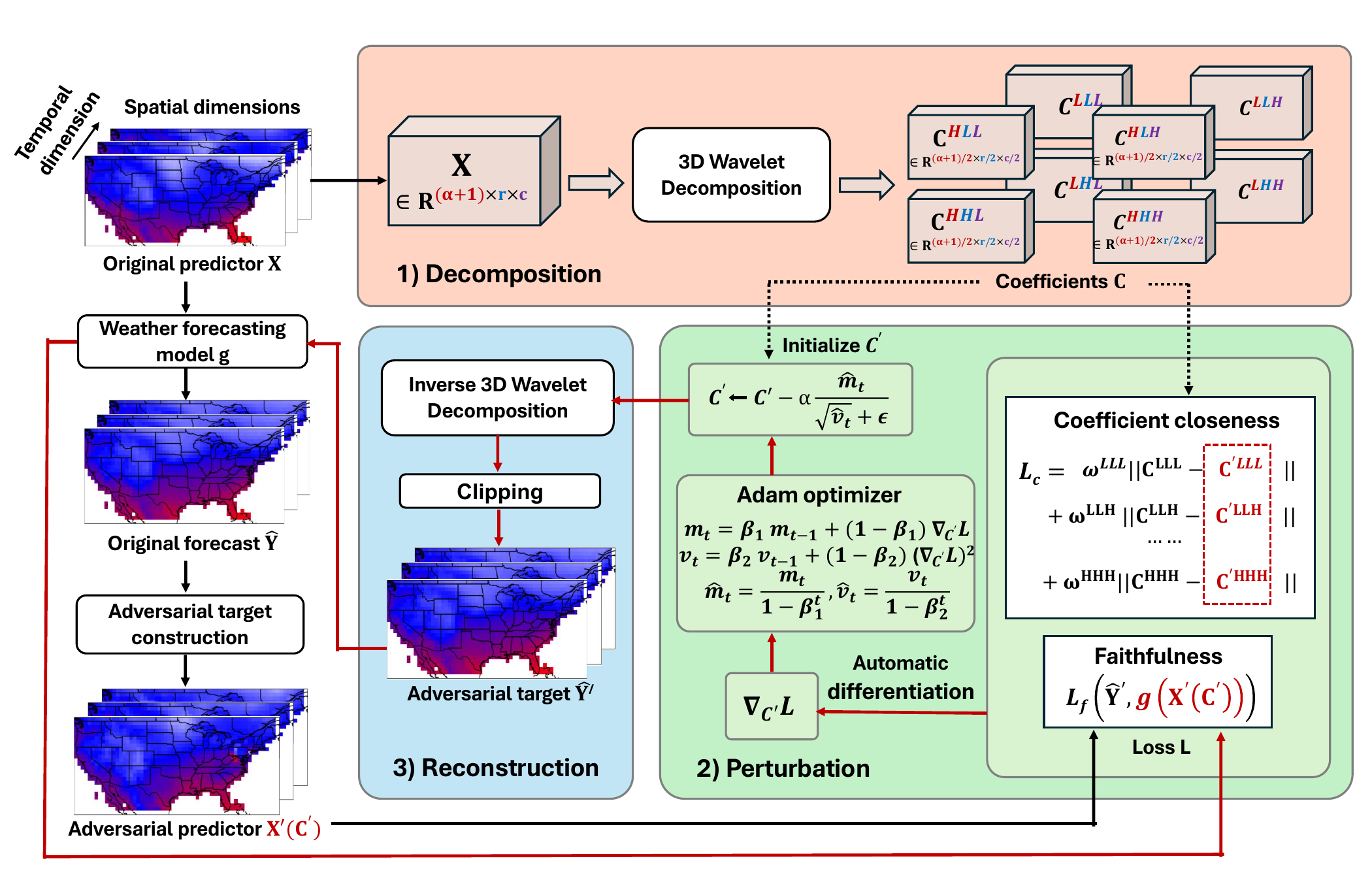}
    \caption{Proposed framework, consisting of (1) a \textbf{decomposition} step to extract 3D wavelet coefficients $\mathbf{C}$ from the original input $\mathbf{X}$, (2) a \textbf{perturbation} step to generate $\mathbf{C}'$ that minimizes the total loss $\mathcal{L}$, and (3) a \textbf{reconstruction} step using inverse 3D wavelet transform to generate the adversarial input $\mathbf{X}'$. 
    }
    \label{fig:FABLE_architecture}
\end{figure}

Let $\mathbf{f} = f_1f_2f_3$ be the combination of filters applied to $\mathbf{X}$ along its temporal, longitudinal, and latitudinal dimensions, where each $f_d \in \{\texttt{L}, \texttt{H}\}$ represents a low- or high-pass filter. For example, $\mathbf{f} = \texttt{LHH}$ denote the application of low-pass filter along the temporal dimension and high-pass filters along both spatial (longitudinal and latitudinal) dimensions.
Altogether, the 3D wavelet transform 
yields 8 frequency sub-bands---1 low-frequency component (\texttt{LLL}), 6 mixed-frequency components (\texttt{LLH}, \texttt{LHL}, \texttt{LHH}, \texttt{HLL}, \texttt{HLH}, \texttt{HHL}), and 1 high-frequency component (\texttt{HHH}). Let $\mathbf{C}^{\mathbf{f}} \in \mathbb{R}^{\frac{\alpha+1}{2} \times \frac{r}{2} \times \frac{c}{2}}$ be a tensor of wavelet coefficients associated with the sub-band $\mathbf{f}=f_1f_2f_3$, whose $(k_1,k_2,k_3)$-coefficient is computed as follows: 

\small
\begin{align}
    C^{\mathbf{f}}_{k_1, k_2, k_3} 
    &= \sum_{n_1=0}^{1} \sum_{n_2=0}^{1} \sum_{n_3=0}^{1} 
    \left[\frac{1}{\sqrt{2}} (-1)^{\xi_1 n_1}\right] \left[\frac{1}{\sqrt{2}} (-1)^{\xi_2 n_2}\right] \nonumber \\
    & \hspace{0.6cm}
    \times \left[\frac{1}{\sqrt{2}} (-1)^{\xi_3 n_3}\right] 
    \mathbf{X}_{2k_1 - n_1, 2k_2 - n_2, 2k_3 - n_3} \nonumber \\
    &= \frac{1}{\sqrt{8}} \sum_{n_1, n_2, n_3=0}^{1}  
    (-1)^{\boldsymbol{\xi}_{\mathbf{f}}^\top \mathbf{n}} 
    \mathbf{X}_{2k_1 - n_1, 2k_2 - n_2, 2k_3 - n_3},
    \label{eq:haar3d_decomposition}
\end{align}
\normalsize

\noindent where $k_1 \in \left\{1, 2, \dots, \frac{\alpha+1}{2} \right\}$,  
$k_2 \in \left\{1, 2, \dots, \frac{r}{2} \right\}$,  and
$k_3 \in \left\{1, 2, \dots, \frac{c}{2} \right\}$\footnote{By default, $\alpha + 1$, $r$, and $c$ are assumed to be even.} are the translation indices along the temporal, longitudinal, and latitudinal dimensions of $\mathbf{X}$, respectively, and $\boldsymbol{\xi}_{\mathbf{f}}= [\xi_1,\xi_2,\xi_3]$ is a binary vector whose $d$-th element is
$$
\xi_d =
\begin{cases}
0, & \text{if } f_d = L; \\
1, & \text{if } f_d = H,
\end{cases}
$$
and $\mathbf{n} = [n_1,n_2, n_3] \in \{0, 1\}^3$. 
By decomposing $\mathbf{X}$ into its distinct components using 3D discrete wavelet transform, this allows us to vary the perturbation magnitude across different components to generate an adversarial predictor. Detailed expressions and examples of the sub-band coefficients are provided in Appendix~\ref{appendix:wavelet_transform}.


\subsubsection{Perturbation} 
Given an adversarial target $\mathbf{\hat{Y}}'$, {FABLE} is designed to learn a set of perturbed wavelet coefficients, $\mathbf{C}'$, such that the reconstructed input $\mathbf{X}'(\mathbf{C}')$ 
induces a forecast $g(\mathbf{X}'(\mathbf{C}'))$ that closely aligns with the desired target $\mathbf{\hat{Y}}'$. Given a weather forecasting model $g$, the optimization objective for generating the adversarial input $\mathbf{X}'(\mathbf{C}')$ using {FABLE} is 
\begin{eqnarray}
\arg \min_{\mathbf{C}'} && \mathcal{L}_f\left(\mathbf{\hat{Y}}', g\left(\mathbf{X}'(\mathbf{C}')\right)\right) + \lambda \mathcal{L}_C\big( \mathbf{C},\mathbf{C}'; \mathbf{\omega}\big)\nonumber\\
\textrm{s.t.} && \|\mathbf{X} - \mathbf{X}'(\mathbf{C}')\|_{\infty} \leq \epsilon
\label{FABLE_loss_function}
\end{eqnarray}
{The first term, $\mathcal{L}_f\left(\mathbf{\hat{Y}}', g(\mathbf{X}')\right) = \frac{1}{\sqrt{\beta \times r \times c}}\|\mathbf{\hat{Y}}' -g(\mathbf{X}')\|_2$ ensures {faithfulness} of the reconstructed adversarial predictor $\mathbf{X}'$ by aligning its adversarial forecast $g(\mathbf{X}')$ to the target $\mathbf{\hat{Y}}'$. The second term, $\mathcal{L}_C\big( \mathbf{C},\mathbf{C}'; \omega\big) = \sum_{\mathbf{f}} \omega^{\mathbf{f}} \|\mathbf{C}^{\mathbf{f}} - \mathbf{C}'^{\mathbf{f}}\|_2$, ensures that the perturbation focuses more on high-frequency instead of low-frequency components by choosing the appropriate 
set of penalty weights, $\omega^{\mathbf{f}}\in \mathbb{R}$. 
The hyperparameter $\lambda$ balances these objectives, offering {FABLE} the flexibility to tailor adversarial predictors under varying conditions. Finally, the inequality constraint, $\|\mathbf{X} - \mathbf{X}'(\mathbf{C}')\|_{\infty} \leq \epsilon$, enforces {closeness} of $\mathbf{X}'$ to the original predictor $\mathbf{X}$, similar to other attack methods. We use the Adam optimizer~\cite{kingma2014adam} with clipping to solve this optimization problem. The perturbation constraint is enforced in each iteration by clipping 
$\mathbf{X}'$ to lie within the $\ell_\infty$ ball centered at $\mathbf{X}$ with radius $\epsilon$, 
i.e., $\mathbf{X}' \in [\mathbf{X}-\epsilon,\, \mathbf{X}+\epsilon]$. 

\subsubsection{Reconstruction} Given the perturbed wavelet coefficients $\mathbf{C}'^{\mathbf{f}} \in \mathbb{R}^{\frac{\alpha+1}{2} \times \frac{r}{2} \times \frac{c}{2}}$ and their corresponding Haar scaling and wavelet basis functions, the adversarial predictor $\mathbf{X}'(\mathbf{C}') \in \mathbb{R}^{(\alpha+1) \times r \times c}$ can be reconstructed using the inverse 3D Haar wavelet transform as follows:
\begin{eqnarray}
&&\mathbf{X}'_{2k_1 - n_1, 2k_2 - n_2, 2k_3 - n_3}(\mathbf{C}')\nonumber\\
&& = \ \frac{1}{\sqrt{8}} \sum_{f_1,f_2,f_3 \in \{L,H\}} (-1)^{\boldsymbol{\xi}_{\mathbf{f}}^T \mathbf{n}} \cdot C'^{\mathbf{f}}_{k_1, k_2, k_3}.
\label{Inverse_haar_wavelet_teansform}
\end{eqnarray}
where the summation is taken over all frequency sub-bands. 
Note that the reconstruction is performed on the perturbed wavelet coefficients $\mathbf{C}' = \{\mathbf{C}'^{\textit{LLL}}, \mathbf{C}'^{\textit{LLH}}, \cdots, \mathbf{C}'^{\textit{HHH}}\}$ prior to evaluating the loss in Equation~\eqref{FABLE_loss_function}. The perturbation-reconstruction steps are repeated until convergence (when the loss function does not change significantly or when the maximum number of epochs is reached). The pseudocode for the FABLE algorithm is presented in Algorithm~\ref{alg:fable}.

\begin{algorithm}[t]
\caption{FABLE}
\label{alg:fable}

\textbf{Parameters:} attack steps $N$; step size $\eta$; perturbation bound $\epsilon$;
regularization coefficient $\lambda$; penalty weights $\omega$.\\
\textbf{Input:} DLWF model $g$; original input $\mathbf{X}$; adversarial target $\hat{\mathbf{Y}}'$.\\
\textbf{Output:} adversarial input $\mathbf{X}'$.

\begin{algorithmic}[1]
\STATE $\mathbf{C} \leftarrow \mathrm{DWT}(\mathbf{X})$ \ \ 
\COMMENT{3D Haar wavelet transform}
\STATE $\mathbf{C}_0' \leftarrow \mathbf{C}$
\STATE $(\mathbf{m}_0,\mathbf{v}_0)\leftarrow(\mathbf{0},\mathbf{0})$
\COMMENT{Initialize parameters of ADAM}

\FOR{$t=1$ \TO $N$}
  \STATE {$\mathbf{X}_{t-1}' \leftarrow \mathrm{IDWT}(\mathbf{C}_{t-1}')$ \ \COMMENT{Inverse Haar transform}}
  \STATE $\mathbf{X}_{t-1}' \leftarrow \mathrm{Clip}(\mathbf{X}_{t-1}',\mathbf{X},\epsilon)$
  \STATE $\mathcal{L} \leftarrow 
  \mathcal{L}_f(g(\mathbf{X}_{t-1}'),\hat{\mathbf{Y}}')
  + \lambda\,\mathcal{L}_C(\mathbf{C},\mathbf{C}_{t-1}';\omega)$
  \FOR{each $\mathbf{f}\in\{\mathrm{LLL},\mathrm{LLH},\ldots,\mathrm{HHH}\}$}
    \STATE $\mathbf{g}_{t-1}^{\mathbf{f}} \leftarrow 
    \nabla_{\mathbf{C}_{t-1}'^{\mathbf{f}}}\mathcal{L}$
    \STATE $\mathbf{C}_t'^{\mathbf{f}} \leftarrow
    \mathrm{AdamUpdate}(\mathbf{C}_{t-1}'^{\mathbf{f}},
    \mathbf{g}_{t-1}^{\mathbf{f}},\eta,
    \mathbf{m}_{t-1},\mathbf{v}_{t-1})$
  \ENDFOR
\ENDFOR

\STATE {$\mathbf{X}_N' \leftarrow \mathrm{IDWT} (\mathbf{C}'_N)$ \ \COMMENT{Inverse Haar transform}}
\STATE \textbf{return} $\mathbf{X}_N'$
\end{algorithmic}
\end{algorithm}


\subsection{Perturbation Strategy for FABLE}

FABLE is designed to preserve both the proximity to the original predictor $\mathbf{X}$ and its geospatio-temporal realisticness  by modulating the perturbation magnitude so that the higher-frequency sub-bands are allowed to receive larger perturbations than lower-frequency ones. This is achieved by setting the penalty weights ($\omega^{\mathbf{f}}$) for higher frequency sub-bands to be lower than those for lower frequency sub-bands in the closeness loss term $\mathcal{L}_C$ defined in Equation \eqref{FABLE_loss_function}. 
To justify this strategy, the top-left panel of Figure~\ref{fig:perturbation_schema} shows $5$ different configurations of penalty weights $\omega^{\mathbf{f}}$. From configuration 1 to configuration 5, the design progressively favors perturbations toward the lower-frequency sub-bands by increasing the penalty weights on higher-frequency sub-bands, thereby further restricting their allowable perturbation magnitudes. Additionally, as the low-frequency (\texttt{LLL}) sub-band largely governs the overall time series structure, even minor perturbations to this component can induce substantial changes to the signal. To mitigate this effect, we exclude the low-frequency (\texttt{LLL}) sub-band from the closeness loss function in configurations 1 through 4 and include it only in configuration 5.


The remaining 5 panels in the figure summarize FABLE's performance under different configurations of $\omega^{\mathbf{f}}$. Each panel presents ten curves, each corresponding to one of ten selected U.S. cities used as target locations to construct adversarial targets for the same sample set. Each curve displays the average performance of the adversarial attack across different evaluation metrics. Observe that configuration 1 has the best closeness and geospatio-temporal realisticness among all configurations, but this improvement comes at the cost of poorer faithfulness. This suggests a strategy to improve closeness and geospatio-temporal realisticness is by increasing the perturbation magnitude on higher-frequency sub-bands.
This observation is further supported by  Theorems~\ref{upper_bound_comparison} and~\ref{thm:closeness} below. For brevity, the theorems are presented in the context of a 1-D signal. As the frequency coefficients are extracted independently from the spatial and temporal dimensions (see Equation~\eqref{eq:haar3d_decomposition}), the argument can be extended to each dimension in the 3-D case.

\begin{figure}[t!]
    \centering
    \includegraphics[scale=0.25]{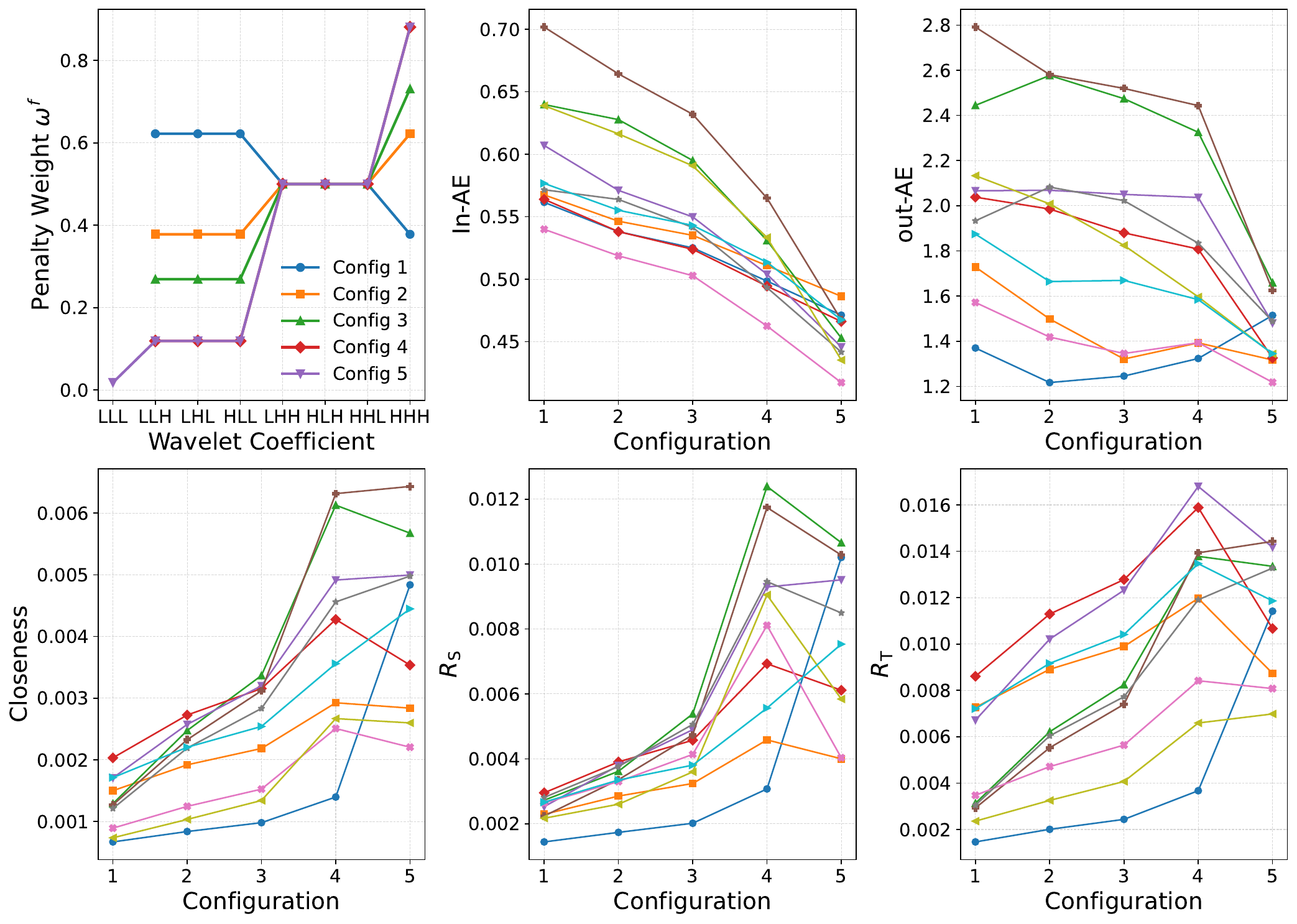}
    \caption{Effect of varying penalty weights for perturbations across different frequency sub-bands on samples from the NLDAS temperature dataset~\cite{mitchell2004multi}. From configuration $1$ to $5$, the perturbation penalty weight $\omega^f$ applied to higher-frequency sub-bands progressively increases, implying that the perturbation magnitude on higher-frequency sub-bands correspondingly decreases. For each metric, lower values indicate better performance.
    }
    \label{fig:perturbation_schema}
\end{figure}


Consider the Haar wavelet decomposition of a 1-D signal of length $T$:
$$
\mathbf{X}_{2k - n} = \frac{C^{\texttt{L}}_k}{\sqrt{2}} + \frac{(-1)^{1-n} C^{\texttt{H}}_k}{\sqrt{2}},
$$ 
where $n \in \{0,1\}$, $k \in \{1, \dots, T/2\}$, $\{C^{\texttt{L}}_k\}$ is the set of low-frequency components (approximation coefficients), and $\{C^{\texttt{H}}_k\}$ is the set of high-frequency components (detail coefficients). 
Let $\mathbf{X}'_A$ and $\mathbf{X}'_D$ be the signals obtained by perturbing only the low- and high-frequency components of $\mathbf{X}$, respectively:

\small
\begin{align*}
X’_A(2k-n)&=X(2k-n)+\frac{C_k'^L-C_k^L}{\sqrt{2}}= X(2k-n)+\frac{\delta_A(k)}{\sqrt{2}}\\
X'_{D}(2k-n)&=X(2k-n) + \frac{(-1)^{1-n}(C_k'^H-C_k^H)}{\sqrt{2}} \\
&= X(2k-n) + \frac{(-1)^{1-n}\delta_D(k)}{\sqrt{2}}
\end{align*}
\normalsize
where $\delta_A(k)$ and $\delta_D(k)$ are the perturbation magnitudes added to $C_k^L$ and $C_k^H$, respectively. 

Denote the autocorrelations at lag $l$ for the original and perturbed signals after standardization as $\rho(l)$, $\rho_A(l)$, and $\rho_D(l)$,  respectively, where $\rho(l)=\sum_{n}\sum_{k}X_{2k-n}X_{2k-n-l}$. The following theorem compares the change in autocorrelations at lag $l$ when perturbing only the detail (low-frequency) or the approximation (high-frequency) components of a 1-D signal.  

\begin{theorem}
\label{upper_bound_comparison}
Assuming
$
\sum_{k=1}^{T/2} |C^{\texttt{H}}_k| \leq \sum_{k=1}^{T/2} |C^{\texttt{L}}_k|$ and the perturbation magnitudes $\{\delta_A(k)\}$ and $\{\delta_D(k)\}$ are both upper bounded by some constant $\epsilon^*$, i.e., $\forall k: |\delta_A(k)|\leq \epsilon^*$ and $|\delta_D(k)|\leq \epsilon^*$, it follows that
$$
\sup_{\delta_D} \sum_{l=0}^{T-1} |\rho_D(l) - \rho(l)|
\leq
\sup_{\delta_A} \sum_{l=0}^{T-1} |\rho_A(l) - \rho(l)| 
.
$$
\end{theorem}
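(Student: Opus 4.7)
The plan is to compare the two worst-case autocorrelation changes by decomposing each $\rho_*(l) - \rho(l)$ into pieces that are linear and quadratic in the perturbation, rewriting those pieces in the Haar wavelet basis, and then using the hypothesis $\sum_k |C^H_k| \leq \sum_k |C^L_k|$ to dominate the detail-side supremum by the approximation-side one.

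First I would write $X'_*(t) = X(t) + \Delta_*(t)$ for $* \in \{A, D\}$ and expand
$$
\rho_*(l) - \rho(l) = \sum_t X(t)\,\Delta_*(t-l) + \sum_t \Delta_*(t)\,X(t-l) + \sum_t \Delta_*(t)\,\Delta_*(t-l).
$$
Two structural facts about $\Delta_A$ and $\Delta_D$ are then central. Within each dyadic pair $\{2k-1, 2k\}$, $\Delta_A$ is constant at value $\delta_A(k)/\sqrt{2}$, whereas $\Delta_D$ alternates between $+\delta_D(k)/\sqrt{2}$ and $-\delta_D(k)/\sqrt{2}$. In particular, $|\Delta_A(t)|$ and $|\Delta_D(t)|$ share the same pointwise distribution in $t$, so $\sup_{\delta_A}|\sum_t \Delta_A(t)\Delta_A(t-l)| = \sup_{\delta_D}|\sum_t \Delta_D(t)\Delta_D(t-l)|$, and the quadratic terms contribute equally to both suprema and therefore drop out of the comparison.

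Second, for the linear cross-correlation terms I would apply a direct Haar identity. Under a periodic boundary convention, a dyadic computation gives, for even lag $l = 2m$,
$$
\sum_t X(t)\,\Delta_A(t-2m) = \sum_k \delta_A(k)\, C^L_{k+m}, \qquad \sum_t X(t)\,\Delta_D(t-2m) = \sum_k \delta_D(k)\, C^H_{k+m},
$$
with the analogous identity at odd lags using the shifted Haar pairing $\{2k, 2k+1\}$ and its coefficients $\tilde C^L_k, \tilde C^H_k$. Maximizing the absolute value over $|\delta_*(k)| \leq \epsilon^*$ by sign alignment gives $\sup_{\delta_A} = \epsilon^* \sum_k |C^L_{k+m}|$ and $\sup_{\delta_D} = \epsilon^* \sum_k |C^H_{k+m}|$, and the coefficient hypothesis then dominates the detail side lag-by-lag. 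Assembling the three contributions by triangle inequality on the $D$ side and realizing the upper bound via an explicit sign-aligned $\delta_A^{\star}$ on the $A$ side yields the comparison at each fixed $l$.

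The hard part is handling the outer sum $\sum_{l=0}^{T-1} |\rho_*(l) - \rho(l)|$, because placing the absolute value inside the sum means that a single $\delta_*$ cannot simultaneously maximize every lag. I expect to resolve this by duality, rewriting $\sum_l |\rho_*(l) - \rho(l)|$ as $\sup_{\|\sigma\|_\infty \leq 1} \sum_l \sigma(l)\,(\rho_*(l) - \rho(l))$, swapping the outer suprema, and reducing to a single aggregated linear-plus-quadratic functional in $\delta_*$ for which the per-coefficient bounds apply directly. Two secondary issues remain: the odd-lag contributions require the coefficient inequality for the shifted pairing, which I would obtain by imposing the natural companion assumption $\sum_k |\tilde C^H_k| \leq \sum_k |\tilde C^L_k|$ (automatic when $X$ is periodic and the original assumption is invariant under unit shifts); and the standardization rescales by the perturbed variance, which lies in $[1 - O(\epsilon^*),\, 1 + O(\epsilon^*)]$ and therefore cannot reverse the direction of the inequality for sufficiently small $\epsilon^*$.
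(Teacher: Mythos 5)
Your overall strategy matches the paper's: expand $\rho_*(l)-\rho(l)$ into two cross terms plus a quadratic term, rewrite everything in the Haar coefficients, bound termwise by $\epsilon^*$, and let the hypothesis $\sum_k|C^H_k|\le\sum_k|C^L_k|$ decide the comparison. The even-lag identity you state, $\sum_t X(t)\,\Delta_A(t-2m)=\sum_k\delta_A(k)\,C^L_{k+m}$ and its $D$-analogue, is exactly what the paper derives (the companion detail/approximation cross terms vanish by the $\pm$ cancellation over each dyadic pair), and the quadratic terms indeed contribute identically to both bounds.

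The genuine gap is in your odd-lag treatment. When the lag $l$ is odd, the shift misaligns the dyadic blocks: inside a shifted pair $\{2k,2k+1\}$ the perturbation $\Delta_A(\cdot-l)$ takes the values $\delta_A(k')/\sqrt{2}$ and $\delta_A(k'+1)/\sqrt{2}$ coming from two \emph{different} original blocks, so it is neither constant nor alternating on the shifted pairing, and the clean identity $\sum_t X(t)\,\Delta_*(t-l)=\sum_k\delta_*(k)\,\tilde C^{L/H}_{k+m}$ does not hold. The correct computation (which the paper carries out) produces mixed terms of the form $\tfrac{C_k^L\pm C_k^H}{2}\,\delta_*\!\left(k-\tfrac{l\mp 1}{2}\right)$ for \emph{both} the approximation- and detail-perturbed cases; after taking absolute values and bounding by $\epsilon^*$, the odd-lag contributions to the two upper bounds are \emph{identical} (both equal $\epsilon^*\mathcal{C}(m)$ plus the same boundary terms), so they cancel in the comparison and no companion assumption $\sum_k|\tilde C^H_k|\le\sum_k|\tilde C^L_k|$ is needed — nor is periodicity, which the theorem does not grant you. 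The entire inequality is decided by the even lags alone. Separately, your duality device for the outer sum $\sum_l|\cdot|$ is a more careful treatment than the paper's (which simply sums the per-lag upper bounds and identifies them with the suprema), but as sketched it still leaves the joint linear-plus-quadratic maximization over the box unresolved; if you pursue it, note that the quadratic terms can only be said to ``drop out'' after the aggregated functional has been bounded, not before.
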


\begin{remark}
    The assumption $\sum_{k=1}^{T/2} |C^{\texttt{H}}_k|\leq \sum_{k=1}^{T/2} |C^{\texttt{L}}_k|$ generally holds in real-world settings as the approximation coefficients $\{|C^{\texttt{L}}_k|\}$ capture most of the signal's energy, while the detail coefficients $\{|C^{\texttt{H}}_k|\}$ primarily reflect its fine-scale fluctuations.
\end{remark}



\begin{remark}
If the condition $\sum_{k=1}^{T/2} |C^{\texttt{H}}_k| \leq \sum_{k=1}^{T/2} |C^{\texttt{L}}_k|$ holds, then it typically follows that $\|\frac{1}{\sqrt{2}}\boldsymbol{\delta}_D\|_2 \leq\|\frac{1}{\sqrt{2}}\boldsymbol{\delta}_A\|_2$. This is because the perturbation magnitudes $\delta_D$ and $\delta_A$ are generally proportional to the magnitudes of their corresponding wavelet coefficients, $C^H$ and $C^L$.
\end{remark}

Based on the preceding remarks, the theorem below compares the closeness of the original and perturbed signals.

\begin{theorem}
\label{thm:closeness}
Let $\mathbf{X}_A'$ and $\mathbf{X}_D'$ be the signals obtained by perturbing only the approximation coefficients and only the detail coefficients of $\mathbf{X}$, respectively.
If $\|\frac{1}{\sqrt{2}}\boldsymbol{\delta}_D\|_2 \leq \|\frac{1}{\sqrt{2}}\boldsymbol{\delta}_A\|_2$, then
$
\|\mathbf{X}_D' - \mathbf{X}\|_2 \leq \|\mathbf{X}_A' - \mathbf{X}\|_2.
$
\end{theorem}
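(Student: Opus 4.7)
The plan is to prove the theorem by direct computation, exploiting the orthonormality of the Haar wavelet basis. Since only the approximation coefficients are perturbed in $\mathbf{X}_A'$ and only the detail coefficients in $\mathbf{X}_D'$, the paper's explicit formulas
$X_A'(2k-n) - X(2k-n) = \delta_A(k)/\sqrt{2}$ and $X_D'(2k-n) - X(2k-n) = (-1)^{1-n}\delta_D(k)/\sqrt{2}$
give the pointwise differences outright, and the goal reduces to comparing their $\ell_2$ norms.

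First, I would compute $\|\mathbf{X}_A' - \mathbf{X}\|_2^2$ by summing the squared pointwise differences over both dyadic partners $n\in\{0,1\}$ for each translation index $k\in\{1,\dots,T/2\}$. Because $\delta_A(k)/\sqrt{2}$ does not depend on $n$, each pair contributes $2\cdot\delta_A(k)^2/2 = \delta_A(k)^2$, yielding $\|\mathbf{X}_A' - \mathbf{X}\|_2^2 = \sum_{k} \delta_A(k)^2 = 2\,\|\tfrac{1}{\sqrt{2}}\boldsymbol{\delta}_A\|_2^2$. Second, I would perform the analogous computation for $\mathbf{X}_D'$: here the sign factor $(-1)^{1-n}$ flips between the two partners, but since we are squaring, the sign disappears and each pair again contributes $\delta_D(k)^2$, giving $\|\mathbf{X}_D' - \mathbf{X}\|_2^2 = \sum_{k} \delta_D(k)^2 = 2\,\|\tfrac{1}{\sqrt{2}}\boldsymbol{\delta}_D\|_2^2$. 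This step is essentially a one-dimensional Parseval identity for the Haar basis.

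Finally, I would combine these two identities with the hypothesis $\|\tfrac{1}{\sqrt{2}}\boldsymbol{\delta}_D\|_2 \le \|\tfrac{1}{\sqrt{2}}\boldsymbol{\delta}_A\|_2$ to conclude $\|\mathbf{X}_D' - \mathbf{X}\|_2^2 \le \|\mathbf{X}_A' - \mathbf{X}\|_2^2$, and hence $\|\mathbf{X}_D' - \mathbf{X}\|_2 \le \|\mathbf{X}_A' - \mathbf{X}\|_2$ by monotonicity of the square root on nonnegative reals.

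There is no substantive obstacle here: the sign-flip in the detail-coefficient formula is the only place where one could momentarily worry about a cross term, but since the $A$ and $D$ perturbations are considered separately (not simultaneously), the computation collapses to two independent squared-norm calculations. The proof is really just the isometry property of the orthonormal Haar transform, specialized to the single-sub-band perturbations defined just before the theorem statement.
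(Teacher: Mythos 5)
Your proof is correct and is essentially the paper's argument: both reduce the claim to the isometry (Parseval) property of the orthonormal Haar transform, the paper phrasing it via the orthogonal matrix $\mathbf{W}$ with $\mathbf{W}^\top\mathbf{W}=\mathbf{I}$ and you via a direct pointwise sum of squares over the dyadic pairs. Incidentally, your normalization $\|\mathbf{X}_A'-\mathbf{X}\|_2^2=\sum_k\delta_A(k)^2=2\,\|\tfrac{1}{\sqrt{2}}\boldsymbol{\delta}_A\|_2^2$ is the correct one; the paper's displayed identity $\|\mathbf{W}[\boldsymbol{\delta}_A;\mathbf{0}]\|_2=\|\tfrac{1}{\sqrt{2}}\boldsymbol{\delta}_A\|_2$ carries a spurious factor of $\tfrac{1}{\sqrt{2}}$, which is harmless because the same factor appears on both sides of the final comparison.
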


\noindent The \textbf{proofs} for both theorems are provided in the Appendix.

\section{Experimental Setup}
\label{sec:Perfprmance Settings}






\subsection{Datasets and Preprocessing}
\label{sec:Datasets and Preprocessing}

We conducted our experiments on the following two benchmark meteorological datasets:
\subsubsection{North American Land Data Assimilation System (NLDAS)}
This dataset\footnote{\url{https://ldas.gsfc.nasa.gov/nldas}} contains the daily weather observations for 1,320 locations over a $1^\circ \times 1^\circ$ grid covering North America from 1979 to 2023. We consider 2 variables for  adversarial forecasting: \textit{2-meter air temperature} in Kelvin (\textbf{{NLDAS-TMP2M}}) and \textit{precipitation} in kg/m$^2$ (\textbf{{NLDAS-APCPSFC}}).
The data is split into 3 non-overlapping subsets, with the training set spanning 1979--2015, validation set from 2016--2019, and test set from 2020--2023. Following the setup in~\cite{lin2022conditional}, both the predictor and forecast windows are each set to a length of $12$ days. 

\subsubsection{ERA5 Reanalysis Data} This dataset\footnote{\url{https://www.ecmwf.int/en/forecasts/dataset/ecmwf-reanalysis-v5}} provides global hourly reanalysis weather data on a $5.625^\circ \times 5.625^\circ$ grid, covering 2,048 locations from 1979 to 2018. Adversarial inputs are generated for forecasting the \textit{2-meter air temperature} in Kelvin (\textbf{{ERA5-T2M}}) and \textit{total incident solar radiation} in $\text{Joules/m}^2$ (\textbf{{ERA5-TISR}}). 
To match the temporal resolution of the {FourCastNet}~\cite{pathak2022fourcastnet} model, we aggregate the  hourly data into six-hour intervals, averaging the values for {ERA5-T2M} and summing those for 
{ERA5-TISR}. 
We use the data from 1979–2016 for training, with 2017 for validation and 2018 for testing. Both the predictor and forecasting windows are set to a length of $12$ time steps, corresponding to 3-day window.


A summary statistics of the datasets is shown in Table~\ref{tab:data_statistics_compact_vertical}. For both datasets, we remove the data with missing values and standardize the remaining observations by centering them using the global mean and standard deviation, computed over all locations and time steps. A logarithmic transformation is also applied to the {NLDAS-APCPSFC} data. 
For each dataset, we train a DLWF model on the training set, tune its hyperparameters based on performance on the validation set, and perform adversarial attacks exclusively on the test set.

\begin{table}[t]
\centering
\setlength{\tabcolsep}{3.5pt}
\renewcommand{\arraystretch}{1.05}
\begin{tabular}{lcccc}
\toprule
Statistic
& APCPSFC
& TMP2M
& T2M
& TISR \\
& (kg/m$^2$) & ($^\circ$K) & ($^\circ$K) & (J/m$^2$) \\
\midrule
Mean
& 0.0832
& 283.9300
& 278.2700
& $6.4406 \times 10^{6}$ \\
Standard Deviation
& 0.2071
& 38.3300
& 21.0500
& $7.7218 \times 10^{6}$ \\
Minimum
& 0
& 232.5900
& 193.6600
& 0 \\
Maximum
& 10.1873
& 314.5700
& 317.8200
& $2.7871 \times 10^{7}$ \\
25th Percentile
& 0
& 275.4700
& 268.8100
& 0 \\
50th Percentile
& 0.0050
& 285.3900
& 283.2100
& $2.8250 \times 10^{6}$ \\
75th Percentile
& 0.0621
& 293.4100
& 295.9000
& $1.1292 \times 10^7$ \\
90th Percentile
& 0.2388
& 298.9900
& 299.6300
& $1.9387 \times 10^{7}$ \\
95th Percentile
& 0.4357
& 301.2700
& 300.5300
& $2.2900 \times 10^{7}$ \\
99th Percentile
& 1.0476
& 304.4300
& 302.3900
& $2.5913 \times 10^{7}$ \\
\bottomrule
\end{tabular}
\caption{Summary statistics of the weather variables
computed from valid data over all locations and time steps.}
\label{tab:data_statistics_compact_vertical}
\end{table}


\subsection{DLWF Models} 



We consider two DLWF models for our experiments: \textit{CLCRN}~\cite{lin2022conditional} and \textit{FourCastNet}~\cite{pathak2022fourcastnet}. The {CLCRN} model was trained to predict the 
{precipitation (apcpsfc)} and {2-meter air temperature (tmp2m)} variables in the {NLDAS} dataset, while the {FourCastNet} model was trained to forecast the two {ERA5} variables: {2-meter air temperature (t2m)}, and {total incident solar radiation (tisr)}. While {CLCRN} supports multi-step forecasting, {FourCastNet} was originally designed to predict multiple variables at a single time step. 
Specifically, the original {FourCastNet} takes the full multi-variable atmospheric state at the current time as input and predicts the same set of variables at the next time step. We adapt {FourCastNet} by providing a sequence of past observations of the target variable as input and train the model to directly predict that target variable over the next 12 time steps. All the models were trained using their default hyperparameter configurations\footnote{The original codes of {CLCRN} and {FourCastNet} models are publicly available at \url{https://github.com/EDAPINENUT/CLCRN} and \url{https://github.com/NVlabs/FourCastNet}, respectively.}, with necessary adjustments limited only to input/output formatting and task-specific adaptations using the preprocessed training and validation data described in Section~\ref{sec:Datasets and Preprocessing}. The forecast performance of these models on their respective test sets are reported 
in Table~\ref{tab:clean_performance}. Note that our method is model-agnostic, and thus, is applicable to any DLWF models.


\begin{table}[!t]
\centering
\small
\begin{tabular}{lccl}
\toprule
Model & Dataset & Target Variable & MAE \\
\midrule
CLCRN & NLDAS & APCPSFC & $0.0807$ kg/m$^2$ \\
CLCRN & NLDAS & TMP2M & $3.3898^\circ$K \\
FourCastNet & ERA5 & T2M & $1.8364^\circ$K \\
FourCastNet & ERA5 & TISR & $1.1755 \times 10^5$ J/m$^2$\\
\bottomrule
\end{tabular}
\caption{Mean Absolute Error (MAE) of the trained DLWF models on the test set for each target variable.
}
\label{tab:clean_performance}
\end{table}






\subsection{Adversarial Attack Methods}

We compare the performance of {FABLE} against 5 baselines. As some of these methods were originally developed for untargeted attacks or classification tasks, they must be adapted to a targeted attack setting for regression problems to minimize the loss $\mathcal{L}\left(g(\mathbf{X}'),\mathbf{\hat{Y}}'\right) = \|g(\mathbf{X'}) - \mathbf{\hat{Y}'}\|$. 

\subsubsection{Noise Attack~\cite{heinrich2024targeted}} This method generates multiple candidates for $\mathbf{X'}$ by adding random Gaussian noise to $\mathbf{X}$ and selects the candidate that minimizes the loss $\mathcal{L}(g(\mathbf{X}'),\mathbf{\hat{Y}}')$.

\subsubsection{FGSM~\cite{goodfellow2014explaining}} This method performs a one-step gradient update on $\mathbf{X}$ 
to find $\mathbf{X'}$ 
such that $g(\mathbf{X'}) \approx \mathbf{\hat{Y}'}$. The update formula for FGSM is given by 
$
\mathbf{X}^{'}=\operatorname{Clip}_{\mathbf{X},\epsilon}\{\mathbf{X}-\alpha \operatorname{sign} (\nabla_{\mathbf{X}}\mathcal{L}(g(\mathbf{X}),\mathbf{\hat{Y}}')\},
$
where $\alpha$ controls the perturbation magnitude, and the clipping function 
$\operatorname{Clip}_{\mathbf{X},\epsilon}(\cdot)$ is used to ensure that $\mathbf{X}^{'(i)}$ remains within the $\epsilon$-ball centered at $\mathbf{X}$. 

\subsubsection{ALA~\cite{ruan2023vulnerability}} 
This method performs untarget adversarial attack using multi-step gradient updates with Adam optimizer~\cite{kingma2014adam} to manipulate 
renewable energy forecasts. 
We adapt {ALA} 
to minimize the loss $\mathcal{L}(g(\mathbf{X}'),\mathbf{\hat{Y}}')$. At each epoch 
$i$,
$
\mathbf{X}^{'(i+1)} = \operatorname{Clip}_{\mathbf{X},\epsilon}\{\mathbf{X}^{'(i)} - \alpha \cdot \frac{\hat{m}_i}{\sqrt{\hat{v}_i} + \epsilon}\}
$, where $\mathbf{X^{'(0)}}=\mathbf{X}$. Here, $\hat{m}_i = \frac{\beta_1 m_{i-1} + (1 - \beta_1) \nabla \ell_i}{1 - \beta_1^i}$ and $\hat{v}_i = \frac{\beta_2 v_{i-1} + (1 - \beta_2) \nabla \ell_i^2}{1 - \beta_2^i}$ and 
$\nabla \ell_i$ is the gradient of the loss $\mathcal{L}(g(\mathbf{X}'^{(i)}),\mathbf{\hat{Y}}')$.

\subsubsection{TAAOWPF~\cite{heinrich2024targeted}}
This method applies projected gradient descent (PGD), which is an iterative extension of FGSM, to manipulate multi-step forecasts of total wind power production. 
We adapt {TAAOWPF} as a PGD method that perturbs $\mathbf{X}$ to minimize the loss $\mathcal{L}(g(\mathbf{X}'),\mathbf{\hat{Y}}')$. At each iterative step $i$,: 
$
\mathbf{X}^{'(i+1)}=\operatorname{Clip}_{\mathbf{X}, \epsilon}\{\mathbf{X}^{'(i)}-\alpha \operatorname{sign} (\nabla_{\mathbf{X}^{'(i)}}\mathcal{L}((g(\mathbf{X}'^{(i)}),\mathbf{\hat{Y}}')))$.

\subsubsection{STPGD~\cite{liu2022practical}} 
This method performs untargeted adversarial attacks on traffic prediction models by perturbing node features in a traffic network. Instead of perturbing all the nodes, it identifies a set of high-saliency victim nodes using time-dependent node saliency (TDNS) scores and applies PGD perturbations only to those selected nodes. To adapt
{STPGD} to our problem, we first calculate the saliency score for each location $(i,j)$ as $\|\operatorname{ReLU}(\nabla_{\mathbf{X}_{ij}}\mathcal{L}(g(\mathbf{X}_{ij}),\mathbf{\hat{Y}'}_{ij}))\|_2$. The top-$k$ locations with highest saliency scores are chosen as the victim locations. Perturbation is then performed at the chosen locations using PGD:
$
\mathbf{X}^{'(i+1)}=\operatorname{Clip}_{\mathbf{X}, \epsilon}\{\mathbf{X}^{'(i)}-\alpha \operatorname{sign} (\nabla_{\mathbf{X}^{'(i)}}\mathcal{L}((g(\mathbf{X}'^{(i)}),\mathbf{\hat{Y}}'))\cdot \mathbf{V}\},
$
where $\mathbf{V} \in \{0,1\}^{(r \times s)\times(r \times s)}$ is a diagonal matrix indicating whether $(i,j)$ is a victim location.



\subsection{Adversarial Target Construction}
\label{sec:Construct Adversarial Target}


Given the original predictor $\mathbf{X} \in \mathbb{R}^{(\alpha+1)\times r \times c}$ and a DLWF model $g$, let $\mathbf{\hat{Y}}=g(\mathbf{X}) \in \mathbb{R}^{\beta \times r \times c}$ be the original forecast.
We construct the adversarial target $\mathbf{\hat{Y}}'$ from the original forecast $\mathbf{\hat{Y}}$ as follows. First, we select a target location $(i_c,j_c) \in \{1,...,r\}\times \{1,...,c\}$ and a time step $\tau_c \in [5,6]$ for applying the perturbation. The adversarial target value is then randomly sampled from a predefined range as shown in Table~\ref{table:adversarial_target_construction}. If the original forecast value is extreme, the target is set to be non-extreme; otherwise, it is converted to an extreme value.

To ensure the adversarial target is geospatio-temporally smooth, we also perturb its adjacent time steps and neighboring locations as follows:
\begin{align}
\hat{Y}'_{\tau i_cj_c} &= 
    \hat{Y}_{\tau i_cj_c}+\delta_{\hat{Y}_{\tau_c i_cj_c}} e^{-\frac{(\tau -\tau_c)^2}{\sigma_{\tau}^2}} \nonumber\\
\hat{Y}'_{\tau i j} &= 
    \hat{Y}_{\tau ij}+\delta_{\hat{Y}_{\tau i_c j_c}}e^{-\frac{d_{(i_c j_c),(ij)}}{\sigma_d}} \nonumber
\end{align}
where 
$\delta_{\hat{Y}_{\tau i_c j_c}}=\hat{Y}'_{\tau i_c j_c}-\hat{Y}_{\tau i_c j_c}$, 
$\sigma_{\tau}$ and $\sigma_d$ control the smoothness of the perturbation along the temporal and spatial dimensions, respectively, 
and $d_{(i_c,j_c),(i,j)} = \sqrt{(i-i_c)^2+(j-j_c)^2}$ measures the distance between two locations. 

\begin{table*}[t]
\centering
\caption{Configuration settings for adversarial target construction. 
\textbf{(i)} For {NLDAS} data, the perturbed location $(i,j)$ is selected from ten major U.S. cities. 
Perturbations are applied only at the selected cities. 
\textbf{(ii)} For {ERA5} data, the perturbed location $(i,j)$ is randomly selected. Perturbations are also applied to its neighbors, defined as its eight adjacent grid cells. 
For all datasets, adversarial targets are clipped to remain within the specified ranges, where $Y_{\cdot\%}$ is the percentile value of $Y$. 
}
\label{table:adversarial_target_construction}
\begin{tabular}{llcll}
\toprule
Dataset & Variable
&Extreme value
& Range of adversarial target 
& Range of adversarial target  \\
& (unit) & threshold ($Y^*$) & $\hat{Y}'_{\tau i j}$ when $\hat{Y}_{\tau ij}< Y^*$ & $\hat{Y}'_{\tau i j}$ when $\hat{Y}_{\tau ij}\ge Y^*$ \\
\midrule

\textit{NLDAS} & {APCPSFC (kg/m$^2$)} 
& $Y_{95\%}$
& $ [Y_{95\%}, Y_{max}]$
& $ [Y_{min}, Y_{25\%}]$
\\

\textit{NLDAS} & {TMP2M ($^\circ$K)} 
& $Y_{50\%}$
& 
$[\hat{Y}_{\tau i j} + 9, \hat{Y}_{\tau i j} + 10]$

& 
$[\hat{Y}_{\tau i j} - 10, \hat{Y}_{\tau i j} - 9]$
\\

\textit{ERA5} & {T2M ($^\circ$K)} 
& $Y_{50\%}$ 
& 
$[\hat{Y}_{\tau i j} + 9, \hat{Y}_{\tau i j} + 10]$
& 
$[\hat{Y}_{\tau i j} - 10, \hat{Y}_{\tau i j} - 9]$
\\

\textit{ERA5} & {TISR (J/m$^2$)} 
& $Y_{50\%}$ 
& 
$[\hat{Y}_{\tau i j} + 3.10 \times 10^6, \hat{Y}_{\tau i j} + 3.86 \times 10^6]$
& 
$ [ \hat{Y}_{\tau i j} - 3.86 \times 10^6, \hat{Y}_{\tau i j} - 3.1 \times 10^6]$
\\
\bottomrule
\end{tabular}
\end{table*}

The adversarial targets are constructed on the test set of each dataset. For the {NLDAS} datasets, each test set contains 96 samples. We select 10 major cities as targets for the adversarial attack. For each test sample, adversarial targets are generated for all 10 cities, resulting in 960 samples per run ($96 \times 10$). This procedure is repeated five times with different target values, producing a total of 4,800 test samples for each {NLDAS} dataset. For the {ERA5} datasets, each test set contains 60 samples. Similarly, we generate 10 adversarial targets per sample and repeat the process five times, yielding a total of 3,000 test samples for each {ERA5} dataset. 

\subsection{Evaluation Metrics}

We evaluate the performance of the different methods using the same criteria defined in Section~\ref{Preliminaries}. For \textit{\textbf{closeness}}, we measure the $\ell_1$ distance between the original and adversarial inputs, as shown in Equation~\eqref{eqn:closeness}. For \textit{\textbf{faithfulness}}, we compute both the in-target and out-target absolute errors of the adversarial forecasts, as defined in Equations~\eqref{eqn:in-ae} and \eqref{eqn:out-ae}. Finally, \textit{\textbf{geospatio-temporal realisticness}} is evaluated based on the differences in spatial and temporal autocorrelations ($R_S$ and $R_T$) between the original and adversarial inputs, as defined in Equations~\eqref{eq:spatial_smoothness} and \eqref{eq:temporal_smoothness}. For each criterion, we first compute the mean value for each run and then report the average and standard deviation across all five runs. 

\subsection{Hyperparameter Selection}


Following the approach in~\cite{heinrich2024targeted}, the learning rates for all attack methods are set to $\frac{2\epsilon}{N}$, where $\epsilon$ denotes the clipping threshold and $N$ is the number of attack iterations. As shown in Figure~\ref{fig:baselines_results}, increasing $\epsilon$ consistently improves {In-AE} for the baseline attack methods. However, both {Out-AE} and {closeness} deteriorate as $\epsilon$ grows. 
Since our objective is to evaluate the effectiveness of localized targeted attacks, we set $\epsilon = 2.5$ for all methods (including {FABLE}) to achieve a reasonable balance between minimizing {In-AE} and maintaining acceptable levels of {Out-AE} and {closeness}. 
We set the number of iterations $N$ sufficiently large (between $1000$ to $2000$) to ensure convergence of the learning process, stopping when the loss stabilizes to four decimal places. 

For {STPGD}, the number of salient locations is set to $990$ for the {NLDAS} dataset and $1536$ for the {ERA5} dataset to balance attack effectiveness and computational cost. For FABLE, the regularization strength is set to $\lambda = 10^{-6}$. Based on the analysis in Figure~\ref{fig:perturbation_schema}, we assign weights of $0.8$ to the \texttt{LLH}, \texttt{LHL}, and \texttt{HLL} sub-bands; $0.5$ to the \texttt{LHH}, \texttt{HLH}, and \texttt{HHL} sub-bands; and $0.2$ to the \texttt{HHH} sub-band across all datasets. These weights are chosen to balance faithfulness with closeness and geospatio-temporal realisticness criteria.

\section{Experimental Results}
\label{sec:Perfprmance Results}

\begin{table*}[t!]
\footnotesize
\centering
\setlength{\tabcolsep}{1.3pt} 
\caption{Performance comparison on different datasets with adapted baselines. 
For each metric, \textcolor{deepred}{red} entries indicate the best performance, and \textcolor{deepblue}{blue} entries indicate the second-best performance. Results are reported based on standardized data. On NLDAS datasets, {CLCRN}~\cite{lin2022conditional} is the attacked DLWF model. On ERA5 datasets,  {FourCastNet}~\cite{pathak2022fourcastnet} is the attacked DLWF model. 
}
\label{CLCRN-NLDAS}
\begin{tabular}{l|ccccc|ccccc}
\hline
& \multicolumn{5}{c|}{NLDAS-TMP2M} 
& \multicolumn{5}{c}{ERA5-T2M}  \\
Method 
& in-AE $\downarrow$
& out-AE $\downarrow$ 
& Closeness $\downarrow$ 
& $R_S$ $\downarrow$ 
& $R_T$ $\downarrow$

& in-AE $\downarrow$
& out-AE $\downarrow$ 
& Closeness $\downarrow$ 
& $R_S$ $\downarrow$ 
& $R_T$ $\downarrow$\\

\hline
\textbf{FABLE} 
& \begin{tabular}[c]{@{}c@{}}{}\textcolor{blue}{0.3745}\\($\pm$0.0006)\end{tabular} 
& \begin{tabular}[c]{@{}c@{}}{}\textcolor{red}{1.3139}\\($\pm$0.0042)\end{tabular}  
& \begin{tabular}[c]{@{}c@{}}{}\textcolor{red}{0.0140}\\($\pm$0.0000)\end{tabular}   
& \begin{tabular}[c]{@{}c@{}}{}0.0624\\($\pm$0.0004)\end{tabular}   
& \begin{tabular}[c]{@{}c@{}}{}\textcolor{red}{0.0286}\\($\pm$0.0001)\end{tabular} 

& \begin{tabular}[c]{@{}c@{}}{}3.0466\\($\pm$0.0374)\end{tabular} 
& \begin{tabular}[c]{@{}c@{}}{}\textcolor{blue}{21.5905}\\($\pm$0.1184)\end{tabular} 
& \begin{tabular}[c]{@{}c@{}}{}\textcolor{red}{0.0087}\\($\pm$0.0000)\end{tabular} 
& \begin{tabular}[c]{@{}c@{}}{}\textcolor{red}{0.0018}\\($\pm$0.0000)\end{tabular}  
& \begin{tabular}[c]{@{}c@{}}{}\textcolor{red}{0.0289}\\($\pm$0.0002)\end{tabular} \\

ALA 
& \begin{tabular}[c]{@{}c@{}}{}\textcolor{red}{0.3631}\\($\pm$0.0006)\end{tabular} 
& \begin{tabular}[c]{@{}c@{}}{}\textcolor{blue}{2.3987}\\($\pm$0.0240)\end{tabular} 
& \begin{tabular}[c]{@{}c@{}}{}\textcolor{blue}{0.0149}\\($\pm$0.0000)\end{tabular} 
& \begin{tabular}[c]{@{}c@{}}{}\textcolor{red}{0.0462}\\($\pm$0.0002)\end{tabular} 
& \begin{tabular}[c]{@{}c@{}}{}\textcolor{blue}{0.0347}\\($\pm$0.0001)\end{tabular}

& \begin{tabular}[c]{@{}c@{}}{}\textcolor{red}{0.6319}\\($\pm$0.0363)\end{tabular} 
& \begin{tabular}[c]{@{}c@{}}{}\textcolor{red}{14.4565}\\($\pm$2.7890)\end{tabular} 
& \begin{tabular}[c]{@{}c@{}}{}\textcolor{blue}{0.0207}\\($\pm$0.0012)\end{tabular} 
& \begin{tabular}[c]{@{}c@{}}{}\textcolor{blue}{0.0019}\\($\pm$0.0001)\end{tabular} 
& \begin{tabular}[c]{@{}c@{}}{}0.0738\\($\pm$0.0032)\end{tabular} \\

TAAOWPF
& \begin{tabular}[c]{@{}c@{}}{}0.4114\\($\pm$0.0006)\end{tabular} 
& \begin{tabular}[c]{@{}c@{}}{}9.7471\\($\pm$0.0123)\end{tabular} 
& \begin{tabular}[c]{@{}c@{}}{}0.0348\\($\pm$0.0001)\end{tabular} 
& \begin{tabular}[c]{@{}c@{}}{}0.0661\\($\pm$0.0002)\end{tabular} 
& \begin{tabular}[c]{@{}c@{}}{}0.1043\\($\pm$0.0002)\end{tabular} 

& \begin{tabular}[c]{@{}c@{}}{}\textcolor{blue}{0.6881}\\($\pm$0.0173)\end{tabular} 
& \begin{tabular}[c]{@{}c@{}}{}36.6020\\($\pm$0.0182)\end{tabular} 
& \begin{tabular}[c]{@{}c@{}}{}0.0210\\($\pm$0.0000)\end{tabular} 
& \begin{tabular}[c]{@{}c@{}}{}\textcolor{blue}{0.0019}\\($\pm$0.0000)\end{tabular} 
& \begin{tabular}[c]{@{}c@{}}{}0.0790\\($\pm$0.0001)\end{tabular} \\

STPGD 
& \begin{tabular}[c]{@{}c@{}}{}0.4238\\($\pm$0.0041)\end{tabular} 
& \begin{tabular}[c]{@{}c@{}}{}8.0782\\($\pm$0.1243)\end{tabular} 
& \begin{tabular}[c]{@{}c@{}}{}0.0273\\($\pm$0.0008)\end{tabular} 
& \begin{tabular}[c]{@{}c@{}}{}\textcolor{blue}{0.0616}\\($\pm$0.0034)\end{tabular} 
& \begin{tabular}[c]{@{}c@{}}{}0.0877\\($\pm$0.0007)\end{tabular} 

& \begin{tabular}[c]{@{}c@{}}{}6.0915\\($\pm$0.1642)\end{tabular} 
& \begin{tabular}[c]{@{}c@{}}{}52.3506\\($\pm$0.5439)\end{tabular} 
& \begin{tabular}[c]{@{}c@{}}{}0.0226\\($\pm$0.0004)\end{tabular} 
& \begin{tabular}[c]{@{}c@{}}{}0.0023\\($\pm$0.0001)\end{tabular} 
& \begin{tabular}[c]{@{}c@{}}{}0.0597\\($\pm$0.0007)\end{tabular} \\

Noise Attack 
& \begin{tabular}[c]{@{}c@{}}{}2.2152\\($\pm$0.0058)\end{tabular} 
& \begin{tabular}[c]{@{}c@{}}{}287.9209\\($\pm$0.1526)\end{tabular} 
& \begin{tabular}[c]{@{}c@{}}{}0.0622\\($\pm$0.0000)\end{tabular} 
& \begin{tabular}[c]{@{}c@{}}{}0.1208\\($\pm$0.0001)\end{tabular} 
& \begin{tabular}[c]{@{}c@{}}{}0.1536\\($\pm$0.0000)\end{tabular} 

& \begin{tabular}[c]{@{}c@{}}{}37.6243\\($\pm$0.2790)\end{tabular} 
& \begin{tabular}[c]{@{}c@{}}{}6237.1805\\($\pm$2.1947)\end{tabular} 
& \begin{tabular}[c]{@{}c@{}}{}0.1709\\($\pm$0.0001)\end{tabular} 
& \begin{tabular}[c]{@{}c@{}}{}0.0435\\($\pm$0.0000)\end{tabular} 
& \begin{tabular}[c]{@{}c@{}}{}0.2253\\($\pm$0.0001)\end{tabular} \\

FGSM 
& \begin{tabular}[c]{@{}c@{}}{}0.7506\\($\pm$0.0011)\end{tabular} 
& \begin{tabular}[c]{@{}c@{}}{}1496.2828\\($\pm$0.2437)\end{tabular} 
& \begin{tabular}[c]{@{}c@{}}{}0.1000\\($\pm$0.0000)\end{tabular} 
& \begin{tabular}[c]{@{}c@{}}{}0.1117\\($\pm$0.0000)\end{tabular} 
& \begin{tabular}[c]{@{}c@{}}{}0.1284\\($\pm$0.0000)\end{tabular} 

& \begin{tabular}[c]{@{}c@{}}{}21.2528\\($\pm$0.3596)\end{tabular} 
& \begin{tabular}[c]{@{}c@{}}{}3003.4403\\($\pm$2.1556)\end{tabular} 
& \begin{tabular}[c]{@{}c@{}}{}0.1000\\($\pm$0.0000)\end{tabular} 
& \begin{tabular}[c]{@{}c@{}}{}0.0083\\($\pm$0.0000)\end{tabular} 
& \begin{tabular}[c]{@{}c@{}}{}0.1745\\($\pm$0.0002)\end{tabular} \\

\hline
& \multicolumn{5}{c|}{NLDAS-APCPSFC } 
& \multicolumn{5}{c}{ERA5-TISR}  \\
Method 
& in-AE $\downarrow$
& out-AE $\downarrow$ 
& Closeness $\downarrow$ 
& $R_S$ $\downarrow$ 
& $R_T$ $\downarrow$

& in-AE $\downarrow$
& out-AE $\downarrow$ 
& Closeness $\downarrow$ 
& $R_S$ $\downarrow$ 
& $R_T$ $\downarrow$\\
\hline
\textbf{FABLE} 
 
& \begin{tabular}[c]{@{}c@{}}{}69.3732\\($\pm$0.8647)\end{tabular} 
& \begin{tabular}[c]{@{}c@{}}{}\textcolor{red}{60.9266}\\($\pm$0.5791)\end{tabular} 
& \begin{tabular}[c]{@{}c@{}}{}\textcolor{red}{0.0454}\\($\pm$0.0001)\end{tabular} 
& \begin{tabular}[c]{@{}c@{}}{}\textcolor{blue}{0.0185}\\($\pm$0.0001)\end{tabular} 
& \begin{tabular}[c]{@{}c@{}}{}\textcolor{red}{0.0267}\\($\pm$0.0000)\end{tabular} 

& \begin{tabular}[c]{@{}c@{}}{}3.8683\\($\pm$0.0417)\end{tabular} 
& \begin{tabular}[c]{@{}c@{}}{}\textcolor{red}{34.1373}\\($\pm$0.2461)\end{tabular} 
& \begin{tabular}[c]{@{}c@{}}{}\textcolor{red}{0.0024}\\($\pm$0.0000)\end{tabular} 
& \begin{tabular}[c]{@{}c@{}}{}\textcolor{blue}{0.0003}\\($\pm$0.0000)\end{tabular} 
& \begin{tabular}[c]{@{}c@{}}{}0.0222\\($\pm$0.0000)\end{tabular} \\

ALA 
& \begin{tabular}[c]{@{}c@{}}{}\textcolor{blue}{63.0447}\\($\pm$0.8311)\end{tabular} 
& \begin{tabular}[c]{@{}c@{}}{}146.8263\\($\pm$1.5545)\end{tabular} 
& \begin{tabular}[c]{@{}c@{}}{}0.1258\\($\pm$0.0005)\end{tabular} 
& \begin{tabular}[c]{@{}c@{}}{}0.0347\\($\pm$0.0001)\end{tabular} 
& \begin{tabular}[c]{@{}c@{}}{}0.0521\\($\pm$0.0001)\end{tabular} 

& \begin{tabular}[c]{@{}c@{}}{}\textcolor{red}{1.5210}\\($\pm$0.0249)\end{tabular} 
& \begin{tabular}[c]{@{}c@{}}{}\textcolor{blue}{40.4142}\\($\pm$0.1138)\end{tabular} 
& \begin{tabular}[c]{@{}c@{}}{}\textcolor{blue}{0.0050}\\($\pm$0.0001)\end{tabular} 
& \begin{tabular}[c]{@{}c@{}}{}0.0004\\($\pm$0.0000)\end{tabular} 
& \begin{tabular}[c]{@{}c@{}}{}\textcolor{blue}{0.0214}\\($\pm$0.0000)\end{tabular} \\

TAAOWPF 
& \begin{tabular}[c]{@{}c@{}}{}\textcolor{red}{62.9704}\\($\pm$0.8302)\end{tabular} 
& \begin{tabular}[c]{@{}c@{}}{}143.9593\\($\pm$1.5400)\end{tabular} 
& \begin{tabular}[c]{@{}c@{}}{}0.2670\\($\pm$0.0013)\end{tabular} 
& \begin{tabular}[c]{@{}c@{}}{}0.0763\\($\pm$0.0003)\end{tabular} 
& \begin{tabular}[c]{@{}c@{}}{}0.0801\\($\pm$0.0002)\end{tabular} 

& \begin{tabular}[c]{@{}c@{}}{}\textcolor{blue}{1.6196}\\($\pm$0.0235)\end{tabular} 
& \begin{tabular}[c]{@{}c@{}}{}84.5149\\($\pm$0.1082)\end{tabular} 
& \begin{tabular}[c]{@{}c@{}}{}0.0128\\($\pm$0.0001)\end{tabular} 
& \begin{tabular}[c]{@{}c@{}}{}0.0004\\($\pm$0.0000)\end{tabular} 
& \begin{tabular}[c]{@{}c@{}}{}0.0245\\($\pm$0.0000)\end{tabular} \\

STPGD 

& \begin{tabular}[c]{@{}c@{}}{}64.2446\\($\pm$0.7891)\end{tabular} 
& \begin{tabular}[c]{@{}c@{}}{}\textcolor{blue}{128.0584}\\($\pm$5.6831)\end{tabular} 
& \begin{tabular}[c]{@{}c@{}}{}0.2264\\($\pm$0.0071)\end{tabular} 
& \begin{tabular}[c]{@{}c@{}}{}0.0763\\($\pm$0.0007)\end{tabular} 
& \begin{tabular}[c]{@{}c@{}}{}0.0712\\($\pm$0.0015)\end{tabular} 

& \begin{tabular}[c]{@{}c@{}}{}7.7767\\($\pm$0.3210)\end{tabular} 
& \begin{tabular}[c]{@{}c@{}}{}72.0045\\($\pm$0.3693)\end{tabular} 
& \begin{tabular}[c]{@{}c@{}}{}0.0065\\($\pm$0.0001)\end{tabular} 
& \begin{tabular}[c]{@{}c@{}}{}\textcolor{red}{0.0002}\\($\pm$0.0000)\end{tabular} 
& \begin{tabular}[c]{@{}c@{}}{}\textcolor{red}{0.0176}\\($\pm$0.0000)\end{tabular} \\

Noise Attack 
& \begin{tabular}[c]{@{}c@{}}{}69.0257\\($\pm$0.8436)\end{tabular} 
& \begin{tabular}[c]{@{}c@{}}{}5143.4710\\($\pm$4.1011)\end{tabular} 
& \begin{tabular}[c]{@{}c@{}}{}0.9273\\($\pm$0.0002)\end{tabular} 
& \begin{tabular}[c]{@{}c@{}}{}0.3478\\($\pm$0.0002)\end{tabular} 
& \begin{tabular}[c]{@{}c@{}}{}0.1643\\($\pm$0.0000)\end{tabular} 

& \begin{tabular}[c]{@{}c@{}}{}61.7027\\($\pm$0.1900)\end{tabular} 
& \begin{tabular}[c]{@{}c@{}}{}13700.4353\\($\pm$2.7649)\end{tabular} 
& \begin{tabular}[c]{@{}c@{}}{}0.1535\\($\pm$0.0000)\end{tabular} 
& \begin{tabular}[c]{@{}c@{}}{}0.0428\\($\pm$0.0000)\end{tabular} 
& \begin{tabular}[c]{@{}c@{}}{}0.0792\\($\pm$0.0000)\end{tabular} \\
FGSM 
& \begin{tabular}[c]{@{}c@{}}{}70.7596\\($\pm$0.8678)\end{tabular} 
& \begin{tabular}[c]{@{}c@{}}{}330.1197\\($\pm$0.9466)\end{tabular} 
& \begin{tabular}[c]{@{}c@{}}{}\textcolor{blue}{0.0753}\\($\pm$0.0000)\end{tabular} 
& \begin{tabular}[c]{@{}c@{}}{}\textcolor{red}{0.0114}\\($\pm$0.0000)\end{tabular} 
& \begin{tabular}[c]{@{}c@{}}{}\textcolor{blue}{0.0389}\\($\pm$0.0000)\end{tabular} 

& \begin{tabular}[c]{@{}c@{}}{}35.4651\\($\pm$0.8962)\end{tabular} 
& \begin{tabular}[c]{@{}c@{}}{}6334.0187\\($\pm$913.8126)\end{tabular} 
& \begin{tabular}[c]{@{}c@{}}{}0.0818\\($\pm$0.0001)\end{tabular} 
& \begin{tabular}[c]{@{}c@{}}{}0.0074\\($\pm$0.0002)\end{tabular} 
& \begin{tabular}[c]{@{}c@{}}{}0.0508\\($\pm$0.0002)\end{tabular} \\
\bottomrule
\end{tabular}
\end{table*}

\subsection{Performance Comparison} 


Table \ref{CLCRN-NLDAS} reports the performance comparison across four datasets, including NLDAS-TMP2M, NLDAS-APCPSFC, ERA5-T2M, and ERA5-TISR. Overall, FABLE demonstrates a favorable balance among attack faithfulness, closeness, and geospatio-temporal realisticness, as summarized in Table~\ref{tab:overall_ranking}.

\begin{table}[t]    
\centering
\caption{Overall ranking of attack methods across all datasets and  metrics. We count the number of first and second places for each method
across 4 datasets and 5 metrics.}
\label{tab:overall_ranking}
\setlength{\tabcolsep}{3pt}
\begin{tabular}{lcccccc}
\toprule
 & FABLE & ALA & STPGD & TAAOWPF & FGSM & Noise Attack \\
\midrule
\#First  & 11 & 5 & 2 & 1 & 1 & 0 \\
\#Second & 4  & 9 & 3 & 2 & 2 & 0 \\
Rank     & 1  & 2 & 3 & 4 & 4 & 6 \\
\bottomrule
\end{tabular}
\end{table}

In terms of {in-AE}, which measures the attack faithfulness within the targeted localized region, FABLE does not always achieve the lowest error compared with some gradient-based methods including ALA, TAAOWPF, and STPGD. However, these baselines typically incur larger {out-AE}, indicating that their perturbations unintentionally affect forecasts outside the targeted region. In contrast, FABLE consistently maintains smaller {out-AE}, suggesting that the perturbations remain localized and better preserve the original global forecast structure, thereby achieving improved overall faithfulness. On the other hand, Noise Attack and FGSM achieve the worst {in-AE} and {out-AE}. The former simply adds random noise without any optimization guidance, while the latter, being a one-step gradient-based attack, remains far from converging to the desired target.

For the {closeness} metric, FABLE achieves the best performance across all datasets, indicating that the adversarial inputs generated by FABLE are most similar to the original inputs and are therefore stealthier than those produced by other methods. Among the regular gradient-based attack methods, ALA—built on the Adam optimization framework, which adaptively adjusts perturbation magnitudes at each iteration—achieves better {closeness} than TAAOWPF and STPGD, both of which are based on the projected gradient descent (PGD) framework. Moreover, because STPGD constrains perturbations to a subset of victim locations, it generally attains better {closeness} than TAAOWPF across most datasets. 
It is not surprising that Noise Attack and FGSM achieve the relatively worst {closeness}, since the perturbations they introduce are either unguided or insufficiently refined. An exception occurs for the NLDAS-APCPSFC dataset, where the precipitation variable is highly skewed. In this case, regular gradient-based methods often require larger perturbations to shift forecasts between extreme and non-extreme values, and vice versa, compared to FGSM.

With respect to geospatio-temporal realisticness, FABLE achieves top-tier performance. In terms of spatial realisticness ($R_S$), FABLE attains the best overall performance, achieving one best and two second-best results, followed closely by ALA with one best and one second-best result. For temporal realisticness ($R_T$), FABLE also demonstrates superior performance, obtaining three best results, while ALA ranks next with two second-best results. The strong performance of FABLE can be attributed to its frequency-aware perturbation strategy. Unlike other baseline methods, which directly perturb the original forecasts, the selective perturbation of high-frequency coefficients that primarily influence localized variations enable FABLE to preserve the low-frequency structures that capture global spatio-temporal patterns.

\begin{table}[t!]
\footnotesize
\centering
\setlength{\tabcolsep}{1pt}
\caption{Runtime and memory usage comparison of different adversarial attack methods. 
}
\label{Runtime and Memory Usage}
\begin{tabular}{|l|c|c|c|c|}
\hline
Method & \multicolumn{2}{c|}{CLCRN (NLDAS-TMP2M)} & \multicolumn{2}{c|}{FourCastNet (ERA5-T2M)} \\
\cline{2-5}
& \begin{tabular}[c]{@{}c@{}}Total \\Runtime (s) \end{tabular}
& \begin{tabular}[c]{@{}c@{}}Peak \\Memory (MB)\end{tabular}
& \begin{tabular}[c]{@{}c@{}}Total \\Runtime (s) \end{tabular}
& \begin{tabular}[c]{@{}c@{}}Peak \\Memory (MB)\end{tabular} \\
\hline
\textbf{FABLE} & 345.87 & 77.81  & 112.53 & 477.57 \\
ALA            & 482.13 & 93.40  & 150.49 & 501.83 \\
TAAOWPF        & 498.58 & 76.14  & 149.74 & 483.83 \\
STPGD          & 490.24 & 96.30  & 158.25 & 502.07 \\
Noise Attack   & 143.71 & 70.67  & 40.43  & 244.09 \\
FGSM           & 2.11   & 76.12  & 1.07   & 493.01 \\
\hline
\end{tabular}
\end{table}

\subsection{Runtime and Memory Usage}

Since adversarial attacks may incur significant computational overhead, we evaluate the runtime efficiency and memory requirements of each method.
Table~\ref{Runtime and Memory Usage} reports the wall-clock runtime and peak GPU memory usage of {{FABLE}} and the baselines under two setups: CLCRN on NLDAS-TMP2M and FourCastNet on ERA5-T2M. All experiments were conducted for 500 attack iteration steps, except for FGSM, on a single NVIDIA L4 GPU (22.5 GB), with batch sizes of 48 (CLCRN) and 30 (FourCastNet). To avoid cross-batch memory interference, we run each batch independently. As presented in Table~\ref{Runtime and Memory Usage}, the non-learning baseline (Noise Attack) and one-step learning baseline (FGSM) have the shortest runtime, which is expected given their simplicity. However, this efficiency comes at the cost of weaker performance, as shown in Table~\ref{CLCRN-NLDAS}. Compared to other iterative learning methods (ALA, TAAOWPF, STPGD), {{FABLE}} demonstrates better runtime and lower peak memory usage. This efficiency benefits from excluding the perturbations on the \texttt{LLL} frequency sub-band, thereby reducing the number of parameters to be estimated.

\begin{table}[t!]
\footnotesize
\centering
\setlength{\tabcolsep}{4pt}
\caption{Detection performance of different anomaly detection techniques against adversarial inputs generated by various attack methods on the ERA5-T2M dataset using FourCastNet as the DLWF model.} 
\label{tab:adv_detection_results}
\begin{tabular}{|l|ccc|ccc|ccc|}
\hline
\multirow{2}{*}{\textbf{Method}} 
& \multicolumn{3}{c|}{\textbf{PCA-based}} 
& \multicolumn{3}{c|}{\textbf{IF}} 
& \multicolumn{3}{c|}{\textbf{LOF}} \\
\cline{2-10}
& Prec. & Rec. & F1 & Prec. & Rec. & F1 & Prec. & Rec. & F1 \\
\hline
FABLE        &0.66  &0.07  &0.12  &0.65      &0.95      &0.77  &0.64      &0.53      &0.58  \\
ALA          &0.73      &0.09      &0.16  &0.66      &0.97      &0.78  &0.66      &0.58      &0.62  \\
TAAOWPF      &0.78      &0.12      &0.21  &0.66      &0.98      &0.79  &0.67      &0.60      &0.63  \\
STPGD        &0.81      &0.14      &0.25  &0.65      &0.94      &0.77  &0.68      &0.62      &0.65  \\
Noise Attack &0.97      &1.00      &0.98  &0.67      &1.00      &0.80  &0.77      &1.00      &0.87  \\
FGSM         &0.91      &0.36      &0.51  &0.67      &1.00      &0.80  &0.76      &0.94      &0.84 \\
\hline
\end{tabular}
\end{table}

\subsection{Defense Evaluation}
In this experiment, our goal is to assess whether the adversarial inputs generated by different attack methods can be easily detected using three anomaly detection techniques: PCA-based~\cite{abdi2010principal}, Isolation Forest (IF)~\cite{liu2008isolation}, and Local Outlier Factor (LOF)~\cite{breunig2000lof}. 
Table~\ref{tab:adv_detection_results} presents the precision, recall, and F1-score of 
the anomaly detection methods when applied to the adversarial inputs generated by {{FABLE}} and other baseline attack methods. To enhance the detection capability, we follow prior work in~\cite{yao2023regularizing} and first decompose the  inputs into multi-scale wavelet components, from which statistical features are extracted for anomaly detection. Lower precision, recall, and F1-score indicate that adversarial samples are more “stealthy,” i.e., harder to distinguish from the clean samples. 

The results in Table~\ref{tab:adv_detection_results} show that {{FABLE}} consistently yields the lowest detection performance across all three anomaly detection techniques. This indicates that adversarial inputs generated by {{FABLE}} are more difficult to distinguish from non-adversarial inputs than those produced by the baseline methods. The results further suggest that detecting adversarial inputs requires more sophisticated techniques, such as Isolation Forest, rather than simpler methods such as PCA and LOF.


\subsection{Impact of Regularization Hyperparameter $\lambda$} 


Since the regularization hyperparameter $\lambda$ controls the trade-off among faithfulness, closeness, and realisticness of the attack (see Equation \eqref{FABLE_loss_function}), we analyze the impact of varying $\lambda$ on {{FABLE}}'s performance. As shown in Table~\ref{tab:lambda1_nldas}, increasing $\lambda$ consistently reduces $R_S$ and $R_T$, indicating improved geospatial–temporal realisticness. However, this improvement comes at the cost of degraded attack effectiveness, as reflected by the substantial increases in In-AE and Out-AE when $\lambda \ge 10^{-5}$. Meanwhile, closeness decreases monotonically with larger $\lambda$, suggesting that stronger regularization enforces smoother but less effective perturbations. These observations are consistent with our expectation since $\lambda$ constrains the magnitude of perturbations on wavelet coefficients. As $\lambda$ increases, the optimization increasingly suppresses perturbations on high-frequency wavelet coefficients, which in turn, 
produces smoother patterns that better preserve the intrinsic spatial and temporal correlations, leading to improved $R_S$ and $R_T$.




\begin{table}[t!]
\centering
\setlength{\tabcolsep}{1.5pt}
\caption{Impact of regularization strength $\lambda$ on FABLE's performance on CLCRN model for TMP2M dataset.} 
\label{tab:lambda1_nldas}
\begin{tabular}{|c|ccccc|}
\hline
\textbf{$\lambda$} & In-AE $\downarrow$ & Out-AE $\downarrow$ & Closeness $\downarrow$ & \textbf{$R_S$} $\downarrow$ & \textbf{$R_T$} $\downarrow$ \\
\hline
0       & 0.3786 & 1.3597 & 0.0301 & 0.1457 & 0.0495 \\
1e-6    & 0.3934 & 1.3561 & 0.0135 & 0.0639 & 0.0260 \\
1e-5    & 0.4591 & 1.1682 & 0.0025 & 0.0075 & 0.0064 \\
1e-4    & 0.5749 & 1.4388 & 0.0006 & 0.0013 & 0.0013 \\
1e-3    & 2.0501 & 4.2867 & 0.0004 & 0.0001 & 0.0011 \\
\hline
\end{tabular}
\end{table}

\subsection{Impact of Wavelet Bases} 

This section investigates how the choice of wavelet bases affects the attack behavior of {{FABLE}}. For this experiment, we use the Daubechies (db) wavelet, a family of orthogonal wavelets that includes Haar (db1) and other wavelet bases.  Table~\ref{tab:Wavelet Family Sensitivity Analysis} presents the results as the order of the \textit{Daubechies} wavelet increases from db1 to db6. While there is a slight decreasing trend in {In-AE} and a more noticeable increasing trend in {Out-AE}, varying the order of the Daubechies wavelet does not significantly affect closeness or geospatio-temporal realisticness of the adversarial inputs.

\begin{table}[t!]
\centering
\caption{Impact of varying order of Daubechies wavelet on FABLE's  performance on CLCRN model for TMP2M dataset.}
\label{tab:Wavelet Family Sensitivity Analysis}
\begin{tabular}{|l|ccccc|}
\hline
\textbf{} &  In-AE $\downarrow$ & Out-AE $\downarrow$ & Closeness $\downarrow$ & \textbf{$R_S$} $\downarrow$ & \textbf{$R_T$} $\downarrow$ \\
\hline
Haar (db1) & 0.3934 & 1.3561 & 0.0135 & 0.0639 & 0.0260 \\
db2  & 0.3902  & 1.6689  & 0.0129  & 0.0592  & 0.0247  \\
db4  & 0.3919  & 1.8371  & 0.0143  & 0.0659  & 0.0269  \\
db6  & 0.3910  & 1.8355  & 0.0135  & 0.0615  & 0.0266  \\
\hline
\end{tabular}
\end{table}




\section{Discussion and Implications}
\label{sec:Discussions and Ethics}

FABLE can potentially be misused to manipulate forecasts in ways that may deceive downstream decision-making systems. One example, illustrated in Figure~\ref{fig:FABLE_case_study}, involves perturbing precipitation forecasts in agriculturally important regions, which could influence financial decisions such as crop insurance or futures trading. 
That said, FABLE can also be used defensively for robustness testing. For example, by incorporating FABLE-generated samples with the true model forecasts into training, this can help improve model robustness. 

\section{Conclusion and future works}
\label{sec:Conclusion}

In this work, we explore adversarial attacks on DLWF models and highlight the key challenges of generating stealthy adversarial samples. We introduce metrics to quantify the spatiotemporal realisticness of adversarial samples and propose {{FABLE}}, a novel framework that perturbs wavelet-decomposed coefficients instead of the raw inputs. By focusing on higher-frequency components, this facilitates its generation of adversarial samples that satisfy the {realisticness}, {closeness}, and {faithfulness} criteria. 
Our work does not target a specific operational forecasting system. Instead, it aims to highlight potential vulnerabilities and motivate the development of more robust DLWF models, as well as security-aware practices across the forecasting pipeline. 

Future directions of this work include exploring higher-level decompositions and other families of wavelet bases for generating adversarial inputs, as well as extending {{FABLE}} to multivariate geospatio-temporal settings, where ensuring physical consistency across correlated variables remains a key challenge. Another direction is to investigate the transferability of adversarial inputs across different DLWF models and their impact on downstream applications.




\bibliographystyle{IEEEtran}
\bibliography{yue}

\clearpage

\appendices
\section{Haar Wavelet Transform}
\label{appendix:wavelet_transform}

Section \ref{sec:wavelet} presents the decomposition of a one-dimensional signal using the discrete wavelet transform with the Haar wavelet basis. In this section, we extend the discussion to the two-dimensional and three-dimensional cases.

\subsection{2-dimensional Haar Wavelet Decomposition}

For a two-dimensional data, $\mathbf{X} \in \mathbb{R}^{r \times c}$, its decomposition is performed in two steps. For brevity, we denote the approximation component of the decomposition as $L$ and its detail component as $H$. 

First, a one-dimensional decomposition is applied to each \emph{column} $k_3^*$ of $\mathbf{X}_{i,j}$ along the rows, producing intermediate low-frequency ($C^L_{k_2, k_3^*}$) and high-frequency ($C^H_{k_2, k_3^*}$) coefficients:
\begingroup
\begin{align}
C^L_{k_2, k_3^*} &= \frac{X_{2k_2-1, k_3^*} + X_{2k_2, k_3^*}}{\sqrt{2}}, 
\quad k_2 \in \{1, 2, \dots, r/2 \},\nonumber \\
C^H_{k_2, k_3^*} &= \frac{X_{2k_2-1, k_3^*} - X_{2k_2, k_3^*}}{\sqrt{2}},
\quad k_2 \in \{1, 2, \dots,  r/2 \}. \nonumber
\end{align}
\endgroup

Next, a secondary one-dimensional decomposition is applied to each \emph{row} of $C^L_{k_2, k_3^*}$ and $C^H_{k_2, k_3^*}$ along the columns, producing the final approximation ($C^{LL}_{k_2,k_3}$) and detail coefficients ($C^{LH}_{k_2,k_3}$, $C^{HL}_{k_2,k_3}$, and $C^{HH}_{k_2,k_3}$), where:
\begingroup\small
\begin{align}
C^{LL}_{k_2,k_3} &= \frac{C^L_{k_2, 2k_3 - 1} + C^L_{k_2, 2k_3}}{\sqrt{2}}, \quad
C^{LH}_{k_2,k_3} = \frac{C^L_{k_2, 2k_3 - 1} - C^L_{k_2, 2k_3}}{\sqrt{2}}, \nonumber\\
C^{HL}_{k_2,k_3} &= \frac{C^H_{k_2, 2k_3 - 1} + C^H_{k_2, 2k_3}}{\sqrt{2}}, \quad 
C^{HH}_{k_2,k_3} = \frac{C^H_{k_2, 2k_3 - 1} - C^H_{k_2, 2k_3}}{\sqrt{2}},\nonumber
\end{align}
\endgroup
where $k_2 \in \{1, 2, \dots, \lfloor r/2 \rfloor\}$ and $k_3 \in \{1, 2, \dots, \lfloor c/2 \rfloor\}$. 



\subsection{3-dimensional Haar Wavelet Decomposition}

The three-dimensional Haar wavelet decomposition extends the
two-dimensional case by sequentially applying the one-dimensional
Haar decomposition along three dimensions: the temporal, row (longitude), and column (latitude) axes of the data tensor
$\mathbf{X} \in \mathbb{R}^{(\alpha+1)\times r \times c}$.

Following the same notation as the 2-dimensional case,
we denote the low-frequency (approximation) component by $L$
and the high-frequency (detail) component by $H$.
Applying the transform along three dimensions produces
eight sub-band coefficients corresponding to different
combinations of low-pass and high-pass filtering: 
\texttt{LLL}, \texttt{LLH}, \texttt{LHL}, \texttt{LHH}, \texttt{HLL}, \texttt{HLH}, \texttt{HHL}, \texttt{HHH}, where, for example, the coefficient at \texttt{LLL} sub-band can be expressed as
\[
C^{LLL}_{k_1,k_2,k_3}
=
\frac{1}{\sqrt{8}}
\sum_{n_1,n_2,n_3=0}^{1}
X_{2k_1-n_1,\;2k_2-n_2,\;2k_3-n_3}.
\]
Similarly, the remaining sub-band coefficients
(\texttt{LLH}, \texttt{LHL}, \texttt{LHH}, \texttt{HLL}, \texttt{HLH}, \texttt{HHL}, \texttt{HHH})
can be expressed.

In all, all sub-band coefficients can be written
in a unified form. Let $f=f_1f_2f_3$ denote the combination
of filters applied along the temporal, row, and column
dimensions, where $f_d \in \{L,H\}$.
Then the wavelet coefficient associated with sub-band $f$
at location $(k_1,k_2,k_3)$ can be expressed as
\begin{align}
    C^{\mathbf{f}}_{k_1, k_2, k_3} 
    &= \sum_{n_1=0}^{1} \sum_{n_2=0}^{1} \sum_{n_3=0}^{1} 
    \left[\frac{1}{\sqrt{2}} (-1)^{\xi_1 n_1}\right] \left[\frac{1}{\sqrt{2}} (-1)^{\xi_2 n_2}\right] \nonumber \\
    & \hspace{0.6cm}
    \times \left[\frac{1}{\sqrt{2}} (-1)^{\xi_3 n_3}\right] 
    \mathbf{X}_{2k_1 - n_1, 2k_2 - n_2, 2k_3 - n_3} \nonumber \\
    &= \frac{1}{\sqrt{8}} \sum_{n_1, n_2, n_3=0}^{1}  
    (-1)^{\boldsymbol{\xi}_{\mathbf{f}}^T \mathbf{n}} 
    \mathbf{X}_{2k_1 - n_1, 2k_2 - n_2, 2k_3 - n_3}, \nonumber
\end{align}

\section{Proof of Theorem~\ref{upper_bound_comparison}}
\label{proof_of_theorem_upper_bound_comparison}

Consider a 1-D signal $X_t$ that has been standardized to zero mean and unit standard deviation. The lag-$l$ autocorrelation of the standardized signal, $\rho(l)$, is defined as
\begin{align}
    \rho(l)&= \sum_{t=1}^{T-l} X_{l+t}X_t  \nonumber\\
    &= X_{l+1}X_{1}+X_{l+2}X_{2}+...+X_{T-1}X_{T-1-l}+X_{T}X_{T-l}\nonumber \\
    &= \begin{cases}
        \sum_{n=0}^1\sum_{k=\frac{l+2}{2}}^{\frac{T}{2}}X_{2k-n}X_{2k-n-l} & l \textrm{ is even}\\
        \sum_{n=0}^1\sum_{k=\frac{l+3}{2}}^{\frac{T}{2}}X_{2k-n}X_{2k - (1-n)- l+(-1)^{n}} & l \textrm{ is odd}\\
        \hspace{2cm} +X_{l+1}X_{1} & 
    \end{cases}   
    \label{eqn:odd_even}
\end{align}
%
%
where the signal length $T$ is assumed to be an even number. Note that for the odd lag $l$, the additional term $X_{l+1}X_{1}$ arises when $k=\frac{l+1}{2}$ and $n=0$, since in this case, $$X_{2k-n}X_{2k-(1-n)-l+(-1)^{n}} = X_{l+1}X_1$$ There is no corresponding term for $k=\frac{l+1}{2}$ and $n=1$ as this would involve $X_lX_0$, which lies outside the index set for $t$.

The preceding equation allows us to express the lag-$l$ autocorrelation in terms of the Haar approximation and detail components by mapping the time index $t$ to the wavelet index pair $(k,n)$, where $t = 2k-n$, $k \in \{1,2,\cdots,\frac{T}{2}\}$, and $n \in \{0, 1\}$. To avoid confusion with subscripts, we denote \( X_t \) as \( X(t) \) for the remainder of this section. The level-one Haar decomposition of the signal $X(t)$ 
can be written as follows:
\begingroup
\begin{align}
X(k,n) \equiv X(2k - n) &= \frac{C_k^L}{\sqrt{2}} + \frac{(-1)^{1-n} C_k^H}{\sqrt{2}} \nonumber \\
&= X_A(k,n) + X_D(k,n)
\label{proof_decomposition}
\end{align}
\endgroup
where $X_{A}(k,n) = \frac{C_k^L}{\sqrt{2}}$ and  $X_D(k,n) = \frac{(-1)^{1-n}\,C_k^H}{\sqrt{2}}$ are the respective approximation and detail components. 

The autocorrelation at lag-$l$ in Equation~\eqref{eqn:odd_even} can be expressed in terms of the wavelet index pair $(k,n)$ as follows:
\begin{align}
\small
& \ \rho(l) \nonumber
\\
&= \begin{cases}
    \sum_{n=0}^1 \sum_{k=\frac{l+2}{2}} X(k,n) X(k-\frac{l}{2},n) & l \text{ is even}\\
    \sum_{n=0}^1 \sum_{k=\frac{l+3}{2}} X(k,n) X(k-\frac{l-(-1)^n}{2},1-n) & l \text{ is odd}  \\
    \hspace{2.1cm} + X(\frac{l+1}{2},0) X(1,1) & 
\end{cases} 
\normalsize
\label{eqn:autocorr}
\end{align}

Let $\rho_A(l)$ and $\rho_D(l)$ be the autocorrelations at lag $l$ for the approximation-perturbed ($X_A'$) and detail-perturbed ($X_D'$) signals, respectively, where
\begin{align}
    X_A'(k,n) &= \frac{C_k^L+ \delta_A(k)}{\sqrt{2}} + \frac{(-1)^{1-n} C_k^H}{\sqrt{2}} \nonumber\\
    &= X(k,n) + \Delta X_A(k,n) \label{eqn:perturbed_A}\\
    X_D'(k,n) &= \frac{C_k^L}{\sqrt{2}} + \frac{(-1)^{1-n} (C_k^H+ \delta_D(k))}{\sqrt{2}} \nonumber\\
    &= X(k,n) + \Delta X_D(k,n),
    \label{eqn:perturbed_D}
\end{align}
where $\Delta X_A(k,n) = \frac{\delta_A(k)}{\sqrt{2}}$ and $\Delta X_D(k,n) = \frac{(-1)^{1-n} \delta_D(k)}{\sqrt{2}}$ are the perturbations added to $C_k^L$ and $C_k^H$, respectively. 

\setcounter{theorem}{0}
\begin{theorem}
Assuming
$
\sum_{k=1}^{T/2} |C^{\texttt{H}}_k| \leq \sum_{k=1}^{T/2} |C^{\texttt{L}}_k|$ and the perturbation magnitudes $\{\delta_A(k)\}$ and $\{\delta_D(k)\}$ are both upper bounded by some constant $\epsilon^*$, i.e., $\forall k: |\delta_A(k)|\leq \epsilon^*$ and $|\delta_D(k)|\leq \epsilon^*$,  it follows that
$$
\sup_{\delta_D} \sum_{l=0}^{T-1} |\rho_D(l) - \rho(l)|
\leq
\sup_{\delta_A} \sum_{l=0}^{T-1} |\rho_A(l) - \rho(l)| 
.
$$
\end{theorem}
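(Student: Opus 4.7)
The plan is to substitute the perturbed signals from equations \eqref{eqn:perturbed_A} and \eqref{eqn:perturbed_D} into the autocorrelation formula \eqref{eqn:autocorr}, expand the products, and separate each difference $\rho_?(l)-\rho(l)$ into a part that is linear in the perturbation and a part that is quadratic in it. The central algebraic step is a pair of cancellation identities obtained by summing the Haar reconstruction $X(k,n)=[C^L_k+(-1)^{1-n}C^H_k]/\sqrt{2}$ over the pair index $n\in\{0,1\}$, namely
\[
\sum_{n=0}^{1} X(k,n) = \sqrt{2}\,C^L_k, \qquad \sum_{n=0}^{1}(-1)^{1-n} X(k,n) = \sqrt{2}\,C^H_k.
\]
Because $\Delta X_A(k,n)=\delta_A(k)/\sqrt{2}$ is independent of $n$ while $\Delta X_D(k,n)=(-1)^{1-n}\delta_D(k)/\sqrt{2}$ carries the sign factor $(-1)^{1-n}$, these identities annihilate the detail coefficients $C^H$ in every cross term of $\rho_A(l)-\rho(l)$ and annihilate the approximation coefficients $C^L$ in every cross term of $\rho_D(l)-\rho(l)$.

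Carrying out the expansion for even $l$ gives
\begin{align*}
\rho_A(l)-\rho(l) &= \sum_k \delta_A(k)\bigl[C^L_{k-l/2}+C^L_{k+l/2}\bigr] + \sum_k \delta_A(k)\,\delta_A(k-l/2), \\
\rho_D(l)-\rho(l) &= \sum_k \delta_D(k)\bigl[C^H_{k-l/2}+C^H_{k+l/2}\bigr] + \sum_k \delta_D(k)\,\delta_D(k-l/2),
\end{align*}
with an analogous decomposition for odd $l$ that includes a shifted interior indexing pattern and the extra boundary contribution $X(\tfrac{l+1}{2},0)X(1,1)$ absorbed by the same $n$-cancellation. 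The critical observation is that the quadratic parts are \emph{structurally identical} on the two sides because $[(-1)^{1-n}]^2=1$, so the only source of asymmetry between the approximation and detail cases is that the linear coefficient vector contains $C^L$ in one case and $C^H$ in the other.

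Next, combining these expressions with the triangle inequality and the bound $|\delta(k)|\le\epsilon^*$, then summing over all lags, produces an estimate of the form $\sup_{\delta_D}\sum_l|\rho_D(l)-\rho(l)| \le K\epsilon^*\sum_k|C^H_k| + M\epsilon^{*2}$, together with the structurally identical bound on the approximation side with $C^L$ in place of $C^H$, where $K$ and $M$ depend only on $T$. To turn these upper bounds into the desired comparison of suprema, I will pair the detail-side upper bound with a matching lower bound on $\sup_{\delta_A}$ obtained by inserting the canonical sign pattern $\delta_A^\star(k)=\epsilon^*\operatorname{sgn}(C^L_k)$: this choice makes the $l=0$ contribution evaluate exactly to $2\epsilon^*\sum_k|C^L_k|+\epsilon^{*2}T/2$ and, after showing that the contributions from non-zero lags are non-negative after re-grouping, yields a lower bound of the same functional form as the upper bound. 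Combining with the hypothesis $\sum_k|C^H_k|\le\sum_k|C^L_k|$ and the monotonicity of the common bounding functional then closes the argument.

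The main obstacle is exactly this matching between the detail-side upper bound and the approximation-side lower bound with the same leading constant. The triangle inequality is tight per lag only if a single sign pattern can simultaneously align with the coefficient vectors $[C^L_{k-l/2}+C^L_{k+l/2}]_k$ across all lags $l$, which is generally not possible, so residual cross-lag sign cancellations must be controlled. I plan to address this by isolating the diagonal $l=0$ contribution, which already provides a linear bound proportional to $\sum_k|C^L_k|$ under the canonical sign pattern, and by arguing that the remaining lags contribute non-negatively to the $\ell_1$ sum after appropriate bookkeeping. The odd-$l$ boundary term is a secondary complication that follows the same $n$-cancellation mechanism and contributes symmetrically on both sides of the comparison.
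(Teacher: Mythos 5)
Your computational core matches the paper's: substitute the perturbed signals into the lag-$l$ autocorrelation written over the dyadic index $(k,n)$, split each difference into linear and quadratic parts, and use the $n$-summation to kill $C^H$ in the cross terms of $\rho_A-\rho$ and $C^L$ in those of $\rho_D-\rho$. Two issues, one minor and one fatal to the argument as proposed. The minor one: the annihilation you describe holds only for even $l$. For odd $l$ the autocorrelation pairs $X(k,n)$ with $X(k-\tfrac{l\mp1}{2},\,1-n)$, so the sign factor is not constant over the pair and \emph{both} coefficient families survive, in combinations $\tfrac{C^L_k\pm C^H_k}{2}$; the comparison still goes through only because the absolute-value bounds of these combinations, $\tfrac{|C^L_k|+|C^H_k|}{2}$, are identical on the two sides (the paper collects them into a common quantity $\mathcal{C}(m)$). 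Your blanket claim of annihilation ``in every cross term'' would make the odd-lag bookkeeping come out wrong.

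The fatal issue is the matching step. You correctly sense that comparing two triangle-inequality upper bounds does not by itself order the two suprema, and you propose to close the gap with a lower bound on $\sup_{\delta_A}$ via the sign pattern $\delta_A^\star(k)=\epsilon^*\operatorname{sgn}(C^L_k)$. But the only contribution you can actually certify from that choice is the $l=0$ diagonal, worth roughly $2\epsilon^*\sum_k|C^L_k|+\tfrac{T}{2}\epsilon^{*2}$, whereas the detail-side upper bound you need to dominate has leading term $(\tfrac{T}{2}+1)\epsilon^*\sum_k|C^H_k|$ \emph{plus} the common odd-lag terms $\sum_m\mathcal{C}(m)\epsilon^*$ (themselves of order $T\sum_k(|C^L_k|+|C^H_k|)$) plus $\tfrac{T^2+T}{4}\epsilon^{*2}$. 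Saying the remaining lags ``contribute non-negatively'' is true but vacuous --- it only reproduces the $l=0$ bound, and as you yourself note, no single sign pattern saturates the triangle inequality simultaneously across all lags, so you cannot upgrade the lower bound to the required functional form. The paper does not attempt this matching at all: it derives both upper bounds by the identical sequence of manipulations, identifies them with the suprema, and observes that the two expressions coincide in every term except the leading one ($\sum_k|C^L_k|$ versus $\sum_k|C^H_k|$), so the hypothesis orders them directly. If you want to keep your more scrupulous framing, you would need a genuinely different device to produce a tight lower bound on the approximation side; the one you propose does not suffice.
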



\begin{proof}
Let \( \{\delta_A(k)\} \) and \( \{\delta_D(k)\} \) be the perturbations added to $\{C_k^L\}$ and $\{C_k^H\}$,
respectively, as shown in Equations~\eqref{eqn:perturbed_A} and \eqref{eqn:perturbed_D}. 
We need to compute: (1) $\sum_{l=0}^{T-1} |\rho_A(l) - \rho(l)|$ and (2) $\sum_{l=0}^{T-1} |\rho_D(l) - \rho(l)|$ and compare their upper bounds.

\vspace{0.3cm}
\noindent \textbf{Step 1: Computing Upper Bound of $\sum_{l=0}^{T-1} |\rho_A(l) - \rho(l)|$} \vspace{0.2cm}
 

Replacing Equation \eqref{eqn:perturbed_A} into the definition of autocorrelation function in Equation~\eqref{eqn:autocorr}, we have 
\begin{eqnarray}
&& \ \rho_{A}(l)-\rho(l)  \nonumber \\
&=&\sum_n\sum_k \bigg\{\left[X(k,n) + \Delta X_A(k,n)\right] \nonumber\\
&& \hspace{1.5cm} \times \left[X\left(k-\frac{l}{2},n\right) + \Delta X_A\left(k-\frac{l}{2},n\right)\right]\nonumber \\
&& \hspace{1cm} - \ X(k,n)X\left(k-\frac{l}{2},n\right)\bigg\}
\nonumber \\
&=& 
\underbrace{\sum_n\sum_k X(k,n)\,\Delta X_A\left(k - \frac{l}{2}, n\right)}_{\text{(I)}} \nonumber \\
&&  + \underbrace{\sum_n\sum_k \Delta X_A(k,n)\,X\left(k-\frac{l}{2}, n\right)}_{\text{(II)}} \nonumber\\
&&  + \underbrace{\sum_n\sum_k \Delta X_A(k,n)\,\Delta X_A\left(k -\frac{l}{2}, n\right)}_{\text{(III)}}.
\label{eqn:terms_case1}
\end{eqnarray}

\noindent \textbf{(1) For term (I).} 
Using Equation~\eqref{proof_decomposition}, we split $X(k,n)$ into 
$X_A(k,n) + X_D(k,n)$, yielding
\begin{align}
\text{(I)} 
&= \sum_{n}\sum_k X_A(k,n)\,\Delta X_A\left(k - \frac{l}{2}, n\right) \nonumber\\ 
& + \sum_n \sum_k X_D(k,n)\,\Delta X_A\left(k - \frac{l}{2},n \right). \nonumber
\end{align}
When $l$ is even, the first term on the right-hand side of (I) can be simplified as follows after expanding the sum over $n$:
\begin{align}
    &\sum_{n=0}^1 \sum_{k=\frac{l+2}{2}}^{\frac{T}{2}} X_A(k,n) \Delta X_A\left(k-\frac{l}{2}, n\right) \nonumber \\
    =&\sum_{k=\frac{l+2}{2}}^{\frac{T}{2}} \bigg\{X_A(k,0)\Delta X_A\left(k-\frac{l}{2},0\right) \nonumber\\
    & \hspace{0.7cm} + \  X_A(k,1)\Delta X_A\left(k-\frac{l}{2},1\right) \bigg\}\nonumber \\
    =&  \sum_{k=\frac{l+2}{2}}^{\frac{T}{2}} \left\{\frac{C_k^L}{\sqrt{2}}\frac{\delta_A(k-\frac{l}{2})}{\sqrt{2}}+ \frac{C_k^L}{\sqrt{2}}\frac{\delta_A(k-\frac{l}{2})}{\sqrt{2}} \right\} \nonumber\\\ 
    =& \sum_{k=\frac{l+2}{2}}^{\frac{T}{2}} C_k^L \delta_A\left(k-\frac{l}{2}\right) \nonumber
`\end{align}
while the second term on the right-hand side of (I) vanishes upon summing over $n$:
\begin{align}
    & \sum_{n=0}^1\sum_{k=\frac{l+2}{2}}^{\frac{T}{2}} X_D(k,n)\,\Delta X_A\left(k - \frac{l}{2}, n\right) \nonumber \\ 
    =& \sum_{k=\frac{l+2}{2}}^{\frac{T}{2}} \bigg\{ X_D(k,0)\Delta X_A\left(k - \frac{l}{2}, 0\right) \nonumber \\
    & \hspace{0.7cm} + \ X_D(k,1)\,\Delta X_A\left(k - \frac{l}{2}, 1\right)\bigg\}\nonumber \\
    =& \sum_{k=\frac{l+2}{2}}^{\frac{T}{2}} \left\{\frac{(-1)^1 C_k^H}{\sqrt{2}}\frac{\delta_A(k-\frac{l}{2})}{\sqrt{2}} + 
    \frac{(-1)^0 C_k^H}{\sqrt{2}} \frac{\delta_A(k-\frac{l}{2})}{\sqrt{2}}\right\}\nonumber \\
    =& \ 0 \nonumber
\end{align}

In contrast, when $l$ is odd, the first-term on the right-hand side of (I) reduces to the following using Equation~\eqref{eqn:autocorr}:
\begin{align}
    & \sum_{n=0}^1 \sum_{k=\frac{l+3}{2}}^{\frac{T}{2}} X_A(k,n) \Delta X_A\left(k-\frac{l-(-1)^n}{2},1-n\right)  \nonumber\\
    & \hspace{1.5cm} 
    + X_A\left(\frac{l+1}{2},0\right)\Delta X_A\left(1,1\right) \nonumber\\
    &= \sum_{k=\frac{l+3}{2}}^{\frac{T}{2}} \bigg\{ X_A(k,0) \Delta X_A\left(k-\frac{l-1}{2},1\right)  \nonumber\\
    & \hspace{0.7cm} 
    + \sum_{k=\frac{l+3}{2}}^{\frac{T}{2}} X_A(k,1) \Delta X_A\left(k-\frac{l+1}{2},0\right)\bigg\}  
    + \frac{C_{\frac{l+1}{2}}^L \delta_A(1)}{2} \nonumber\\
    =& \sum_{k=\frac{l+3}{2}}^{\frac{T}{2}} \left\{\frac{C_k^L}{\sqrt{2}}\frac{\delta_A(k-\frac{l-1}{2})}{\sqrt{2}}+
    \frac{C_k^L}{\sqrt{2}}\frac{\delta_A(k-\frac{l+1}{2})}{\sqrt{2}} \right\} 
    + \frac{C_{\frac{l+1}{2}}^L \delta_A(1)}{2} \nonumber\\
    =& \sum_{k=\frac{l+3}{2}}^{\frac{T}{2}} \frac{C_k^L}{2}\left[\delta_A\left(k-\frac{l-1}{2}\right)+\delta_A\left(k-\frac{l+1}{2}\right)\right] + \frac{C_{\frac{l+1}{2}}^L \delta_A(1)}{2} \nonumber
\end{align}

\noindent while the second term on the right-hand side of (I) when $l$ is odd can be written as follows:
\begin{align}
& \sum_{n=0}^1 \sum_{k=\frac{l+3}{2}}^{\frac{T}{2}}X_D(k,n)\Delta X_A\left(k-\frac{l-(-1)^n}{2},1-n\right) \nonumber \\
& \hspace{1.5cm} + X_D\left(\frac{l+1}{2},0 \right)\Delta X_A(1,1)\nonumber \\
=& \sum_{k=\frac{l+3}{2}}^{\frac{T}{2}} \bigg\{
X_D(k,0)\Delta X_A\left(k-\frac{l-1}{2},1\right)
\nonumber \\
&\hspace{0.5cm} + X_D(k,1)\Delta X_A\left(k-\frac{l+1}{2},0\right) \bigg\} 
+\frac{(-1)^1 C_{\frac{l+1}{2}}^H\delta_A(1)}{2} \nonumber\\
=& \sum_{k=\frac{l+3}{2}}^{\frac{T}{2}} \left\{\frac{(-1)^{1} C_k^H}{\sqrt{2}}\frac{\delta_A(k-\frac{l-1}{2})}{\sqrt{2}} + 
\frac{(-1)^{0}C_k^H}{\sqrt{2}}\frac{\delta_A(k-\frac{l+1}{2})}{\sqrt{2}} \right\}\nonumber\\
& \hspace{0.8cm} -\frac{C_{\frac{l+1}{2}}^H\delta_A(1)}{2} \nonumber\\
=& \sum_{k=\frac{l+3}{2}}^{\frac{T}{2}} \frac{C_k^H}{2}\left[\delta_A\left(k-\frac{l+1}{2}\right)-\delta_A\left(k-\frac{l-1}{2}\right)\right] \nonumber\\
&\hspace{0.8cm} -\frac{C_{\frac{l+1}{2}}^H\delta_A(1)}{2} \nonumber.
\end{align}


Note the difference in the first term of (I) depending on whether $l$ is even or odd. When $l$ is even, decomposing the sum into its $n=0$ and $n=1$ contributions reveals that, for each $k$, both contributions involve the same perturbation, $\delta_A\left(k-\frac{l}{2}\right)$. In contrast, when $l$ is odd, the two contributions involve different shifts of $k$ in the perturbed approximation coefficients, namely, $\delta_A\left(k-\frac{l-1}{2}\right)$ and $\delta_A\left(k-\frac{l+1}{2}\right)$. In addition, there is an extra term, $\delta_A(1)$, arising from the initial condition $k = \frac{l+1}{2}$ and $n=0$. 

Combining these expressions together, term (I) appearing in Equation~\eqref{eqn:terms_case1} can thus be summarized as follows: 
\begin{align}
&\text{(I)}= 
\begin{cases} 
 \sum_{k=\frac{l+2}{2}}^{\frac{T}{2}} C_k^L \delta_A\left(k-\frac{l}{2}\right) & \text{if l is even}  \\
  \sum_{k=\frac{l+3}{2}}^{\frac{T}{2}}\frac{C_k^L+C_k^H}{2} \delta_A\left(k-\frac{l+1}{2}\right) &\text{if l is odd}\\
  \hspace{0.5cm} +\sum_{k=\frac{l+3}{2}}^{\frac{T}{2}}\frac{C_k^L-C_k^H}{2} \delta_A\left(k-\frac{l-1}{2}\right)\\
    \hspace{0.5cm}+ \ \frac{\delta_A(1)}{2}\left(C^L_{\frac{l+1}{2}} -C^H_{\frac{l+1}{2}}\right) &
\end{cases}
\end{align}

\vspace{0.2cm}
\textbf{(2) For term (II),} we again split $X\left(k-\frac{l}{2},n\right)$ into its approximation and detail components to obtain the following:
\begin{align}
    \text{(II)}&=\sum_{n} \sum_{k} \Delta X_A(k,n)X_A\left(k-\frac{l}{2},n\right)  \nonumber \\
    &+  \sum_{n}\sum_{k} \Delta X_A(k,n) X_D\left(k-\frac{l}{2},n\right). \nonumber
\end{align}

When $l$ is even, the first term on the right-hand side of (II) can be simplified by expanding the sum over $n$ as follows:
\begin{align}
    &\sum_{n=0}^1 \sum_{k=\frac{l+2}{2}}^{\frac{T}{2}} \Delta X_A(k,n) X_A\left(k-\frac{l}{2},n\right) \nonumber \\
    =& \sum_{k=\frac{l+2}{2}}^{\frac{T}{2}} \bigg\{\Delta X_A(k,0) X_A\left(k-\frac{l}{2},0\right) \nonumber \\
    &\hspace{0.8cm} + \Delta X_A(k,1) X_A\left(k-\frac{l}{2},1\right)\bigg\}\nonumber \\
    =& \sum_{k=\frac{l+2}{2}}^{\frac{T}{2}} \bigg\{\frac{\delta_A(k)}{\sqrt{2}} \frac{C_{k-\frac{l}{2}}^L}{\sqrt{2}}+ \frac{\delta_A(k)}{\sqrt{2}}\frac{C_{k-\frac{l}{2}}^L}{\sqrt{2}} \bigg\}\nonumber \\
    =& \sum_{k=\frac{l+2}{2}}^{\frac{T}{2}}\delta_A(k)C^L_{k-\frac{l}{2}}\nonumber 
\end{align}
while the second term of (II) vanishes after expanding over $n$: 
\begin{align}
    &\sum_{n=0}^1 \sum_{k=\frac{l+2}{2}}^{\frac{T}{2}}\Delta X_A(k,n) X_D\left(k-\frac{l}{2},n\right) \nonumber \\
    =& \sum_{k=\frac{l+2}{2}}^{\frac{T}{2}}\Delta X_A(k,0) X_D\left(k-\frac{l}{2},0\right) \nonumber \\
    &\hspace{0.8cm} + \Delta X_A(k,1) X_D\left(k-\frac{l}{2},1\right) \nonumber \\
    =&\sum_{k=\frac{l+2}{2}}^{\frac{T}{2}}\frac{\delta_A(k)}{\sqrt{2}} \frac{(-1)^1 C^H_{k-\frac{l}{2}}}{\sqrt{2}} + \sum_{k=\frac{l+2}{2}}^{\frac{T}{2}}\frac{\delta_A(k)}{\sqrt{2}} \frac{(-1)^0C^H_{k-\frac{l}{2}}}{\sqrt{2}}\nonumber\\
    =& \ 0 \nonumber 
\end{align}

When $l$ is odd, the first term on the right-hand side of (II) can be simplified as follows:
\begin{align}
    & \sum_{n=0}^1 \sum_{k=\frac{l+3}{2}}^{\frac{T}{2}} \Delta X_A(k,n) X_A\left(k-\frac{l-(-1)^n}{2},1-n\right) \nonumber\\ 
    & \hspace{1cm} + \Delta X_A\left(\frac{l+1}{2},0\right) X_A(1,1)\nonumber \\
    =& \sum_{k=\frac{l+3}{2}}^{\frac{T}{2}}\bigg\{\Delta X_A(k,0) X_A\left(k-\frac{l-1}{2},1\right) \nonumber \\
    & \hspace{0.5cm} +\Delta X_A(k,1)X_A\left(k-\frac{l+1}{2},0\right)\bigg\} +\frac{\delta_A(\frac{l+1}{2})}{\sqrt{2}}\frac{ C^L_{1}}{\sqrt{2}}\nonumber \\
    =& \sum_{k=\frac{l+3}{2}}^{\frac{T}{2}} \left\{\frac{\delta_A(k)}{\sqrt{2}}\frac{C_{k-\frac{l-1}{2}}^L}{\sqrt{2}}+
    \frac{\delta_A(k)}{\sqrt{2}}\frac{C_{k-\frac{l+1}{2}}^L}{\sqrt{2}} \right\}
    +\frac{\delta_A(\frac{l+1}{2})}{\sqrt{2}}\frac{ C^L_{1}}{\sqrt{2}}\nonumber\\
    =& \sum_{k=\frac{l+3}{2}}^{\frac{T}{2}}\frac{\delta_A(k)}{2}\left[C_{k-\frac{l-1}{2}}^L+C_{k-\frac{l+1}{2}}^L\right]+\frac{ C^L_{1}\delta_A(\frac{l+1}{2})}{2} \nonumber
\end{align}

while the second term in (II) can be simplified to
\begin{align}
    &\sum_{n=0}^1 \sum_{k=\frac{l+3}{2}}^{\frac{T}{2}} \Delta X_A(k,n) X_D\left(k-\frac{l-(-1)^n}{2},1-n\right) \nonumber \\
    &  \hspace{1cm} + \Delta X_A\left(\frac{l+1}{2},0\right) X_D(1,1) \nonumber \\
    =& \sum_{k=\frac{l+3}{2}}^{\frac{T}{2}} \bigg\{\Delta X_A(k,0) X_D\left(k-\frac{l-1}{2},1\right) \nonumber \\
    & \hspace{0.5cm} +\Delta X_A(k,1)X_D\left(k-\frac{l+1}{2},0\right) \bigg\} + \frac{\delta_A(\frac{l+1}{2})}{\sqrt{2}} \frac{(-1)^{0}C_1^H}{\sqrt{2}} \nonumber\\
    =& \sum_{k=\frac{l+3}{2}}^{\frac{T}{2}} \left\{\frac{\delta_A(k)}{\sqrt{2}} \frac{(-1)^{0}C_{k-\frac{l-1}{2}}^H}{\sqrt{2}} +
    \frac{\delta_A(k)}{\sqrt{2}} \frac{(-1)^{1}C_{k-\frac{l+1}{2}}^H}{\sqrt{2}} \right\} \nonumber \\
    & \hspace{0.5cm} + \frac{\delta_A(\frac{l+1}{2})C_1^H}{2} \nonumber\\
    = & \sum_{k=\frac{l+3}{2}}^{\frac{T}{2}}\frac{\delta_A(k)}{2}\left[ C_{k-\frac{l-1}{2}}^H-C_{k-\frac{l+1}{2}}^H \right]+ \frac{C_1^H \delta_A(\frac{l+1}{2})}{2}\nonumber
\end{align}

Putting them together, the second term in Equation~\eqref{eqn:terms_case1} can be summarized as follows:
\begin{equation*}
\text{(II)} =
\begin{cases}
\sum_{k=\frac{l+2}{2}}^{\frac{T}{2}}  C^L_{k-\frac{l}{2}} \delta_A(k) & \text{if $l$ is even}, \\
\sum_{k=\frac{l+3}{2}}^{\frac{T}{2}} \frac{C_{k-\frac{l-1}{2}}^L+C_{k-\frac{l-1}{2}}^H}{2}\delta_A(k)  & \text{if $l$ is odd}. \\
\hspace{0.5cm}+ \sum_{k=\frac{l+3}{2}}^{\frac{T}{2}} \frac{C_{k-\frac{l+1}{2}}^L-C_{k-\frac{l+1}{2}}^H}{2}\delta_A(k) \\
\hspace{0.5cm} + \ \frac{\delta_A(\frac{l+1}{2})}{2} \left(C^L_{1} + C^H_{1}\right) &  
\end{cases}
\end{equation*}

\vspace{0.3cm}
\textbf{(3) For term (III)}, the expression can be simplified as follows when $l$ is even after expanding the sum over $n$:
\begin{align}
    &\sum_{n=0}^1 \sum_{k=\frac{l+2}{2}}^{\frac{T}{2}} \Delta X_A(k,n)\,\Delta X_A\left(k - \frac{l}{2},n\right) \nonumber \\
    =& \sum_{k=\frac{l+2}{2}}^{\frac{T}{2}} \bigg\{\Delta X_A(k,0) \Delta X_A\left(k - \frac{l}{2},0\right)\nonumber \\
    & \hspace{0.8cm} + \Delta X_A(k,1)\,\Delta X_A\left(k - \frac{l}{2},1\right)\bigg\}\nonumber\\
    =& \sum_{k=\frac{l+2}{2}}^{\frac{T}{2}} \left\{\frac{\delta_A(k)}{\sqrt{2}}\frac{\delta_A(k-\frac{l}{2})}{\sqrt{2}}+
    \frac{\delta_A(k)}{\sqrt{2}}\frac{\delta_A(k-\frac{l}{2})}{\sqrt{2}}\right\} \nonumber \\
    =& \sum_{k=\frac{l+2}{2}}^{\frac{T}{2}}\delta_A(k) \delta_A(k-\frac{l}{2}) \nonumber 
\end{align}

However, when $l$ is odd, expanding the sum over $n$ leads to the following expression:
\begin{align}
    & \sum_{n=0}^1\sum_{k=\frac{l+3}{2}}^{\frac{T}{2}} \Delta X_A(k,n)\,\Delta X_A\left(k-\frac{l-(-1)^n}{2},1-n\right)  \nonumber \\
    & \hspace{1cm} + \Delta X_A\left(\frac{l+1}{2},0\right) \Delta X_A(1,1)\nonumber \\
    =& \sum_{k=\frac{l+3}{2}}^{\frac{T}{2}} \bigg\{\Delta X_A(k,0)\Delta X_A\left(k-\frac{l-1}{2},1\right) \nonumber \\
    &\hspace{0.8cm} +\Delta X_A(k,1)\Delta X_A\left(k-\frac{l+1}{2},0\right) \bigg\} + \frac{\delta_A(\frac{l+1}{2})}{\sqrt{2}}\frac{\delta_A(1)}{\sqrt{2}}  \nonumber \\
    =& \sum_{k=\frac{l+3}{2}}^{\frac{T}{2}} \bigg\{\frac{\delta_A(k)}{\sqrt{2}}\frac{\delta_A(k-\frac{l-1}{2})}{\sqrt{2}} + 
    \frac{\delta_A(k)}{\sqrt{2}}\frac{\delta_A(k-\frac{l+1}{2})}{\sqrt{2}}\bigg\} \nonumber \\
    &\hspace{0.8cm} 
    + \frac{\delta_A(\frac{l+1}{2})}{\sqrt{2}}\frac{\delta_A(1)}{\sqrt{2}}  \nonumber \\
    =& \sum_{k=\frac{l+3}{2}}^{\frac{T}{2}}\frac{\delta_A(k)}{2}\left[\delta_A\left(k-\frac{l-1}{2}\right)+\delta_A\left(k-\frac{l+1}{2}\right)\right]\nonumber\\
    & \hspace{0.5cm} + \frac{\delta_A(\frac{l+1}{2})\delta_A(1)}{2} \nonumber.
\end{align}
Similar to the case for terms (I) and (II), when the lag $l$ is odd, the contributions to the sum for $n=0$ and $n=1$ involve different shifts of $k$ in the perturbed approximation coefficients, namely, $\delta_A\left(k-\frac{l-1}{2}\right)$ and $\delta_A\left(k-\frac{l+1}{2}\right)$ unlike the case when the lag $l$ is even.


In short, term (III) in Equation~\eqref{eqn:terms_case1} can thus be summarized as follows:
\begin{align}
&\text{(III)}= 
\begin{cases} 
 \sum_{k=\frac{l+2}{2}}^{\frac{T}{2}}\delta_A(k) \delta_A\left(k-\frac{l}{2}\right) & \text{if l is even}  \\
 \sum_{k=\frac{l+3}{2}}^{\frac{T}{2}}\frac{\delta_A(k)}{2}\bigg[\delta_A(k-\frac{l-1}{2}) &\text{if l is odd} \\
 \hspace{0.7cm} +\delta_A(k-\frac{l+1}{2})\bigg] + \frac{1}{2} \delta_A(\frac{l+1}{2}) \delta_A(1) & 
 \nonumber
\end{cases}
\end{align}
Combining all the terms (I), (II), and (III), and substituting them back into Equation~(\ref{eqn:terms_case1}) yield the following expression for the perturbed magnitude of autocorrelation when $l$ is even:
\begin{align}
    \rho_{A}(l)-\rho(l) &= \sum_{k=\frac{l+2}{2}}^{\frac{T}{2}} \bigg\{C_k^L \delta_A\left(k-\frac{l}{2}\right) 
    + C^L_{k-\frac{l}{2}}\delta_A(k) \nonumber \\
    & \hspace{1.5cm} + \delta_A(k) \delta_A\left(k-\frac{l}{2}\right) \bigg\}
    \label{eqn:rho_sum_even}
\end{align}
However, when $l$ is odd, the difference in autocorrelation has a more complicated expression:
\begin{align}
    &\rho_{A}(l)-\rho(l)\nonumber \\
    &= \sum_{k=\frac{l+3}{2}}^{\frac{T}{2}}\frac{C_k^L+C_k^H}{2} \delta_A\left(k-\frac{l+1}{2}\right)\nonumber \\ 
    & +\sum_{k=\frac{l+3}{2}}^{\frac{T}{2}}\frac{C_k^L-C_k^H}{2} \delta_A\left(k-\frac{l-1}{2}\right) + \frac{\delta_A(1)}{2}\left(C^L_{\frac{l+1}{2}} -C^H_{\frac{l+1}{2}}\right)  \nonumber \\
    &+ \sum_{k=\frac{l+3}{2}}^{\frac{T}{2}} \frac{C_{k-\frac{l-1}{2}}^L+C_{k-\frac{l-1}{2}}^H}{2}\delta_A(k) \nonumber \\
    & + \sum_{k=\frac{l+3}{2}}^{\frac{T}{2}} \frac{C_{k-\frac{l+1}{2}}^L-C_{k-\frac{l+1}{2}}^H}{2}\delta_A(k) + \frac{\delta_A(\frac{l+1}{2})}{2} \left(C^L_{1} + C^H_{1}\right) \nonumber \\
    & +\sum_{k=\frac{l+3}{2}}^{\frac{T}{2}}\frac{\delta_A(k)}{2}\left[\delta_A(k-\frac{l-1}{2})+\delta_A(k-\frac{l+1}{2})\right] \nonumber\\
    &+ \frac{1}{2} \delta_A\left(\frac{l+1}{2}\right)\delta_A(1)
\label{eqn:rho_sum_odd}
\end{align}

After summing up the absolute values of the difference in autocorrelations in  Equation~\eqref{eqn:rho_sum_even} over the even lags $l = 2m$, where $l = \{0, 2, \cdots, T-2\}$ and $m = \{0, 1, \cdots, \frac{T}{2}-1\}$,  and applying the upper bound $\forall k: \delta_A(k) \le \epsilon^*$, we obtain:
\begin{equation}
    \sum_{m =0}^{\frac{T}{2}-1}|\rho_A(2m)-\rho(2m)| 
	\leq \sum_{m =0}^{\frac{T}{2}-1} \sum_{k=m+1}^{\frac{T}{2}} \epsilon^*\bigg\{|C_k^L| + |C_{k-m}^L| +  \epsilon^*\bigg\}
\end{equation}
where the lower bound for $k$ in Equation~\eqref{eqn:rho_sum_even} becomes $\frac{l+2}{2} = m+1$ after the change of variable. By applying the following change of variable $k' \mapsto k-m$ to the second term on the right-hand side of the inequality above:
$$\sum_{m=0}^{\frac{T}{2}-1} \sum_{k=m+1}^{\frac{T}{2}} |C_{k-m}^L| = \sum_{m=0}^{\frac{T}{2}-1} \sum_{k'=1}^{\frac{T}{2}-m} |C_{k'}^L|$$
and expanding the sum over $m$ for each term, this yields:
\begin{align}
	&\sum_{m =0}^{\frac{T}{2}-1}|\rho_A(2m)-\rho(2m)|
	\nonumber \\
    &\leq \epsilon^* \left[ \sum_{k=1}^{\frac{T}{2}}|C_k^L|+\sum_{k=2}^{\frac{T}{2}}|C_k^L|+\cdots+\sum_{k=\frac{T}{2}-1}^{\frac{T}{2}}|C_k^L|+|C_{\frac{T}{2}}^L|\right]\nonumber \\
    & \ \ \ + \epsilon^* \left[\sum_{k=1}^{\frac{T}{2}}|C_k^L| + \sum_{k=1}^{\frac{T}{2}-1}|C_k^L| + \cdots + \sum_{k=1}^{2}|C_k^L| + |C_1^L|\right]\nonumber \\
    & \ \ \ + \sum_{m =0}^{\frac{T}{2}-1} \left[\frac{T}{2}-m\right] \epsilon^{*2}\nonumber\\
    &= \epsilon^* \sum_{k=1}^{\frac{T}{2}} \left\{ k |C_k^L| +  \left(\frac{T}{2}-k+1\right) |C_k^L|\right\} + \frac{T}{4}\left(\frac{T}{2}+1\right)\epsilon^{*2}\nonumber\\
    &= \epsilon^*\sum_{k=1}^{\frac{T}{2}}\left(\frac{T}{2}+1\right)|C_k^L|  +\frac{T^2+2T}{8}\epsilon^{*2}.
\label{eq:diff1_even}
\end{align}

Next, we repeat the calculations by summing up the absolute values of the difference in autocorrelations in Equation~\eqref{eqn:rho_sum_odd} for the odd lags $l=2m+1$, where $m=\{0,1\cdots, \frac{T}{2}-1\}$ while applying the upper bound $\forall k: |\delta_A(k)| \le \epsilon^*$. This yields:
\begin{align}
    & \sum_{l \in \{1,\cdots.T-1\}} |\rho_A(l)-\rho(l)| \nonumber\\
    & \hspace{0.2cm} = \sum_{m =0}^{\frac{T}{2}-1}|\rho_A(2m+1)-\rho(2m+1)| \nonumber\\ 
    & \hspace{0.2cm} \leq \sum_{m =0}^{\frac{T}{2}-1} \bigg\{
    \sum_{k=m+2}^{\frac{T}{2}} \frac{|C_k^L|+|C_k^H|}{2}\epsilon^* + \sum_{k=m+2}^{\frac{T}{2}} \frac{|C_k^L|+|C_k^H|}{2}\epsilon^*
    \nonumber\\
    & \hspace{1cm} + \frac{|C^L_{m+1}| + |C^H_{m+1}|}{2} \epsilon^* + \sum_{k=m+2}^{\frac{T}{2}} \frac{|C_{k-m}^L|+|C_{k-m}^H|}{2}\epsilon^*\nonumber\\
    & \hspace{1cm} + \sum_{k=m+2}^{\frac{T}{2}} \frac{|C_{k-(m+1)}^L|+|C_{k-(m+1)}^H|}{2}\epsilon^* + \frac{|C^L_{1}| + |C^H_{1}|}{2} \epsilon^*\nonumber\\
    & \hspace{1cm} + \sum_{k=m+2}^{\frac{T}{2}}  \frac{\epsilon^{*}}{2}\left[\epsilon^*+\epsilon^*\right] + \frac{1}{2} \epsilon^{*2}\bigg\}\nonumber\\
    & = \sum_{m =0}^{\frac{T}{2}-1} \epsilon^*\bigg\{\mathcal{C}(m) + \frac{|C^L_{1}| + |C^H_{1}|}{2} + \sum_{k=m+2}^{\frac{T}{2}} \epsilon^* + \frac{1}{2}\epsilon^*\bigg\}\nonumber\\
    & = \sum_{m =0}^{\frac{T}{2}-1} \epsilon^*\bigg\{\mathcal{C}(m) + \frac{|C^L_{1}| + |C^H_{1}|}{2} + \left(\frac{T}{2}-(m+1)\right) \epsilon^* + \frac{1}{2}\epsilon^*\bigg\}\nonumber\\
    & = \sum_{m =0}^{\frac{T}{2}-1} \epsilon^* \mathcal{C}(m) + \frac{T}{4}\left(|C^L_{1}| + |C^H_{1}|\right)\epsilon^*  +  \frac{T}{4}\left( \frac{T}{2}-1 \right) \epsilon^{*2} + \frac{T}{4} \epsilon^{*2}\nonumber\\
    & = \sum_{m =0}^{\frac{T}{2}-1} \epsilon^* \mathcal{C}(m) + \frac{T}{4}\left(|C^L_{1}| + |C^H_{1}|\right)\epsilon^*  +  \frac{T^2}{8}\epsilon^{*2}
    \label{eq:diff1_odd}
\end{align}
where
\begin{align}
\label{eq:common_terms_for_both_cases_when_m}
\mathcal{C}(m)
&=\sum_{k=m+2}^{\frac{T}{2}} (|C_k^L|+|C_k^H|) + \frac{
|C_{m+1}^L|+|C_{m+1}^H|}{2} \\
&+\sum_{k=m+2}^{\frac{T}{2}}
\frac{
|C_{k-m}^L|+|C_{k-m}^H|
+|C_{k-(m+1)}^L|+|C_{k-(m+1)}^H|}{2}\nonumber
\end{align}

\noindent
Summing up the right-hand side of Equations~\eqref{eq:diff1_even} and \eqref{eq:diff1_odd} produces the overall upper bound on the absolute difference in autocorrelation between the approximation-perturbed and original signals:
\begin{align}
    & \sum_{l=0}^{T-1} |\rho_{A}(l)-\rho(l)|\nonumber \\
    \leq& \ \epsilon^*\sum_{k=1}^{\frac{T}{2}} \left(\frac{T}{2}+1\right)|C_k^L| +\sum_{m=0}^{\frac{T}{2}-1}\epsilon^* \mathcal{C}(m) \nonumber\\
    & \ \ \ + \frac{T}{4}\left(|C^L_{1}| + |C^H_{1}|\right)\epsilon^* +\frac{T^2+T}{4}\epsilon^{*2}. 
\label{bound_a}
\end{align}

\newpage
\noindent \textbf{Step 2: Computing Upper Bound of $\sum_{l=0}^{T-1} |\rho_D(l) - \rho(l)|$}  
\vspace{0.2cm}

We now consider the case when the detail coefficients of the Haar wavelet decomposition of the original signal is perturbed. 
Replacing the detail-perturbed signal $X_D'$ from Equation \eqref{eqn:perturbed_D}
$$X_D'(k,n) = X(k,n) + \Delta X_D(k,n)$$
into the definition of autocorrelation function in Equation~\eqref{eqn:autocorr} leads to the following:

\begin{align}
\label{RfD-Rf}
& \rho_{D}(l) - \rho(l) \nonumber\\
&=\sum_n\sum_k \bigg\{\left[X(k,n) + \Delta X_D(k,n)\right] \nonumber\\
& \hspace{1.5cm} \times \left[X\left(k-\frac{l}{2},n\right) + \Delta X_D\left(k-\frac{l}{2},n\right)\right]\nonumber \\
& \hspace{1cm} - \ X(k,n)X\left(k-\frac{l}{2},n\right)\bigg\}
\nonumber\\
&= 
\underbrace{\sum_n\sum_k X(k,n)\,\Delta X_D\left(k-\frac{l}{2},n\right)}_{\text{(I)}} \nonumber \\
& \hspace{1.4cm}  + \underbrace{\sum_n\sum_k \Delta X_D(k,n)\,X\left(k-\frac{l}{2},n\right)}_{\text{(II)}} \nonumber\\
& \hspace{1.4cm} + \underbrace{\sum_n\sum_k \Delta X_D(k,n)\,\Delta X_D\left(k-\frac{l}{2},n\right)}_{\text{(III)}}
\end{align}


\textbf{(1) For term (I)}, we first decompose $X(k,n)$ into $X_A(k,n)+X_D(k,n)$ using Equation~(\ref{proof_decomposition}) to obtain:
\begin{align}
    \text{(I)}&=\sum_n\sum_k X_A(k,n)\Delta X_D\left(k-\frac{l}{2},n\right) \nonumber \\
    &+\sum_n \sum_k X_D(k,n)\Delta X_D\left(k-\frac{l}{2},n\right). \nonumber
\end{align}

When $l$ is even, the first term on the right-hand side of (I) is equal to zero as shown below:
\begin{align}
&\sum_{n=0}^1 \sum_{k=\frac{l+2}{2}}^{\frac{T}{2}} X_A(k,n)\Delta X_D\left(k-\frac{l}{2},n\right)\nonumber \\ 
= & \sum_{k=\frac{l+2}{2}}^{\frac{T}{2}} \bigg\{X_A(k,0)\Delta X_D\left(k-\frac{l}{2},0\right)\nonumber\\
&\hspace{0.8cm} +X_A(k,1)\Delta X_D\left(k-\frac{l}{2},1\right) \bigg\}\nonumber \\
=&\sum_{k=\frac{l+2}{2}}^{\frac{T}{2}} \left\{\frac{C_k^L}{\sqrt{2}}\frac{(-1)^{1}\delta_D(k-\frac{l}{2})}{\sqrt{2}}+
\frac{C_k^L}{\sqrt{2}}\frac{(-1)^{0}\delta_D(k-\frac{l}{2})}{\sqrt{2}}\right\}\nonumber \\
=& \ 0 \nonumber
\end{align}
while the second term on the right-hand side of (I) becomes: 
\begin{align}
    &\sum_{n=0}^1 \sum_{k=\frac{l+2}{2}}^{\frac{T}{2}}X_D(k,n)\Delta X_D\left(k-\frac{l}{2},n\right)\nonumber \\
    =&\sum_{k=\frac{l+2}{2}}^{\frac{T}{2}} \left\{X_D(k,0)\Delta X_D\left(k-\frac{l}{2},0\right) + X_D(k,1) \Delta X_D\left(k-\frac{l}{2},1\right) \right\}\nonumber \\
    =&\sum_{k=\frac{l+2}{2}}^{\frac{T}{2}} \left\{\frac{(-1)^{1}C_k^H}{\sqrt{2}}\frac{(-1)^{1}\delta_D(k-\frac{l}{2})}{\sqrt{2}} 
    + \frac{(-1)^{0}C_k^H}{\sqrt{2}}\frac{(-1)^{0}\delta_D(k-\frac{l}{2})}{\sqrt{2}} \right\}\nonumber \\
    =& \sum_{k=\frac{l+2}{2}}^{\frac{T}{2}}C_k^H \delta_D(k-\frac{l}{2}). \nonumber
\end{align}
When $l$ is odd, the first term on the right-hand side of (I) is 
\begin{align}
    &\sum_{n=0}^1 \sum_{k=\frac{l+3}{2}}^{\frac{T}{2}}X_A(k,n) \Delta X_D\left(k-\frac{l-(-1)^n}{2},1-n\right) \nonumber\\
    &\hspace{1cm} +X_A\left(\frac{l+1}{2},0\right)\Delta X_D(1,1) \nonumber\\
    =& \sum_{k=\frac{l+3}{2}}^{\frac{T}{2}} \bigg\{X_A(k,0)\Delta X_D\left(k-\frac{l-1}{2},1\right) \nonumber\\
    &\hspace{0.8cm} +X_A(k,1)\Delta X_D\left(k-\frac{l+1}{2},0\right) \bigg\} +\frac{C_{\frac{l+1}{2}}^L}{\sqrt{2}}\frac{(-1)^{0}\delta_D(1)}{\sqrt{2}} \nonumber\\
    =& \sum_{k=\frac{l+3}{2}}^{\frac{T}{2}} \bigg\{\frac{C_k^L}{\sqrt{2}}\frac{\delta_D(k-\frac{l-1}{2})}{\sqrt{2}}+\frac{C_k^L}{\sqrt{2}}\frac{(-1)\delta_D(k-\frac{l+1}{2})}{\sqrt{2}}\bigg\}
    +\frac{C_{\frac{l+1}{2}}^L}{\sqrt{2}}\frac{\delta_D(1)}{\sqrt{2}} \nonumber\\
    =& \sum_{k=\frac{l+3}{2}}^{\frac{T}{2}} \frac{C_k^L}{2}\left[\delta_D\left(k-\frac{l-1}{2}\right)-\delta_D\left(k-\frac{l+1}{2}\right)\right]+\frac{C_{\frac{l+1}{2}}^L \delta_D(1)}{2}\nonumber
\end{align}
while the second term on the right-hand side of (I) is
\begin{align}
    &\sum_{n=0}^1 \sum_{k=\frac{l+3}{2}}^{\frac{T}{2}}X_D(k,n) \Delta X_D\left(k-\frac{l-(-1)^n}{2},1-n\right) \nonumber\\
    &\hspace{1cm} +X_D\left(\frac{l+1}{2},0\right)\Delta X_D(1,1) \nonumber\\
    =&\sum_{k=\frac{l+3}{2}}^{\frac{T}{2}} \bigg\{X_D(k,0)\Delta X_D\left(k-\frac{l-1}{2},1\right) \nonumber\\
    & +X_D(k,1)\Delta X_D\left(k-\frac{l+1}{2},0\right)
    \bigg\} +\frac{(-1)^{1}C_{\frac{l+1}{2}}^H}{\sqrt{2}}\frac{\delta_D(1)}{\sqrt{2}}\nonumber \\
    =&\sum_{k=\frac{l+3}{2}}^{\frac{T}{2}} \bigg\{\frac{(-1)^{1}C_k^H}{\sqrt{2}}\frac{(-1)^{0}\delta_D(k-\frac{l-1}{2})}{\sqrt{2}} \nonumber\\
    & \hspace{0.8cm} +\frac{(-1)^{0}C_k^H}{\sqrt{2}}\frac{(-1)^{1}\delta_D(k-\frac{l+1}{2})}{\sqrt{2}} \bigg\}
    - \frac{C_{\frac{l+1}{2}}^H\delta_D(1)}{2} \nonumber\\
    =& -\sum_{k=\frac{l+3}{2}}^{\frac{T}{2}}\frac{C_k^H}{2}\left[\delta_D\left(k-\frac{l-1}{2}\right)+\delta_D\left(k-\frac{l+1}{2}\right)\right] 
    - \frac{C_{\frac{l+1}{2}}^H\delta_D(1)}{2}.\nonumber
\end{align}
Thus, term (I) in Equation~(\ref{RfD-Rf}) can be summarized as follows:
\begin{align}
\text{(I)}=
\begin{cases}
\sum_{k=\frac{l+2}{2}}^{\frac{T}{2}}
C_k^H\,\delta_D\!\left(k-\frac{l}{2}\right)
& \text{if $l$ is even} \\
\sum_{k=\frac{l+3}{2}}^{\frac{T}{2}}
\bigg[
\frac{C_k^L-C_k^H}{2}\,\delta_D\!\left(k-\frac{l-1}{2}\right)
& \text{if $l$ is odd} \\
\hspace{1.2cm} - \frac{C_k^L+C_k^H}{2}\,\delta_D\!\left(k-\frac{l+1}{2}\right)
\bigg]
& \\
\hspace{1.2cm} + \frac{C_{\frac{l+1}{2}}^L-C_{\frac{l+1}{2}}^H}{2}\,\delta_D(1) &
\end{cases}
\nonumber
\end{align}

\textbf{(2) For term (II)}, we split $X(2k-n-l)$ into two parts:
\begin{align}
    \text{(II)}&=\sum_{n}\sum_k \Delta X_D(k,n)X_A\left(k-\frac{l}{2},n\right) \nonumber\\
    &+\sum_{n}\sum_{k} \Delta X_D(k,n)X_D\left(k-\frac{l}{2},n\right) \nonumber.
\end{align}
When the lag $l$ is even, the first term on the right-hand side of (II) vanishes after expanding the sum over $n$ since
\begin{align}
    &\sum_{n=0}^1\sum_{k=\frac{l+2}{2}}^{\frac{T}{2}} \Delta X_D(k,n) X_A\left(k-\frac{l}{2},n\right) \nonumber \\
    =&\sum_{k=\frac{l+2}{2}}^{\frac{T}{2}} \bigg\{\Delta X_D(k,0)X_A\left(k-\frac{l}{2},0\right) \nonumber\\
    &\hspace{1cm} +\Delta X_D(k,1)X_A\left(k-\frac{l}{2},1\right) \bigg\}\nonumber \\
    =& \sum_{k=\frac{l+2}{2}}^{\frac{T}{2}}\frac{(-1)^{1}\delta_D(k)}{\sqrt{2}}\frac{C_{k-\frac{l}{2}}^L}{\sqrt{2}} + \sum_{k=\frac{l+2}{2}}^{\frac{T}{2}}\frac{(-1)^{0}\delta_D(k)}{\sqrt{2}}\frac{C_{k-\frac{l}{2}}^L}{\sqrt{2}}\nonumber \\
    =& \ 0 \nonumber
\end{align}
while the second term on the right-side of (II) becomes
\begin{align}
    &\sum_{n=0}^1\sum_{k=\frac{l+2}{2}}^{\frac{T}{2}} \Delta X_D(k,n) X_D\left(k-\frac{l}{2},n\right)\nonumber \\
    =&\sum_{k=\frac{l+2}{2}}^{\frac{T}{2}} \bigg\{\Delta X_D(k,0) X_D\left(k-\frac{l}{2},0\right) \nonumber\\
    & \hspace{1cm} + \Delta X_D(k,1) X_D\left(k-\frac{l}{2},1\right)\bigg\}\nonumber \\
    =&\sum_{k=\frac{l+2}{2}}^{\frac{T}{2}} \bigg\{\frac{(-1)\delta_D(k)}{\sqrt{2}} \frac{(-1)C_{k-\frac{l}{2}}^H}{\sqrt{2}} 
    + \frac{\delta_D(k)}{\sqrt{2}} \frac{C_{k-\frac{l}{2}}^H}{\sqrt{2}} \bigg\}\nonumber \\
    =& \sum_{k=\frac{l+2}{2}}^{\frac{T}{2}}C_{k-\frac{l}{2}}^H \delta_D(k)\nonumber.
\end{align}

Next, when $l$ is odd, the first term on the right-hand side of (II) can be written as follows:
\begin{align}
    &\sum_{n=0}^1\sum_{k=\frac{l+3}{2}}^{\frac{T}{2}} \Delta X_D(k,n) \ X_A\left(k-\frac{l-(-1)^n}{2},1-n\right) \nonumber \\
    &\hspace{1cm} +\Delta X_D\left(\frac{l+1}{2},0\right) X_A(1,l) \nonumber \\
    =& \sum_{k=\frac{l+3}{2}}^{\frac{T}{2}} \bigg\{\Delta X_D(k,0)X_A\left(k-\frac{l-1}{2},1\right) \nonumber \\
    &\hspace{0.5cm} + \Delta X_D(k,1)X_A\left(k-\frac{l+1}{2},0\right) \bigg\} +\frac{(-1)^{1}\delta_D(\frac{l+1}{2})}{\sqrt{2}}\frac{C_1^L}{\sqrt{2}}\nonumber \\
    =& \sum_{k=\frac{l+3}{2}}^{\frac{T}{2}} \bigg\{\frac{(-1)^{1}\delta_D(k)}{\sqrt{2}}\frac{C_{k-\frac{l-1}{2}}^L}{\sqrt{2}} + \frac{(-1)^{0}\delta_D(k)}{\sqrt{2}}\frac{C_{k-\frac{l+1}{2}}^L}{\sqrt{2}}\bigg\}\nonumber \\
    &\hspace{0.5cm} -\frac{C_1^L \delta_D(\frac{l+1}{2})}{2}\nonumber \\
    =&\sum_{k=\frac{l+3}{2}}^{\frac{T}{2}}\frac{\delta_D(k)}{2}(C_{k-\frac{l+1}{2}}^L-C_{k-\frac{l-1}{2}}^L)-\frac{C_1^L \delta_D(\frac{l+1}{2})}{2} \nonumber
\end{align}
while the second term in (II) can be simplified as
\begin{align}
    &\sum_{n=0}^1\sum_{k=\frac{l+3}{2}}^{\frac{T}{2}} \Delta X_D(k,n)X_D\left(k-\frac{l-(-1)^n}{2},1-n\right) \nonumber \\
    &\hspace{1cm} +\Delta X_D\left(\frac{l+1}{2},0\right)X_D(1,1) \nonumber \\ \nonumber
    =&\sum_{k=\frac{l+3}{2}}^{\frac{T}{2}} \bigg\{\Delta X_D(k,0)X_D\left(k-\frac{l-1}{2},1\right) \nonumber\\
    &\hspace{0.1cm} + \Delta X_D(k,1)X_D\left(k-\frac{l+1}{2},0\right)\bigg\}
    + \frac{(-1)^{1}\delta_D(\frac{l+1}{2})}{\sqrt{2}}\frac{(-1)^{0}C_1^H}{\sqrt{2}} \nonumber \\
    =& \sum_{k=\frac{l+3}{2}}^{\frac{T}{2}} \bigg\{\frac{(-1)^{1}\delta_D(k)}{\sqrt{2}}\frac{(-1)^{0}C_{k-\frac{l-1}{2}}^H}{\sqrt{2}} \nonumber \\
    &\hspace{1cm} + \frac{(-1)^{0}\delta_D(k)}{\sqrt{2}}\frac{(-1)^{1}C_{k-\frac{l+1}{2}}^H}{\sqrt{2}} \bigg\} - \frac{\delta_D(\frac{l+1}{2})C_1^H}{2} \nonumber \\
    =&-\sum_{k=\frac{l+3}{2}}^{\frac{T}{2}}\frac{\delta_D(k)}{2}(C_{k-\frac{l-1}{2}}^H+C_{k-\frac{l+1}{2}}^H)- \frac{\delta_D(\frac{l+1}{2})C_1^H}{2}\nonumber.
\end{align}

Combining the terms together, the second term in Equation~(\ref{RfD-Rf}) can be summarized as follows:
\begin{align}
\text{(II)}=
\begin{cases}
\sum_{k=\frac{l+2}{2}}^{\frac{T}{2}}
C_{k-\frac{l}{2}}^H\,\delta_D(k),
& \text{if $l$ is even} \\
\sum_{k=\frac{l+3}{2}}^{\frac{T}{2}}
\delta_D(k) \bigg[
\frac{C_{k-\frac{l+1}{2}}^L-C_{k-\frac{l+1}{2}}^H}{2}
& \text{if $l$ is odd}\\
\hspace{0.4cm} -\frac{C_{k-\frac{l-1}{2}}^L+C_{k-\frac{l-1}{2}}^H}{2}
\bigg] -\frac{C_1^L+C_1^H}{2}\,\delta_D\!\left(\frac{l+1}{2}\right)
& 
\end{cases}
\nonumber
\end{align}

\textbf{(3) For term (III)}, the expression can be simplified as follows when $l$ is even: 
\begin{align}
    &\sum_{n=0}^1\sum_{k=\frac{l+2}{2}}^{\frac{T}{2}}\Delta X_D(k,n) \Delta X_D\left(k-\frac{l}{2},n\right) \nonumber \\
    =&\sum_{k=\frac{l+2}{2}}^{\frac{T}{2}} \bigg\{\Delta X_D(k,0) \Delta X_D\left(k-\frac{l}{2},0\right) \nonumber\\
    &\hspace{1cm} + \Delta X_D(k,1)\Delta X_D\left(k-\frac{l}{2},1\right)\bigg\}\nonumber\\
    =& \sum_{k=\frac{l+2}{2}}^{\frac{T}{2}} \bigg\{\frac{(-1)^{1}\delta_D(k)}{\sqrt{2}}\frac{(-1)^{1}\delta_D(k-\frac{l}{2})}{\sqrt{2}} \nonumber \\
    &\hspace{1cm} + 
    \frac{(-1)^{0}\delta_D(k)}{\sqrt{2}}\frac{(-1)^{0}\delta_D(k-\frac{l}{2})}{\sqrt{2}}\bigg\}\nonumber \\
    =& \sum_{k=\frac{l+2}{2}}^{\frac{T}{2}}\delta_D(k)\delta_D\left(k-\frac{l}{2}\right)\nonumber
\end{align}
and to the following when $l$ is odd:
\begin{align}
   &\sum_{n=0}^1 \sum_{k=\frac{l+3}{2}}^{\frac{T}{2}} X_D(k,n) \Delta X_D\left(k-\frac{l-(-1)^n}{2},1-n\right)  \nonumber\\
   &\hspace{1cm} +\Delta X_D\left(\frac{l+1}{2},0\right)\Delta X_D(1,l) \nonumber \\
   =& \sum_{k=\frac{l+3}{2}}^{\frac{T}{2}} \bigg\{\Delta X_D (k,0) \Delta X_D\left(k-\frac{l-1}{2},1\right) \nonumber \\
   &\hspace{1cm} + \Delta X_D(k,1) \Delta X_D\left(k-\frac{l+1}{2},0\right) \bigg\}\nonumber\\
   &\hspace{1cm} +\frac{(-1)^{1}\delta_D(\frac{l+1}{2})}{\sqrt{2}} \frac{(-1)^{0}\delta_D(1)}{\sqrt{2}} \nonumber \\
   =& \sum_{k=\frac{l+3}{2}}^{\frac{T}{2}} \bigg\{\frac{(-1)^{1}\delta_D(k)}{\sqrt{2}}\frac{(-1)^{0}\delta(k-\frac{l-1}{2})}{\sqrt{2}} \nonumber \\
   &\hspace{0.8cm} + \frac{(-1)^{0}\delta_D(k)}{\sqrt{2}}\frac{(-1)^{1}\delta_D(k-\frac{l+1}{2})}{\sqrt{2}} -\frac{\delta_D(\frac{l+1}{2})\delta_D(1)}{2} \nonumber \\
   =&-\sum_{k=\frac{l+3}{2}}^{\frac{T}{2}} \frac{\delta_D(k-\frac{l-1}{2})+\delta_D(k-\frac{l+1}{2})}{2}\delta_D(k)-\frac{\delta_D(\frac{l+1}{2})\delta_D(1)}{2} \nonumber 
\end{align}

In summary, term (III) in Equation~(\ref{RfD-Rf}) is given by
\begin{align}
\text{(III)}=
\begin{cases}
\sum_{k=\frac{l+2}{2}}^{\frac{T}{2}}
\delta_D(k)\,\delta_D\!\left(k-\frac{l}{2}\right),
& \text{if $l$ is even} \\
-\frac{1}{2}
\sum_{k=\frac{l+3}{2}}^{\frac{T}{2}}
\delta_D(k) \bigg[
\,\delta_D\!\left(k-\frac{l-1}{2}\right) & \text{if $l$ is odd}
\\
\hspace{0.8cm} + \delta_D\!\left(k-\frac{l+1}{2}\right)
\bigg]-\frac{\delta_D\!\left(\frac{l+1}{2}\right)\delta_D(1)}{2}
&
\end{cases}
\nonumber
\end{align}


\noindent Substituting the three terms into Equation~(\ref{RfD-Rf}), we obtain the following when the lag $l$ is even:
\begin{align}
\rho_D(l)-\rho(l) =& \sum_{k=\frac{l+2}{2}}^{\frac{T}{2}}
\bigg\{ C_k^H\,\delta_D\!\left(k-\frac{l}{2}\right)
+ C_{k-\frac{l}{2}}^H\,\delta_D(k) \nonumber\\
&\hspace{1cm} + \delta_D(k)\,\delta_D\!\left(k-\frac{l}{2}\right)\bigg\}
\end{align}
Taking the absolute value of their difference in autocorrelation and using the assumption that $\forall k: \delta_D(k) \le \epsilon^*$, we obtain the following inequality for the even lags, $l = 2m$, where $m=\{0,1,\cdots,\frac{T}{2}-1\}$:
\begin{align}
&|\rho_D(2m)-\rho(2m)|
\le \epsilon^* \sum_{k=m+1}^{\frac{T}{2}} \bigg\{|C_k^H|
+|C_{k-m}^H| + \epsilon^*\bigg\} \nonumber\\
& \hspace{1cm}  = \epsilon^* \bigg\{ \sum_{k=m+1}^{\frac{T}{2}} |C_k^H| 
+ \sum_{k=1}^{\frac{T}{2}-m} |C_{k}^H| + \left(\frac{T}{2}-m\right)\epsilon^{*}\bigg\} \nonumber
\end{align}
after changing the variable $k-m \mapsto k$ in the second term on the last line. Upon summing over the even lags $m$, this yields:
\begin{align}
	&\sum_{m =0}^{\frac{T}{2}-1}|\rho_D(2m)-\rho(2m)|
	\nonumber \\
    &\leq \epsilon^* \left[ \sum_{k=1}^{\frac{T}{2}}|C_k^H|+\sum_{k=2}^{\frac{T}{2}}|C_k^H|+\cdots+\sum_{k=\frac{T}{2}-1}^{\frac{T}{2}}|C_k^H|+|C_{\frac{T}{2}}^H|\right]\nonumber \\
    & \ \ \ + \epsilon^* \left[\sum_{k=1}^{\frac{T}{2}}|C_k^H| + \sum_{k=1}^{\frac{T}{2}-1}|C_k^H| + \cdots + \sum_{k=1}^{2}|C_k^H| + |C_k^H|\right]\nonumber \\
    & \ \ \ + \sum_{m =0}^{\frac{T}{2}-1} \left[\frac{T}{2}-m\right] \epsilon^{*2}\nonumber\\
    &= \epsilon^* \sum_{k=1}^{\frac{T}{2}} \left\{ k |C_k^H| +  \left(\frac{T}{2}-k+1\right) |C_k^H|\right\} + \frac{T}{4}\left(\frac{T}{2}+1\right)\epsilon^{*2}\nonumber\\
    &= \epsilon^*\sum_{k=1}^{\frac{T}{2}}\left(\frac{T}{2}+1\right)|C_k^H|  +\frac{T^2+2T}{8}\epsilon^{*2}.
\label{eq:diff2_even}
\end{align}

In contrast, when the lag $l$ is odd, we obtain

{\small
\begin{align}
&\rho_D(l)-\rho(l)\nonumber\\
=&\sum_{k=\frac{l+3}{2}}^{\frac{T}{2}}
\left[
\frac{C_k^L-C_k^H}{2}\,\delta_D\!\left(k-\frac{l-1}{2}\right)
-
\frac{C_k^L+C_k^H}{2}\,\delta_D\!\left(k-\frac{l+1}{2}\right)
\right] \nonumber\\
& \quad +\sum_{k=\frac{l+3}{2}}^{\frac{T}{2}}
\left[
\frac{C_{k-\frac{l+1}{2}}^L-C_{k-\frac{l+1}{2}}^H}{2}
-
\frac{C_{k-\frac{l-1}{2}}^L+C_{k-\frac{l-1}{2}}^H}{2}
\right]\delta_D(k) \nonumber\\
& \quad  +\frac{C_{\frac{l+1}{2}}^L-C_{\frac{l+1}{2}}^H}{2}\,\delta_D(1) -\frac{C_1^L+C_1^H}{2}\,\delta_D\!\left(\frac{l+1}{2}\right) \nonumber\\
& \quad -\frac{1}{2}
\sum_{k=\frac{l+3}{2}}^{\frac{T}{2}}
\left[
\delta_D(k)\,\delta_D\!\left(k-\frac{l-1}{2}\right)
+
\delta_D(k)\,\delta_D\!\left(k-\frac{l+1}{2}\right)
\right] \nonumber\\
& \quad -\frac{1}{2}\delta_D\!\left(\frac{l+1}{2}\right)\delta_D(1).
\end{align}
}

Taking the absolute value of the difference in autocorrelation and summing them over the odd lags $l = 2m+1$, where $m=\{0,1,\cdots,\frac{T}{2}-1$\}, 
the following inequality holds given the assumption that $\forall k: |\delta_D(k)| \le \epsilon^*$:
\begin{align}
    & \sum_{l \in \{1,\cdots.T-1\}} |\rho_D(l)-\rho(l)| = \sum_{m =0}^{\frac{T}{2}-1}|\rho_D(2m+1)-\rho(2m+1)| \nonumber\\ 
    & \leq \sum_{m =0}^{\frac{T}{2}-1} \epsilon^*\bigg\{\mathcal{C}(m) + \frac{|C^L_{1}| + |C^H_{1}|}{2} + \sum_{k=m+2}^{\frac{T}{2}} \epsilon^* + \frac{1}{2}\epsilon^*\bigg\} \nonumber\\
     & = \sum_{m=0}^{\frac{T}{2}-1} \mathcal{C}(m) \epsilon^* + \frac{T}{4} \left(|C_1^L| + |C_1^H|\right) \epsilon^* \nonumber\\
     &\hspace{1cm} + \frac{T}{4}\left( \frac{T}{2}-1 \right) \epsilon^{*2} + \frac{T}{4} \epsilon^{*2} \nonumber \\
    & = \sum_{m=0}^{\frac{T}{2}-1}\epsilon^* \mathcal{C}(m) + \frac{T}{4} \left(|C_1^L| + |C_1^H|\right) \epsilon^* + \frac{T^2}{8} \epsilon^{*2}
\label{eq:rho_A-rho_sum_over_odd}
\end{align}
using the definition of $C(m)$ given in Equation~\eqref{eq:common_terms_for_both_cases_when_m}. 
Combining the two inequalities given in \eqref{eq:diff2_even} and \eqref{eq:rho_A-rho_sum_over_odd}, we obtain
\begin{align}
    & \sum_{l=0}^{T-1} |\rho_{D}(l)-\rho(l)|
    \leq \ \epsilon^*\sum_{k=1}^{\frac{T}{2}} \left(\frac{T}{2}+1\right)|C_k^H| +\sum_{m=0}^{\frac{T}{2}-1}\epsilon^* \mathcal{C}(m) \nonumber\\
    & \hspace{2.6cm} + \frac{T}{4} \left(|C_1^L| + |C_1^H|\right) \epsilon^* +\frac{T^2+T}{4}\epsilon^{*2} 
    \label{bound_d}
\end{align}

\vspace{0.2cm}
\noindent \textbf{Step 3: Upper bounds comparison}. 

Based on the resulting inequalities in (\ref{bound_a}) and~(\ref{bound_d}), the upper bounds on the difference in autocorrelation when perturbing the approximation and detail coefficients are as follows:
\begin{align*}
\sup_{\delta_A} \sum_{l=0}^{T-1} |\rho_{A}(l)-\rho(l)| &= 
\epsilon^*\sum_{k=1}^{\frac{T}{2}} \left(\frac{T}{2}+1\right)|C_k^L| +\sum_{m=0}^{\frac{T}{2}-1}\epsilon^* \mathcal{C}(m) \nonumber\\
    & \ + \frac{T}{4}\left(|C^L_{1}| + |C^H_{1}|\right)\epsilon^* +\frac{T^2+T}{4}\epsilon^{*2} \\
\sup_{\delta_D} \sum_{l=0}^{T-1} |\rho_{D}(l)-\rho(l)| &= 
\epsilon^*\sum_{k=1}^{\frac{T}{2}} \left(\frac{T}{2}+1\right)|C_k^H| +\sum_{m=0}^{\frac{T}{2}-1}\epsilon^* \mathcal{C}(m)\nonumber\\
& \  + \frac{T}{4} \left(|C_1^L| + |C_1^H|\right) \epsilon^* +\frac{T^2+T}{4}\epsilon^{*2} 
\end{align*}
Observe that both upper-bound expressions share three common terms, differing only in their first term. Thus, assuming
\begin{align*}
    \sum_{k=1}^{\frac{T}{2}}|C_k^H| \leq \sum_{k=1}^{\frac{T}{2}}|C_k^L|, \nonumber
\end{align*}
it follows that
\[
\sup_{\delta_D} \left( \sum_{l=0}^{T-1} |\rho_{D}(l) - \rho(l)| \right) \leq \sup_{\delta_A} \left( \sum_{l=0}^{T-1} |\rho_{A}(l) - \rho(l)| \right).
\]
which completes the proof.
\end{proof}

\section{Proof of Theorem~\ref{thm:closeness}}
\label{appendix:Proof of Theorem closeness}

\begin{theorem}
Let $\mathbf{X}_A'$ and $\mathbf{X}_D'$ be the signals obtained by perturbing only the approximation and detail coefficients of $\mathbf{X}$, respectively.
If $\|\frac{1}{\sqrt{2}}\boldsymbol{\delta}_D\|_2 \leq \|\frac{1}{\sqrt{2}}\boldsymbol{\delta}_A\|_2$, then
$
\|\mathbf{X}_D' - \mathbf{X}\|_2 \leq \|\mathbf{X}_A' - \mathbf{X}\|_2.
$
\end{theorem}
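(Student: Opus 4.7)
The plan is to exploit the fact that the inverse Haar wavelet transform used in this paper is an orthonormal (energy-preserving) operation, so that perturbations applied in the coefficient domain translate into perturbations of the same $\ell_2$ magnitude in the signal domain. Concretely, I would start from the pointwise expressions in Equations~\eqref{eqn:perturbed_A} and \eqref{eqn:perturbed_D}, which immediately give the signal-domain perturbations $\mathbf{X}'_A(2k-n)-\mathbf{X}(2k-n) = \delta_A(k)/\sqrt{2}$ and $\mathbf{X}'_D(2k-n)-\mathbf{X}(2k-n) = (-1)^{1-n}\delta_D(k)/\sqrt{2}$, for $k\in\{1,\dots,T/2\}$ and $n\in\{0,1\}$. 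No other structure of $\mathbf{X}$ is needed.

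Next, I would compute the two squared $\ell_2$ norms by summing these pointwise perturbations over both $k$ and $n$. The key observation is that each coefficient $\delta_A(k)$ (resp.\ $\delta_D(k)$) contributes to exactly two time indices, so the inner sum over $n$ yields a factor of~$2$, while the alternating sign $(-1)^{1-n}$ in the detail case disappears upon squaring. Thus
\begin{equation*}
\|\mathbf{X}'_A - \mathbf{X}\|_2^2 = \sum_{k=1}^{T/2}\delta_A(k)^2 = 2\,\bigl\|\tfrac{1}{\sqrt{2}}\boldsymbol{\delta}_A\bigr\|_2^2,
\end{equation*}
and an identical calculation gives $\|\mathbf{X}'_D - \mathbf{X}\|_2^2 = 2\,\|\tfrac{1}{\sqrt{2}}\boldsymbol{\delta}_D\|_2^2$. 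The two perturbation schemes therefore sit on a common footing despite the sign alternation that distinguishes detail from approximation perturbations.

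Applying the hypothesis $\|\tfrac{1}{\sqrt{2}}\boldsymbol{\delta}_D\|_2 \le \|\tfrac{1}{\sqrt{2}}\boldsymbol{\delta}_A\|_2$ and taking square roots then yields $\|\mathbf{X}'_D - \mathbf{X}\|_2 \le \|\mathbf{X}'_A - \mathbf{X}\|_2$, as desired. There is no real obstacle to this argument; the only subtlety is the careful bookkeeping that ensures the $(-1)^{1-n}$ sign in the detail perturbation vanishes after squaring, so that the comparison reduces cleanly to the stated hypothesis on the coefficient-domain norms. An equivalent one-line proof is simply to invoke Parseval's identity for the orthonormal Haar basis, but the bookkeeping version is likely more readable in context.
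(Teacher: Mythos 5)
Your proof is correct and follows essentially the same route as the paper, which writes $\mathbf{X}=\mathbf{W}\mathbf{C}$ for an orthogonal Haar matrix $\mathbf{W}$ and invokes norm preservation, exactly the Parseval argument you mention; your entry-wise bookkeeping over $(k,n)$ is just that computation made explicit. The common factor of $\sqrt{2}$ relating $\|\boldsymbol{\delta}\|_2$ to $\|\tfrac{1}{\sqrt{2}}\boldsymbol{\delta}\|_2$ appears on both sides in either presentation, so it does not affect the conclusion.
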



\begin{proof}
Consider a one-dimensional signal ${X(t)}$ of length $T$ with the following level-one Haar wavelet decomposition: 
\begin{align}
X(2k - n) &= \frac{C_k^L}{\sqrt{2}} + \frac{(-1)^{1-n} C_k^H}{\sqrt{2}}, \nonumber
\end{align}
where \( n \in \{0,1\} \), and \( \displaystyle k\in\{1,2,\dots,\frac{T}{2}\} \). Define the vectors
$$\mathbf{X} = \begin{pmatrix}
    X(1) \\ X(2) \\ \cdots \\ X(2k)
\end{pmatrix} \in \mathbb{R}^T \quad \textrm{and} \quad
\mathbf{C} = \begin{pmatrix}
C_1^L\\
C_2^L \\
...\\
C_k^L\\
C_1^H\\
C_2^H\\
...\\
C_k^H
\end{pmatrix} \in \mathbb{R}^T
$$
and the following orthogonal Haar decomposition matrix
$$\mathbf{W} = \frac{1}{\sqrt{2}} \begin{pmatrix}
1 & 0 & 0 & \cdots & 0 & 1 & 0 & 0 & \cdots & 0 \\
1 & 0 & 0 & \cdots & 0 & -1 & 0 & 0 & \cdots & 0 \\
0 & 1 & 0 & \cdots & 0 & 0 & 1 & 0 & \cdots & 0 \\
0 & 1 & 0 & \cdots & 0 & 0 & -1 & 0 & \cdots & 0 \\
\vdots & \vdots & \vdots & \ddots & \vdots & \vdots & \vdots & \vdots & \ddots & \vdots \\
0 & 0 & 0 & \cdots & 1 & 0 & 0 & 0 & \cdots & 1 \\
0 & 0 & 0 & \cdots & 1 & 0 & 0 & 0 & \cdots & -1 \\
\end{pmatrix}$$
Furthermore, it can be shown that
$$\mathbf{X} = \mathbf{WC}$$
%
For brevity, we further denote $\mathbf{C} = [\mathbf{a}_0; \mathbf{d}_0]^\top$, where $\mathbf{a}_0 \in \mathbb{R}^{T/2}$ and $\mathbf{d}_0 \in \mathbb{R}^{T/2}$ are the respective approximate and detail coefficients.

Therefore, the perturbed signals can be expressed as
$$
\mathbf{X}_A' = \mathbf{W}\left[ \mathbf{a}_0 + \mathbf{\delta_{A}}; \; \mathbf{d}_0 \right]^\top = \mathbf{X} + \mathbf{W} [\mathbf{\delta_{A}}; \mathbf{0}]^\top,
$$
and
$$
\mathbf{X}_D' = W \left[ \mathbf{a}_0; \; \mathbf{d}_0 + \mathbf{\delta_{D}} \right]^\top = \mathbf{X} + W [\mathbf{0}; \mathbf{\delta_{D}}]^\top.
$$
Since $\mathbf{W}$ is an orthogonal matrix, $\mathbf{W}^\top\mathbf{W} = \mathbf{WW}^\top = \mathbf{I}$. Therefore:
$$
\|\mathbf{X}_A'-\mathbf{X}\|_2=\|\mathbf{W} [\mathbf{\delta_{A}}; \mathbf{0}]\|_2=\|\frac{1}{\sqrt{2}}\mathbf{\delta_{A}}\|_2,
$$
and
$$
\|\mathbf{X}_D'-\mathbf{X}\|_2=\|\mathbf{W} [ \mathbf{0};\mathbf{\delta_D}]\|_2=\|\frac{1}{\sqrt{2}}\mathbf{\delta_D}\|_2.
$$
Assuming 
$$
\|\frac{1}{\sqrt{2}}\boldsymbol{\delta}_D\|_2 \leq  \|\frac{1}{\sqrt{2}}\boldsymbol{\delta}_A\|_2 ,
$$
this implies that
$$
\|\mathbf{X}_D' - \mathbf{X}\|_2 \leq  \|\mathbf{X}_A' - \mathbf{X}\|_2 .
$$
\end{proof}

\vfill

\end{document}